\begin{document}

%via Generalized Approximate Message Passing

\title{Fast Low-Rank Bayesian Matrix Completion with Hierarchical Gaussian
Prior Models}

\author{Linxiao Yang, Jun Fang, Huiping Duan, Hongbin Li,~\IEEEmembership{Senior
Member,~IEEE}, and Bing Zeng,~\IEEEmembership{Fellow,~IEEE}
\thanks{Linxiao Yang and Jun Fang are with the National Key Laboratory
of Science and Technology on Communications, University of
Electronic Science and Technology of China, Chengdu 611731, China,
Email: JunFang@uestc.edu.cn}
\thanks{Huiping Duan and Bing Zeng are with the School of Electronic Engineering,
University of Electronic Science and Technology of China, Chengdu
611731, China, Emails: huipingduan@uestc.edu.cn;
eezeng@uestc.edu.cn}
\thanks{Hongbin Li is
with the Department of Electrical and Computer Engineering,
Stevens Institute of Technology, Hoboken, NJ 07030, USA, E-mail:
Hongbin.Li@stevens.edu}
\thanks{This work was supported in part by the National Science
Foundation of China under Grant 61522104.}}

\maketitle

\begin{abstract}
The problem of low-rank matrix completion is considered in this
paper. To exploit the underlying low-rank structure of the data
matrix, we propose a hierarchical Gaussian prior model, where
columns of the low-rank matrix are assumed to follow a Gaussian
distribution with zero mean and a common precision matrix, and a
Wishart distribution is specified as a hyperprior over the
precision matrix. We show that such a hierarchical Gaussian prior
has the potential to encourage a low-rank solution. Based on the
proposed hierarchical prior model, we develop a variational
Bayesian matrix completion method which embeds the generalized
approximate massage passing (GAMP) technique to circumvent
cumbersome matrix inverse operations. Simulation results show that
our proposed method demonstrates superiority over some
state-of-the-art matrix completion methods.
\end{abstract}

%in terms of both estimation accuracy and computational complexity

\begin{keywords}
Matrix completion, low-rank Bayesian learning, generalized
approximate massage passing.
\end{keywords}

%to replace the rank function in (\ref{prob})

%represents a vector of observations, and $f(\cdot):
%\mathbb{R}^{M\times N}\rightarrow\mathbb{R}^{P}$ denotes an
%operation which obtains $\boldsymbol{y}$ by randomly sampling a
%subset of entries of $\boldsymbol{X}$.

%which are more computationally efficient in finding the low-rank
%solution

\section{Introduction}
The problem of recovering a partially observed matrix, which is
referred to as matrix completion, arises in a variety of
applications, including recommender systems
\cite{KaratzoglouAmatriain10,XiongChen10,AdomaviciusTuzhilin2011},
genotype prediction \cite{JiangMa2016,ChiZhou2013}, image
classification \cite{CabralTorre2015,LuoLiu2015}, network traffic
prediction \cite{YeChen2017}, and image imputation
\cite{HeSun2014}. Low-rank matrix completion, which is empowered
by the fact that many real-world data lie in an intrinsically low
dimensional subspace, has attracted much attention over the past
few years. Mathematically, a canonical form of the low-rank matrix
completion problem can be presented as
\begin{align}
\min_{\boldsymbol{X}} \quad&\text{rank}(\boldsymbol{X})\nonumber\\
\text{s.t.}\quad
&\boldsymbol{Y}=\boldsymbol{\Omega}\ast\boldsymbol{X} \label{prob}
\end{align}
where $\boldsymbol{X}\in\mathbb{R}^{M\times N}$ is an unknown
low-rank matrix, $\boldsymbol{\Omega}\in\{0,1\}^{M\times N}$ is a
binary matrix that indicates which entries of $\boldsymbol{X}$ are
observed, $\ast$ denotes the Hadamard product, and
$\boldsymbol{Y}\in\mathbb{R}^{M\times N}$ is the observed matrix.
It has been shown that the low-rank matrix $\boldsymbol{X}$ can be
exactly recovered from (\ref{prob}) under some mild conditions
\cite{CandesRecht2009}. Nevertheless, minimizing the rank of a
matrix is an NP-hard problem and no known polynomial-time
algorithms exist. To overcome this difficulty, alternative
low-rank promoting functionals were proposed. Among them, the most
popular alternative is the nuclear norm which is defined as the
sum of the singular values of a matrix. Replacing the rank
function with the nuclear norm yields the following convex
optimization problem
\begin{align}
\min_{\boldsymbol{X}} \quad&\|\boldsymbol{X}\|_{*}\nonumber\\
\text{s.t.}\quad
&\boldsymbol{Y}=\boldsymbol{\Omega}\ast\boldsymbol{X} \label{opt}
\end{align}
It was proved that the nuclear norm is the tightest convex
envelope of the matrix rank, and the theoretical recovery
guarantee for (\ref{opt}) under both noiseless and noisy cases was
provided in
\cite{CandesRecht2009,RechtFazel2010,FazelCandes2008,CandesPlan2011}.
To solve (\ref{opt}), a number of computationally efficient
methods were developed. A well-known method is the singular value
thresholding method which was proposed in \cite{CaiCandes2010}.
Another efficient method was proposed in \cite{LinChen2010}, in
which an augmented Lagrange multiplier technique was employed.
Apart from convex relaxation, non-convex surrogate functions, such
as the log-determinant function, were also introduced to replace
the rank function
\cite{MohanFazel2010,Wipf2014,KangKang2015,YangXie2016}.
Non-convex methods usually claim better recovery performance,
since non-convex surrogate functions behaves more like the rank
function than the nuclear norm. It is noted that for both convex
methods and non-convex methods, one need to meticulously select
some regularization parameters to properly control the tradeoff
between the matrix rank and the data fitting error when noise is
involved. However, due to the lack of the knowledge of the noise
variance and the rank, it is usually difficult to determine
appropriate regularization parameters.

%convert (\ref{opt}) into an uncontrained optimization and update
%unknown variables and the Lagrange multiplier in an alternating
%fashion

%and an improper choice may deteriorate the recovery performance.

%which shows the sub-gradient of nuclear norm is the singular value
%shrinkage of the unknown matrix and derive their method using a
%gradient/sub-gradient decent technique.

%Thus a number of alternative methods have been proposed. One
%extensively studied idea is convex relaxation, where matrix
%nuclear norm, defined as the summation of singular values of a
%matrix, is proposed as a surrogate function of matrix
%rank\cite{CandesRecht2009}, given by

%low-rankness promoting

Another important class of low-rank matrix completion methods are
Bayesian methods
\cite{BabacanLuessi2012,ZhouWang2010,ParkerSchniter2014a,ParkerSchniter2014b,XinWang2016},
which model the problem in a Bayesian framework and have the
ability to achieve automatic balance between the low-rankness and
the fitting error. Specifically, in \cite{BabacanLuessi2012}, the
low-rank matrix is expressed as a product of two factor matrices,
i.e. $\boldsymbol{X}=\boldsymbol{A}\boldsymbol{B}^T$, and the
matrix completion problem is translated to searching for these two
factor matrices $\boldsymbol{A}$ and $\boldsymbol{B}$. To
encourage a low-rank solution, sparsity-promoting priors
\cite{ZhangRao11} are placed on the columns of two factor
matrices, which aims to promote structured-sparse factor matrices
with only a few non-zero columns, and in turn leads to a low-rank
matrix $\boldsymbol{X}$. Nevertheless, this Bayesian method
updates the factor matrices in a row-by-row fashion and needs to
perform a number of matrix inverse operations at each iteration.
To address this issue, a bilinear generalized approximate message
passing (GAMP) method was developed to learn the two factor
matrices $\boldsymbol{A}$ and $\boldsymbol{B}$
\cite{ParkerSchniter2014a,ParkerSchniter2014b}, without involving
any matrix inverse operations. This method, however, cannot
automatically determine the matrix rank and needs to try out all
possible values of the rank. Recently, a new Bayesian prior model
was proposed in \cite{XinWang2016}, in which columns of the
low-rank matrix $\boldsymbol{X}$ follow a zero mean Gaussian
distribution with an unknown deterministic covariance matrix that
can be estimated via Type II maximum likelihood. It was shown that
maximizing the marginal likelihood function yields a low-rank
covariance matrix, which implies that the prior model has the
ability to promote a low-rank solution. A major drawback of this
method is that it requires to perform an inverse of an $MN\times
MN$ matrix at each iteration, and thus has a cubic complexity in
terms of the problem size. This high computational cost prohibits
its application to many practical problems.

%As a consequence, the estimated matrix $\boldsymbol{X}$ has a
%low-rank structure as well.

In this paper, we develop a new Bayesian method for low-rank
matrix completion. To exploit the underlying low-rank structure of
the data matrix, a low-rank promoting hierarchical Gaussian prior
model is proposed. Specifically, columns of the low-rank matrix
$\boldsymbol{X}$ are assumed to be mutually independent and follow
a common Gaussian distribution with zero mean and a precision
matrix. The precision matrix is treated as a random parameter,
with a Wishart distribution specified as a hyperprior over it. We
show that such a hierarchical Gaussian prior model has the
potential to encourage a low-rank solution. The GAMP technique is
employed and embedded in the variational Bayesian (VB) inference,
which results in an efficient VB-GAMP algorithm for matrix
completion. Note that due to the non-factorizable form of the
prior distribution, the GAMP technique cannot be directly used. To
address this issue, we construct a carefully devised surrogate
problem whose posterior distribution is exactly the one required
for VB inference. Meanwhile, the surrogate problem has
factorizable prior and noise distributions such that the GAMP can
be directly applied to obtain an approximate posterior
distribution. Such a trick helps achieve a substantial
computational complexity reduction, and makes it possible to
successfully apply the proposed method to solve large-scale matrix
completion problems.

The rest of the paper is organized as follows. In Section
\ref{sec:prior-model}, we introduce a hierarchical Gaussian prior
model for low-rank matrix completion. Based on this hierarchical
model, a variational Bayesian method is developed in Section
\ref{sec:VB}. In Section \ref{sec:GAMP-VB}, a GAMP-VB method is
proposed to reduce the computational complexity of the proposed
algorithm. Simulation results are provided in Section
\ref{sec:simulation-results}, followed by concluding remarks in
Section \ref{sec:conclusions}.

\begin{figure}[t]
   \centering
   \includegraphics [width=200pt]{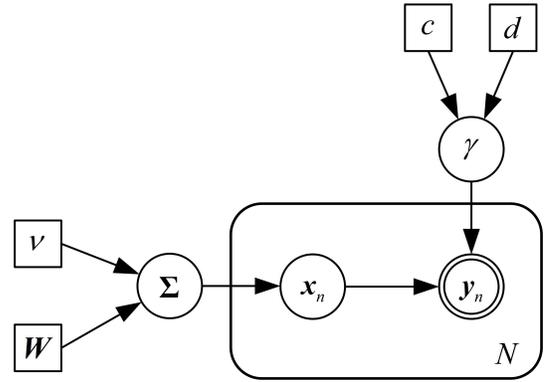}
   \caption{Proposed low-rank promoting hierarchical Gaussian prior model}
   \label{fig:graphical-model}
\end{figure}

\section{Bayesian Modeling} \label{sec:prior-model}
In the presence of noise, the canonical form of the matrix
completion problem can be formulated as
\begin{align}
\min_{\boldsymbol{X}} \quad&\text{rank}(\boldsymbol{X})\nonumber\\
\text{s.t.}\quad
&\boldsymbol{Y}=\boldsymbol{\Omega}\ast(\boldsymbol{X}+\boldsymbol{E})
\label{opt1}
\end{align}
where $\boldsymbol{E}$ denotes the additive noise, and
$\boldsymbol{\Omega}\in\{0,1\}^{M\times N}$ is a binary matrix
that indicates which entries are observed. Without loss of
generality, we assume $M\le N$. As indicated earlier, minimizing
the rank of a matrix is an NP-hard problem. In this paper, we
consider modeling the matrix completion problem within a Bayesian
framework.

We assume entries of $\boldsymbol{E}$ are independent and
identically distributed (i.i.d.) random variables following a
Gaussian distribution with zero mean and variance $\gamma^{-1}$.
To learn $\gamma$, a Gamma hyperprior is placed over $\gamma$,
i.e.
\begin{align}
p(\gamma)=\text{Gamma}(\gamma|a,b)=\Gamma(a)^{-1}b^a\gamma^{a-1}e^{-b\gamma}
\end{align}
where $\Gamma(a)=\int_{0}^{\infty}t^{a-1}e^{-t}dt$ is the Gamma
function. The parameters $a$ and $b$ are set to small values, e.g.
$10^{-8}$, which makes the Gamma distribution a non-informative
prior.

To promote the low-rankness of $\boldsymbol{X}$, we propose a
two-layer hierarchical Gaussian prior model (see Fig.
\ref{fig:graphical-model}). Specifically, in the first layer, the
columns of $\boldsymbol{X}$ are assumed mutually independent and
follow a common Gaussian distribution:
\begin{align}
p(\boldsymbol{X}|\boldsymbol{\Sigma})=\prod\limits_{n=1}^{N}p(\boldsymbol{x}_n|\boldsymbol{\Sigma})
=\prod\limits_{n=1}^{N}\mathcal{N}(\boldsymbol{x}_n|\boldsymbol{0},\boldsymbol{\Sigma}^{-1})\label{x-prior}
\end{align}
where $\boldsymbol{x}_n$ denotes the $n$th column of
$\boldsymbol{X}$, and $\boldsymbol{\Sigma}\in\mathbb{R}^{M\times
M}$ is the precision matrix. The second layer specifies a Wishart
distribution as a hyperprior over the precision matrix
$\boldsymbol{\Sigma}$:
\begin{align}
p(\boldsymbol{\Sigma}) \propto&
|\boldsymbol{\Sigma}|^{\frac{\nu-M-1}{2}}\exp(-\frac{1}{2}
    \text{tr}(\boldsymbol{W}^{-1}\boldsymbol{\Sigma}))
\end{align}
where $\nu$ and $\boldsymbol{W}\in\mathbb{R}^{M\times M}$ denote
the degrees of freedom and the scale matrix of the Wishart
distribution, respectively. Note that the constraint $\nu>M-1$ for
the standard Wishart distribution can be relaxed to $\nu>0$ if an
improper prior is allowed, e.g. \cite{PhilipMorris2000}. In
Bayesian inference, improper prior distributes can often be used
provided that the corresponding posterior distribution can be
correctly normalized \cite{Bishop07}.

%and
%\begin{align}
%\Gamma_M\left(\frac{\nu}{2}\right)=\pi^{\frac{M(M-1)}{2}}\prod_{j=1}^{M}\Gamma\left(\frac{\nu}{2}+\frac{1-j}{2}\right)
%\end{align}

The Gaussian-inverse Wishart prior has the potential to encourage
a low-rank solution. To illustrate this low-rankness promoting
property, we integrate out the precision matrix
$\boldsymbol{\Sigma}$ and obtain the marginal distribution of
$\boldsymbol{X}$ as (details of the derivation can be found in
Appendix \ref{appB})
\begin{align}
p(\boldsymbol{X}) =&\int
\prod\limits_{n=1}^{N}p(\boldsymbol{x}_n|\boldsymbol{\Sigma})p(\boldsymbol{\Sigma})d\boldsymbol{\Sigma}\nonumber\\
\propto&
|\boldsymbol{W}^{-1}+\boldsymbol{X}\boldsymbol{X}^T|^{-\frac{\nu+N}{2}}\label{X-marginal}
\end{align}
%\begin{align}
%& p(\boldsymbol{X})\nonumber\\
%=&\int
%\prod\limits_{n=1}^{N}p(\boldsymbol{x}_n|\boldsymbol{\Sigma})p(\boldsymbol{\Sigma})d\boldsymbol{\Sigma}\nonumber\\
%=&\int\left({\frac{|\boldsymbol{\Sigma}|}{(2\pi)^M}}\right)^{\frac{N}{2}}\exp(-\frac{1}{2}
%\text{tr}(\boldsymbol{X}^T\boldsymbol{\Sigma}\boldsymbol{X}))
%\text{Wishart}(\boldsymbol{\Sigma};\boldsymbol{W},\nu)d\boldsymbol{\Sigma}\nonumber\\
%=&
%\pi^{-\frac{MN}{2}}\frac{\Gamma_M(\frac{\nu+N}{2})}
%{\Gamma_M(\frac{\nu}{2})}|\boldsymbol{W}^{-1}+\boldsymbol{X}\boldsymbol{X}^T|^{-\frac{\nu+N}{2}}
%|\boldsymbol{W}|^{-\frac{\nu}{2}}\label{X-marginal}
%\end{align}
From (\ref{X-marginal}), we have
\begin{align}
\log
p(\boldsymbol{X})\propto-\log|\boldsymbol{X}\boldsymbol{X}^T+\boldsymbol{W}^{-1}|
\end{align}
If we choose $\boldsymbol{W}=\epsilon^{-1}\boldsymbol{I}$, and let
$\epsilon$ be a small positive value, the log-marginal
distribution becomes
\begin{align}
\log p(\boldsymbol{X})\propto&
-\log|\boldsymbol{X}\boldsymbol{X}^T+\epsilon\boldsymbol{I}|
\nonumber\\
=& -\sum_{m=1}^{M}\log(\lambda_m+\epsilon)
\end{align}
where $\lambda_m$ denotes the $m$th eigenvalue of
$\boldsymbol{X}\boldsymbol{X}^T$. Clearly, in this case, the prior
$p(\boldsymbol{X})$ encourages a low-rank solution
$\boldsymbol{X}$. This is because maximizing the prior
distribution $p(\boldsymbol{X})$ is equivalent to minimizing
$\sum_{m=1}^{M}\log(\lambda_m+\epsilon)$ with respect to
$\{\lambda_m\}$. It is well known that the log-sum function
$\sum_{m=1}^{M}\log(\lambda_m+\epsilon)$ is an effective
sparsity-promoting functional which encourages a sparse solution
of $\{\lambda_m\}$ \cite{CandesWakin08,ShenFang13,FuHuang2015}. As
a result, the resulting matrix $\boldsymbol{X}$ has a low-rank
structure.

In addition to $\boldsymbol{W}=\epsilon^{-1}\boldsymbol{I}$, the
parameter $\boldsymbol{W}$ can otherwise be devised in order to
exploit additional prior knowledge about $\boldsymbol{X}$. For
example, in some applications such as image inpainting, there is a
spatial correlation among neighboring coefficients of
$\boldsymbol{x}_n$. To capture the smoothness between neighboring
coefficients, $\boldsymbol{W}$ can be set as \cite{ChenMolina2015}
\begin{align}
\boldsymbol{W}=\boldsymbol{F}^T\boldsymbol{F} \label{W-1}
\end{align}
where $\boldsymbol{F}\in\mathbb{R}^{M\times M}$ is a second-order
difference operator with its $(i,j)$th entry given by
\begin{align}
f_{i,j}=
\begin{cases}
-2,\quad &i=j\\
1,&|i-j|=1\\
0,&\text{else}
\end{cases}
\end{align}
Another choice of $\boldsymbol{W}$ to promote a smooth solution is
the Laplacian matrix \cite{ShumanNarang2013}, i.e.
\begin{align}
    \boldsymbol{W} = \boldsymbol{D}-\boldsymbol{A}+\hat{\epsilon}\boldsymbol{I}\label{W-2}
\end{align}
where $\boldsymbol{A}$ is the adjacency matrix of a graph with its
entries given by
\begin{align}
    a_{ij}=\exp\left(-\frac{|i-j|^2}{\theta^2}\right)
\end{align}
$\boldsymbol{D}$, referred to as the degree matrix, is a diagonal
matrix with $d_{ii}=\sum_j a_{ij}$, and $\hat{\epsilon}$ is a
small positive value to ensure $\boldsymbol{W}$ to be full rank.

%and where $\theta$ is a parameter,

It can be shown that $\boldsymbol{W}$ defined in (\ref{W-1}) and
(\ref{W-2}) promotes low-rankness as well as smoothness of
$\boldsymbol{X}$. To illustrate this, we first introduce the
following lemma.
\newtheorem{lemma}{Lemma}
\begin{lemma}
For a positive-definite matrix
$\boldsymbol{W}\in\mathbb{R}^{M\times M}$, the following equality
holds valid
\begin{align}
\log|\boldsymbol{X}\boldsymbol{X}^T+
    \boldsymbol{W}^{-1}|=\log|\boldsymbol{W}^{-1}|+
    \log|\boldsymbol{I}+\boldsymbol{X}^T\boldsymbol{W}\boldsymbol{X}|
\end{align}
for any $\boldsymbol{X}\in\mathbb{R}^{M\times N}$. \label{lemma1}
\end{lemma}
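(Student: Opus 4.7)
The identity is a routine consequence of two standard determinant facts, and the proof plan is essentially an algebraic rearrangement. I would proceed as follows.

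First, I would factor $\boldsymbol{W}^{-1}$ out of the left-hand side. Since $\boldsymbol{W}$ is positive-definite, it is invertible, and one can write
\begin{align}
\boldsymbol{X}\boldsymbol{X}^T + \boldsymbol{W}^{-1}
= \boldsymbol{W}^{-1}\bigl(\boldsymbol{I}_M + \boldsymbol{W}\boldsymbol{X}\boldsymbol{X}^T\bigr).
\end{align}
Taking determinants and applying the multiplicative property $|\boldsymbol{A}\boldsymbol{B}| = |\boldsymbol{A}||\boldsymbol{B}|$, then taking logarithms, immediately gives
\begin{align}
\log|\boldsymbol{X}\boldsymbol{X}^T + \boldsymbol{W}^{-1}|
= \log|\boldsymbol{W}^{-1}| + \log|\boldsymbol{I}_M + \boldsymbol{W}\boldsymbol{X}\boldsymbol{X}^T|.
\end{align}

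Second, I would invoke Sylvester's determinant identity, which states that for any $\boldsymbol{A}\in\mathbb{R}^{m\times n}$ and $\boldsymbol{B}\in\mathbb{R}^{n\times m}$, one has $|\boldsymbol{I}_m + \boldsymbol{A}\boldsymbol{B}| = |\boldsymbol{I}_n + \boldsymbol{B}\boldsymbol{A}|$. Setting $\boldsymbol{A} = \boldsymbol{W}\boldsymbol{X} \in \mathbb{R}^{M\times N}$ and $\boldsymbol{B} = \boldsymbol{X}^T \in \mathbb{R}^{N\times M}$ yields
\begin{align}
|\boldsymbol{I}_M + \boldsymbol{W}\boldsymbol{X}\boldsymbol{X}^T|
= |\boldsymbol{I}_N + \boldsymbol{X}^T\boldsymbol{W}\boldsymbol{X}|.
\end{align}
Combining this with the previous display completes the proof.

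Since each step is a one-line invocation of a textbook identity, there is no genuine obstacle. The only thing to be careful about is that the dimensions in Sylvester's identity are correctly tracked (the identity on the left is $M\times M$ while the one on the right is $N\times N$), and that $\boldsymbol{W}$ being positive-definite is only needed to guarantee invertibility in the initial factoring step. No spectral decomposition or symmetric square root of $\boldsymbol{W}$ is required, though one could alternatively write $\boldsymbol{W} = \boldsymbol{W}^{1/2}\boldsymbol{W}^{1/2}$ and apply Sylvester's identity to $\boldsymbol{W}^{1/2}\boldsymbol{X}$ and $\boldsymbol{X}^T\boldsymbol{W}^{1/2}$ to obtain a symmetric version of the intermediate step.
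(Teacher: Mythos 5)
Your proof is correct and follows essentially the same route as the paper: both first factor out $\boldsymbol{W}^{-1}$ to reduce the claim to $|\boldsymbol{I}+\boldsymbol{W}\boldsymbol{X}\boldsymbol{X}^T|=|\boldsymbol{I}+\boldsymbol{X}^T\boldsymbol{W}\boldsymbol{X}|$. The only difference is that you invoke Sylvester's determinant identity as a known fact, whereas the paper proves that step from scratch via a block-matrix determinant manipulation; your dimension bookkeeping and the remark on where positive-definiteness is actually used are both accurate.
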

\begin{proof}
See Appendix \ref{appA}.
\end{proof}

From Lemma \ref{lemma1}, we have
\begin{align}
\log p(\boldsymbol{X})\propto&
-\log|\boldsymbol{X}\boldsymbol{X}^T+\boldsymbol{W}^{-1}|
\nonumber\\
\propto&-\log|\boldsymbol{I}+\boldsymbol{X}^T\boldsymbol{W}\boldsymbol{X}|
\nonumber\\
=&-\sum_{n=1}^{N}\log(\tilde{\lambda}_n+1)
\end{align}
where $\tilde{\lambda}_n$ is the $n$th eigenvalue associated with
$\boldsymbol{X}^T\boldsymbol{W}\boldsymbol{X}$. We see that
maximizing the prior distribution is equivalent to minimizing
$\sum_{n=1}^{N}\log(\tilde{\lambda}_n+1)$ with respect to
$\{\tilde{\lambda}_n\}$. As discussed earlier, this log-sum
functional is a sparsity-promoting functional which encourages a
sparse solution $\{\tilde{\lambda}_n\}$. As a result, the matrix
$\boldsymbol{X}^T\boldsymbol{W}\boldsymbol{X}$ has a low rank.
Since $\boldsymbol{W}$ is full rank, this implies that
$\boldsymbol{X}$ has a low-rank structure. On the other hand,
notice that
$\text{tr}(\boldsymbol{X}^T\boldsymbol{W}\boldsymbol{X})$ is the
first-order approximation of
$\log|\boldsymbol{I}+\boldsymbol{X}^T\boldsymbol{W}\boldsymbol{X}|$.
Therefore minimizing
$\log|\boldsymbol{I}+\boldsymbol{X}^T\boldsymbol{W}\boldsymbol{X}|$
will reduce the value of
$\text{tr}(\boldsymbol{X}^T\boldsymbol{W}\boldsymbol{X})$.
Clearly, for $\boldsymbol{W}$ defined in (\ref{W-1}) and
(\ref{W-2}), a smoother solution results in a smaller value of
$\text{tr}(\boldsymbol{X}^T\boldsymbol{W}\boldsymbol{X})$.
Therefore when $\boldsymbol{W}$ is chosen to be (\ref{W-1}) or
(\ref{W-2}), the resulting prior distribution $p(\boldsymbol{X})$
has the potential to encourage a low-rank and smooth solution.

\emph{Remarks:} Our proposed hierarchical Gaussian prior model can
be considered as a generalization of the prior model in
\cite{XinWang2016}. Notice that in \cite{XinWang2016}, the
precision matrix in the prior model is assumed to be a
deterministic parameter, whereas it is treated as a random
variable and assigned a Wishart prior distribution in our model.
This generalization offers more flexibility in modeling the
underlying latent matrix. As discussed earlier, the parameter
$\boldsymbol{W}$ can be devised to capture additional prior
knowledge about the latent matrix, and such a careful choice of
$\boldsymbol{W}$ can help substantially improve the recovery
performance, as corroborated by our experimental results.

%Note that when $\boldsymbol{W}$ is chosen to be an identity matrix
%scaled by a very large factor, the Wishart distribution
%corresponds to a non-informative prior, in which case our
%hierarchical prior model degenerates to the prior model
%\cite{XinWang2016}. Nevertheless,

%and the resulting prior may have the ability to exploit the
%low-rank structure as well as the smoothness of the latent matrix
%simultaneously.

%If we set to be a Laplacian matrix, reducing
%$\text{tr}(\boldsymbol{X}^T\boldsymbol{W}\boldsymbol{X})$ will
%result in a smooth solution.

\section{Variational Bayesian Inference} \label{sec:VB}
\subsection{Review of The Variational Bayesian Methodology}
Before proceeding, we firstly provide a brief review of the
variational Bayesian (VB) methodology. In a probabilistic model,
let $\boldsymbol{y}$ and $\boldsymbol{\theta}$ denote the observed
data and the hidden variables, respectively. It is straightforward
to show that the marginal probability of the observed data can be
decomposed into two terms \cite{Bishop07}
\begin{align}
\ln p(\boldsymbol{y})=L(q)+\text{KL}(q|| p),
\label{variational-decomposition}
\end{align}
where
\begin{align}
L(q)=\int q(\boldsymbol{\theta})\ln
\frac{p(\boldsymbol{y},\boldsymbol{\theta})}{q(\boldsymbol{\theta})}d\boldsymbol{\theta}\label{Lq}
\end{align}
and
\begin{align}
\text{KL}(q|| p)=-\int q(\boldsymbol{\theta})\ln
\frac{p(\boldsymbol{\theta}|\boldsymbol{y})}{q(\boldsymbol{\theta})}d\boldsymbol{\theta},
\end{align}
where $q(\boldsymbol{\theta})$ is any probability density
function, $\text{KL}(q|| p)$ is the Kullback-Leibler divergence
\cite{KullbackLeibler51} between
$p(\boldsymbol{\theta}|\boldsymbol{y})$ and
$q(\boldsymbol{\theta})$. Since $\text{KL}(q|| p)\geq 0$, it
follows that $L(q)$ is a rigorous lower bound for $\ln
p(\boldsymbol{y})$. Moreover, notice that the left hand side of
(\ref{variational-decomposition}) is independent of
$q(\boldsymbol{\theta})$. Therefore maximizing $L(q)$ is
equivalent to minimizing $\text{KL}(q|| p)$, and thus the
posterior distribution $p(\boldsymbol{\theta}|\boldsymbol{y})$ can
be approximated by $q(\boldsymbol{\theta})$ through maximizing
$L(q)$.

The significance of the above transformation is that it
circumvents the difficulty of computing the posterior probability
$p(\boldsymbol{\theta}|\boldsymbol{y})$, when it is
computationally intractable. For a suitable choice for the
distribution $q(\boldsymbol{\theta})$, the quantity $L(q)$ may be
more amiable to compute. Specifically, we could assume some
specific parameterized functional form for
$q(\boldsymbol{\theta})$ and then maximize $L(q)$ with respect to
the parameters of the distribution. A particular form of
$q(\boldsymbol{\theta})$ that has been widely used with great
success is the factorized form over the component variables
$\{\theta_i\}$ in $\boldsymbol{\theta}$ \cite{TzikasLikas08}, i.e.
$q(\boldsymbol{\theta})=\prod_i q_i(\theta_i)$. We therefore can
compute the posterior distribution approximation by finding
$q(\boldsymbol{\theta})$ of the factorized form that maximizes the
lower bound $L(q)$. The maximization can be conducted in an
alternating fashion for each latent variable, which leads to
\cite{TzikasLikas08}
\begin{align}
q_i(\theta_i)=\frac{e^{\langle\ln
        p(\boldsymbol{y},\boldsymbol{\theta})\rangle_{k\neq
            i}}}{\int e^{\langle\ln
        p(\boldsymbol{y},\boldsymbol{\theta})\rangle_{k\neq i}}d\theta_i},
\label{general-update}
\end{align}
where $\langle\cdot\rangle_{k\neq i}$ denotes the expectation with
respect to the distributions $q_k(\theta_k)$ for all $k\neq i$. By
taking the logarithm on both sides of (\ref{general-update}), it
can be equivalently written as
\begin{align}
\ln q_i(\theta_i)=\langle\ln
p(\boldsymbol{y},\boldsymbol{\theta})\rangle_{k\neq
i}+\text{constant}. \label{ln-general-update}
\end{align}

\subsection{Proposed Algorithm}
We now proceed to perform variational Bayesian inference for the
proposed hierarchical model. Let
$\boldsymbol{\theta}\triangleq\{\boldsymbol{X},\boldsymbol{\Sigma},
\gamma\}$ denote all hidden variables. Our objective is to find
the posterior distribution
$p(\boldsymbol{\theta}|\boldsymbol{y})$. Since
$p(\boldsymbol{\theta}|\boldsymbol{y})$ is usually computationally
intractable, we, following the idea of \cite{TzikasLikas08},
approximate $p(\boldsymbol{\theta}|\boldsymbol{y})$ as
$q(\boldsymbol{X},\boldsymbol{\Sigma}, \gamma)$ which has a
factorized form over the hidden variables
$\{\boldsymbol{X},\boldsymbol{\Sigma}, \gamma\}$, i.e.
\begin{align}
q(\boldsymbol{X},\boldsymbol{\Sigma}, \gamma) =
q_x(\boldsymbol{X})q_{\Sigma}(\boldsymbol{\Sigma})q_{\gamma}(\gamma).
\end{align}
As mentioned in the previous subsection, the maximization of
$L(q)$ can be conducted in an alternating fashion for each latent
variable, which leads to (details of the derivation can be found
in \cite{TzikasLikas08})
\begin{align}
\ln q_x(\boldsymbol{X})=&\langle\ln p(\boldsymbol{\Sigma},
\gamma)\rangle_{q_{\Sigma}(\boldsymbol{\Sigma})q_{\gamma}(\gamma)} + \text{constant}, \nonumber\\
\ln q_{\Sigma}(\boldsymbol{\Sigma})=&\langle\ln p(\boldsymbol{X},
\gamma)\rangle_{q_x(\boldsymbol{X})q_{\gamma}(\gamma)} + \text{constant}, \nonumber\\
\ln q_{\gamma}(\gamma)=&\langle\ln
p(\boldsymbol{X},\boldsymbol{\Sigma},
)\rangle_{q_x(\boldsymbol{X})q_{\Sigma}(\boldsymbol{\Sigma})} +
\text{constant}, \nonumber
\end{align}
where $\langle\rangle_{q_1(\cdot)\ldots q_K(\cdot)}$ denotes the
expectation with respect to (w.r.t.) the distributions
$\{q_k(\cdot)\}_{k=1}^K$. Details of this Bayesian inference
scheme are provided next.

%In summary, the posterior distribution approximations are computed
%in an alternating fashion for each hidden variable, with the
%distribution of other variables fixed.

\textbf{\emph{1).} Update of $q_{x}(\boldsymbol{X})$}: The
calculation of $q_{x}(\boldsymbol{X})$ can be decomposed into a
set of independent tasks, with each task computing the posterior
distribution approximation for each column of $\boldsymbol{X}$,
i.e. $q_{x}(\boldsymbol{x}_n)$. We have
\begin{align}
\ln
q_x(\boldsymbol{x}_n)&\propto\langle\ln[p(\boldsymbol{y}_n|\boldsymbol{x}_n)
p(\boldsymbol{x}_n|\boldsymbol{\Sigma})]\rangle_{q_{\Sigma}(\boldsymbol{\Sigma})q_{\gamma}(\gamma)}\nonumber\\
&\propto\langle-\gamma(\boldsymbol{y}_n-\boldsymbol{x}_n)^T
\boldsymbol{O}_n(\boldsymbol{y}_n-\boldsymbol{x}_n)-\boldsymbol{x}_n^T
\boldsymbol{\Sigma}\boldsymbol{x}_n\rangle\nonumber\\
&\propto-\boldsymbol{x}_n^T(\langle\gamma\rangle\boldsymbol{O}_n+\langle\boldsymbol{\Sigma}\rangle)
\boldsymbol{x}_n+2\langle\gamma\rangle\boldsymbol{x}_n^T\boldsymbol{O}_n\boldsymbol{y}_n\label{x-likelihood}
\end{align}
where $\boldsymbol{y}_n$ denotes the $n$th column of
$\boldsymbol{Y}$ and
$\boldsymbol{O}_n\triangleq\text{diag}(\boldsymbol{o}_n)$, with
$\boldsymbol{o}_n$ being the $n$th column of
$\boldsymbol{\Omega}$. From (\ref{x-likelihood}), it can be seen
that $\boldsymbol{x}_n$ follows a Gaussian distribution
\begin{align}
q_x(\boldsymbol{x}_n)=\mathcal{N}(\boldsymbol{x}_n|\boldsymbol{\mu}_n,\boldsymbol{Q}_n)\label{x-update1}
\end{align}
with $\boldsymbol{\mu}_n$ and $\boldsymbol{Q}_n$ given as
\begin{align}
\boldsymbol{\mu}_n&=\langle\gamma\rangle\boldsymbol{Q}_n\boldsymbol{O}_n\boldsymbol{y}_n \label{mu-update}\\
\boldsymbol{Q}_n&=(\langle\gamma\rangle\boldsymbol{O}_n+\langle\boldsymbol{\Sigma}\rangle)^{-1}\label{Q-update}
\end{align}
We see that to calculate $q_{x}(\boldsymbol{x}_n)$, we need to
perform an inverse operation of an $M\times M$ matrix which
involves a computational complexity of $\mathcal{O}(M^3)$.

\textbf{\emph{2).} Update of $q_{\Sigma}(\boldsymbol{\Sigma})$}:
The approximate posterior $q_{\Sigma}(\boldsymbol{\Sigma})$ can be
obtained as
%\begin{align}
%&\ln q_{\Sigma}(\boldsymbol{\Sigma})\nonumber\\
%\propto&
%\langle\ln[\prod\limits_{n=1}^{N}p(\boldsymbol{x}_n|\boldsymbol{\Sigma})
%p(\boldsymbol{\Sigma})]\rangle_{q_x(\boldsymbol{X})}\nonumber\\
%\propto&\langle\frac{N}{2}\ln|\boldsymbol{\Sigma}|-\frac{1}{2}\text{tr}(\boldsymbol{X}^T\boldsymbol{\Sigma}\boldsymbol{X})
%+\frac{\nu-M-1}{2}\ln|\boldsymbol{\Sigma}|
%\nonumber\\&-\frac{1}{2}\text{tr}(\boldsymbol{W}^{-1}\boldsymbol{\Sigma})\rangle\nonumber\\
%\propto&\frac{\nu+N-M-1}{2}\ln|\boldsymbol{\Sigma}|-\frac{1}{2}
%\text{tr}((\boldsymbol{W}^{-1}+\langle\boldsymbol{X}\boldsymbol{X}^T\rangle)\boldsymbol{\Sigma})\label{Sigma-likelihood}
%\end{align}
\begin{align}
&\ln q_{\Sigma}(\boldsymbol{\Sigma})\nonumber\\
\propto&
\langle\ln[\prod\limits_{n=1}^{N}p(\boldsymbol{x}_n|\boldsymbol{\Sigma})
p(\boldsymbol{\Sigma})]\rangle_{q_x(\boldsymbol{X})}\nonumber\\
\propto&\langle\frac{N}{2}\ln|\boldsymbol{\Sigma}|-\frac{1}{2}\text{tr}(\boldsymbol{X}^T\boldsymbol{\Sigma}\boldsymbol{X})
+\frac{\nu-M-1}{2}\ln|\boldsymbol{\Sigma}|
\nonumber\\&-\frac{1}{2}\text{tr}(\boldsymbol{W}^{-1}\boldsymbol{\Sigma})\rangle\nonumber\\
\propto&\frac{\nu+N-M-1}{2}\ln|\boldsymbol{\Sigma}|-\frac{1}{2}
\text{tr}((\boldsymbol{W}^{-1}+\langle\boldsymbol{X}\boldsymbol{X}^T\rangle)\boldsymbol{\Sigma})\label{Sigma-likelihood}
\end{align}
From (\ref{Sigma-likelihood}), it can be seen that
$\boldsymbol{\Sigma}$ follows a Wishart distribution, i.e.
\begin{align}
q_{\Sigma}(\boldsymbol{\Sigma})=\text{Wishart}(\boldsymbol{\Sigma};\hat{\boldsymbol{W}},\hat{\nu})\label{Sigma-update1}
\end{align}
where
\begin{align}
\hat{\boldsymbol{W}}&=(\boldsymbol{W}^{-1}+\langle\boldsymbol{X}\boldsymbol{X}^T\rangle)^{-1}\\
\hat{\nu}&=\nu+N
\end{align}

\textbf{\emph{3).} Update of $q_{\gamma}(\boldsymbol{\gamma})$}:
The variational optimization of $q_{\gamma}(\gamma)$ yields
\begin{align}
\ln q_{\gamma}(\gamma)\propto&\langle\ln
p(\boldsymbol{Y}|\boldsymbol{X},\gamma)p(\gamma)\rangle_{q_x(\boldsymbol{X})}
\nonumber\\
\propto&\langle\ln\prod_{(m,n)\in\mathbb{S}}p(y_{mn}|x_{mn},\gamma)p(\gamma)\rangle
\nonumber\\
\propto&
\langle\frac{L}{2}\ln\gamma-\frac{\gamma}{2}\sum_{(m,n)\in\mathbb{S}}(y_{mn}-x_{mn})^2
+(c-1)\ln\gamma-d\gamma\rangle \nonumber\\
=&\bigg(\frac{L}{2}+c-1\bigg)\ln\gamma-
\bigg(\frac{1}{2}\sum_{(m,n)\in\mathbb{S}}\langle(y_{mn}-x_{mn})^2\rangle+d\bigg)\gamma
\end{align}
where $x_{mn}$ and $y_{mn}$ denote the $(m,n)$th entry of
$\boldsymbol{X}$ and $\boldsymbol{Y}$, respectively,
$\mathbb{S}\triangleq\{(m,n)|\Omega_{mn}=1\}$ is an index set
consisting of indices of those observed entries, and $L\triangleq
|\mathbb{S}|$ is the cardinality of the set $\mathbb{S}$, in which
$\Omega_{mn}$ denotes the $(m,n)$th entry of
$\boldsymbol{\Omega}$.

It is easy to verify that $q_{\gamma}(\gamma)$ follows a Gamma
distribution
\begin{align}
q_{\gamma}(\gamma)=\text{Gamma}(\gamma|\tilde{c},\tilde{d})
\label{gamma-update}
\end{align}
with the parameters $\tilde{c}$ and $\tilde{d}$ given respectively
by
\begin{align}
\tilde{c}=&\frac{L}{2}+c, \nonumber\\
\tilde{d}=&\frac{1}{2}\sum_{(m,n)\in\mathbb{S}}\langle(y_{mn}-x_{mn})^2\rangle+d
\end{align}
where
\begin{align}
\langle(y_{mn}-x_{mn})^2\rangle=y_{mn}^2-2y_{mn}\langle
x_{mn}\rangle+\langle x_{mn}^2\rangle
\end{align}

Some of the expectations and moments used during the update are
summarized as
\begin{align}
\langle\boldsymbol{\Sigma}\rangle&=\hat{\boldsymbol{W}}\hat{\nu}\\
\langle\boldsymbol{X}\boldsymbol{X}^T\rangle&=\langle\boldsymbol{X}\rangle
\langle\boldsymbol{X}\rangle^T+\sum\limits_{n=1}^{N}\boldsymbol{Q}_n\label{XX}\\
\langle x_{mn}^2\rangle&=\langle x_{mn}\rangle^2+Q_n(m,m)
\end{align}
where $Q_n(m,m)$ denotes the $m$th diagonal entry of
$\boldsymbol{Q}_n$.

For clarity, we summarize our algorithm as follows.

\begin{algorithm}
    \renewcommand{\algorithmicrequire}{\textbf{Input:}}
    \renewcommand\algorithmicensure {\textbf{Output:}}
    \caption{VB Algorithm for Matrix Completion}
    \begin{algorithmic}[0]
        \REQUIRE $\boldsymbol{Y}$, $\boldsymbol{\Omega}$, $\nu$ and $\boldsymbol{W}$.
        \ENSURE $q_{x}(\boldsymbol{X})$, $q_{\Sigma}(\boldsymbol{\Sigma})$, $q_{\gamma}(\gamma)$.
        \STATE Initialize $\langle\boldsymbol{\Sigma}\rangle$ and
        $\langle\gamma\rangle$;
        \WHILE {not converge}
        \FOR{$n=1$ to $N$}
        \STATE Update $q_x({\boldsymbol{x}_n})$ via
        (\ref{x-update1}),
        with $q_{\Sigma}(\boldsymbol{\Sigma})$ and $q_{\gamma}(\gamma)$ fixed;
        \ENDFOR
        \STATE Update $q_{\Sigma}({\boldsymbol{\Sigma}})$ via
        (\ref{Sigma-update1}),
        with $q_{x}(\boldsymbol{X})$ and $q_{\gamma}(\gamma)$ fixed;
        \STATE Update $q_{\gamma}(\gamma)$ via (\ref{gamma-update});
        \ENDWHILE
    \end{algorithmic}
    \label{algorithm-vb}
\end{algorithm}

It can be easily checked that the computational complexity of our
proposed method is dominated by the update of the posterior
distribution $q_x(\boldsymbol{X})$, which requires computing an
$M\times M$ matrix inverse $N$ times and therefore has a
computational complexity scaling as $\mathcal{O}(M^3 N)$. This
makes the application of our proposed method to large data sets
impractical. To address this issue, in the following, we develop a
computationally efficient algorithm which obtains an approximation
of $q_x(\boldsymbol{X})$ by resorting to the generalized
approximate message passing (GAMP) technique \cite{Rangan11}.

%In \cite{Rangan11}, the AMP framework was generalized to handle
%essentially arbitrary prior distribution and arbitrary noise
%distributions, which is termed as GAMP.

%their prior distributions are related to a known binary parameter
%vector $\boldsymbol{\pi}\in\{0,1\}^{M\times 1}$. Specifically,
%$x_m$ follows a Gaussian distribution if $\pi_m=1$ and a
%non-informative uniform distribution otherwise, i.e.

%Update $q_x(\boldsymbol{X})$ via GAMP

\section{VB-GAMP}
\label{sec:GAMP-VB} GAMP is a low-complexity Bayesian iterative
technique recently developed in \cite{DonohoMaleki10,Rangan11} for
obtaining approximate marginal posteriors. Note that the GAMP
algorithm requires that both the prior distribution and the noise
distribution have factorized forms \cite{Rangan11}. Nevertheless,
in our model, the prior distribution
$p(\boldsymbol{x}_n|\boldsymbol{\Sigma})$ has a non-factorizable
form, in which case the GAMP technique cannot be directly applied.
To address this issue, we first construct a surrogate problem
which aims to recover $\boldsymbol{x}\in\mathbb{R}^{M}$ from
linear measurements $\boldsymbol{b}\in\mathbb{R}^{M}$:
\begin{align}
\boldsymbol{b}=\boldsymbol{U}^T\boldsymbol{x}+\boldsymbol{e}
\label{surrogate-problem}
\end{align}
where $\boldsymbol{U}\in\mathbb{C}^{M\times M}$ is obtained by
performing a singular value decomposition of
$\langle\boldsymbol{\Sigma}\rangle=\boldsymbol{U}\boldsymbol{S}\boldsymbol{U}^T$,
$\boldsymbol{U}$ is a unitary matrix and $\boldsymbol{S}$ is a
diagonal matrix with its diagonal elements equal to the singular
values of $\langle\boldsymbol{\Sigma}\rangle$, and
$\boldsymbol{e}$ denotes the additive Gaussian noise with zero
mean and covariance matrix $\boldsymbol{S}^{-1}$. We assume that
entries of $\boldsymbol{x}$ are mutually independent and follow
the following distribution:
\begin{align}
p(x_{m})=
\begin{cases}
\mathcal{N}(\kappa_{m},\xi^{-1}) & \text{if $\pi_{m}=1$}\\
C, & \text{if $\pi_{m}=0$}
\end{cases} \label{sp-x-prior}
\end{align}
where $\pi_{m}$, $x_{m}$, and $\kappa_{m}$ denote the $m$th entry
of $\boldsymbol{\pi}$, $\boldsymbol{x}$, and
$\boldsymbol{\kappa}$, respectively, $C$ is a constant,
$\boldsymbol{\pi}$, $\boldsymbol{\kappa}\in\mathbb{R}^{M\times 1}$
and $\xi$ are known parameters. It is noted that although
$p(x_{m})=C$ is an improper prior distribution, it can often be
used provided the corresponding posterior distribution can be
correctly normalized \cite{Bishop07}. Considering the surrogate
problem (\ref{surrogate-problem}), the posterior distribution of
$\boldsymbol{x}$ can be calculated as
\begin{align}
p(\boldsymbol{x}|\boldsymbol{b})&\propto
p(\boldsymbol{b}|\boldsymbol{x})
p(\boldsymbol{x})\nonumber\\
&\propto p(\boldsymbol{b}|\boldsymbol{x})\prod_{m\in S}p(x_{m})\nonumber\\
&=\mathcal{N}(\boldsymbol{U}^T\boldsymbol{x},\boldsymbol{S}^{-1})\prod_{m\in
S}\mathcal{N}(\kappa_{m},\xi^{-1})
\end{align}
where $S\triangleq\{m| \pi_m=1\}$.

%To discriminate the distribution defined in current model and
%previous model, we use $\tilde{p}(x)$ to denote the distribution
%of variable $x$ in current model.

Taking the logarithm of $p(\boldsymbol{x}|\boldsymbol{b})$, we
have
\begin{align}
\ln p(\boldsymbol{x}|\boldsymbol{b})&\propto
-\frac{1}{2}(\boldsymbol{b}-\boldsymbol{U}^T\boldsymbol{x})^T\boldsymbol{S}
(\boldsymbol{b}-\boldsymbol{U}^T\boldsymbol{x})\nonumber\\
&\quad-\frac{1}{2}\xi\sum_{m\in S}(x_{m}-\kappa_{m})^2\nonumber\\
&=-\frac{1}{2}(\boldsymbol{b}-\boldsymbol{U}^T\boldsymbol{x})^T\boldsymbol{S}
(\boldsymbol{b}-\boldsymbol{U}^T\boldsymbol{x})\nonumber\\
&\quad-\frac{1}{2}\xi(\boldsymbol{x}-\boldsymbol{\kappa})^T\boldsymbol{\Pi}(\boldsymbol{x}-\boldsymbol{\kappa})\nonumber\\
&\propto-\frac{1}{2}\boldsymbol{x}^T(\boldsymbol{U}\boldsymbol{S}\boldsymbol{U}^T+
\xi\boldsymbol{\Pi})\boldsymbol{x}^T+(\boldsymbol{b}^T\boldsymbol{S}\boldsymbol{U}^T+
\xi\boldsymbol{\kappa}^T\boldsymbol{\Pi}) \boldsymbol{x}
\end{align}
where $\boldsymbol{\Pi}$ is a diagonal matrix with its $m$th
diagonal entry equal to $\pi_m$. Clearly,
$p(\boldsymbol{x}|\boldsymbol{b})$ follows a Gaussian distribution
with its mean $\boldsymbol{\mu}$ and covariance matrix
$\boldsymbol{Q}$ given by
\begin{align}
\boldsymbol{\mu}=&\boldsymbol{Q}(\boldsymbol{U}\boldsymbol{S}\boldsymbol{b}+
\xi\boldsymbol{\Pi}\boldsymbol{\kappa}) \label{eqn1}\\
\boldsymbol{Q}=&(\boldsymbol{U}\boldsymbol{S}\boldsymbol{U}^T+\xi\boldsymbol{\Pi})^{-1}
=(\langle\boldsymbol{\Sigma}\rangle+\xi\boldsymbol{\Pi})^{-1}
\label{eqn2}
\end{align}
Comparing (\ref{mu-update})--(\ref{Q-update}) with
(\ref{eqn1})--(\ref{eqn2}), we can readily verify that when
$\boldsymbol{b}=\boldsymbol{0}$,
$\boldsymbol{\kappa}=\boldsymbol{y}_n$,
$\boldsymbol{\pi}=\boldsymbol{o}_n$ (i.e.
$\boldsymbol{\Pi}=\boldsymbol{O}_n$),
and $\xi=\langle\gamma\rangle$,
$p(\boldsymbol{x}|\boldsymbol{b})$ is exactly the desired
posterior distribution $q_x(\boldsymbol{x}_n)$. Meanwhile, notice
that for the surrogate problem (\ref{surrogate-problem}), both the
prior distribution and the noise distribution are factorizable.
Hence the GAMP algorithm can be directly applied to
(\ref{surrogate-problem}) to find an approximation of the
posterior distribution $p(\boldsymbol{x}|\boldsymbol{b})$. By
setting $\boldsymbol{b}=\boldsymbol{0}$,
$\boldsymbol{\kappa}=\boldsymbol{y}_n$,
$\boldsymbol{\pi}=\boldsymbol{o}_n$, $\xi=\langle\gamma\rangle$, an approximate of
$q_x(\boldsymbol{x}_n)$ in (\ref{x-update1}) can be efficiently
obtained. We now proceed to derive the GAMP algorithm for the
surrogate problem (\ref{surrogate-problem}).

\subsection{Solving (\ref{surrogate-problem}) via GAMP}
GAMP was developed in a message passing-based framework. By using
central-limit-theorem approximations, message passing between
variable nodes and factor nodes can be greatly simplified, and the
loopy belief propagation on the underlying factor graph can be
efficiently performed. As noted in \cite{Rangan11}, the
central-limit-theorem approximations become exact in the
large-system limit under an i.i.d. zero-mean sub-Gaussian
measurement matrix.

Firstly, GAMP approximates the true marginal posterior
distribution $p(x_{m}|\boldsymbol{b})$ by
\begin{align}
\hat{p}(x_{m}|\boldsymbol{b},\hat{r}_{m},\tau_{m}^r)
=&\frac{p(x_{m})\mathcal{N}(x_{m}|\hat{r}_{m},\tau_{m}^r)}{\int_x
    p(x_{m})\mathcal{N}(x_{m}|\hat{r}_{m},\tau_{m}^r)}
\label{eqn-1}
\end{align}
where $\hat{r}_{m}$ and $\tau_{m}^r$ are quantities iteratively
updated during the iterative process of the GAMP algorithm. Here,
we have dropped their explicit dependence on the iteration number
$k$ for simplicity. For the case $\pi_{m}=1$, substituting the
prior distribution (\ref{sp-x-prior}) into (\ref{eqn-1}), it can
be easily verified that the approximate posterior
$\hat{p}(x_{m}|\boldsymbol{b},\hat{r}_{m},\tau_{m}^r)$ follows a
Gaussian distribution with its mean and variance given
respectively as
\begin{align}
\mu_{m}^x&=\phi_{m}^x(\xi \kappa_{m}+\hat{r}_{m}/\tau_{m}^r) \label{x-post-mean-1} \\
\phi_{m}^x&=\frac{\tau_{m}^r}{1+\xi\tau_{m}^r}
\label{x-post-var-1}
\end{align}
Similarly, for the case $\pi_{m}=0$, substituting the prior
distribution (\ref{sp-x-prior}) into (\ref{eqn-1}), the
approximate posterior
$\hat{p}(x_{m}|\boldsymbol{b},\hat{r}_{m},\tau_{m}^r)$ follows a
Gaussian distribution with its mean and variance given
respectively as
\begin{align}
\mu_{m}^x&=\hat{r}_{m} \label{x-post-mean-0} \\
\phi_{m}^x&=\tau_{m}^r\label{x-post-var-0}
\end{align}

Another approximation is made to the noiseless output
$z_{i}\triangleq\boldsymbol{u}_i^T\boldsymbol{x}$, where
$\boldsymbol{u}_i^T$ denotes the $i$th row of $\boldsymbol{U}^T$.
GAMP approximates the true marginal posterior
$p(z_{i}|\boldsymbol{b})$ by
\begin{align}
\hat{p}(z_{i}|\boldsymbol{b},\hat{p}_{i},\tau_{i}^p)=\frac{p(b_i|z_{i})
    \mathcal{N}(z_{i}|\hat{p}_{i},\tau_{i}^p)}{\int_z
    p(b_{i}|z_{i})
    \mathcal{N}(z_{i}|\hat{p}_{i},\tau_{i}^p)}
\end{align}
where $\hat{p}_{i}$ and $\tau_{i}^p$ are quantities iteratively
updated during the iterative process of the GAMP algorithm. Again,
here we dropped their explicit dependence on the iteration number
$k$. Under the additive white Gaussian noise assumption, we have
$p(b_i|z_{i})=\mathcal{N}(b_{i}|z_{i},s_{i}^{-1})$, where $s_{i}$
denotes the $i$th diagonal element of $\boldsymbol{S}$. Thus
$\hat{p}(z_{i}|\boldsymbol{b},\hat{p}_{i},\tau_{i}^p)$ also
follows a Gaussian distribution with its mean and variance given
by
\begin{align}
\mu_{i}^z=&\frac{\tau_{i}^p s_{i}
    b_{i}+\hat{p}_{i}}{1+s_{i}\tau_{i}^p} \label{z-post-mean}
\\
\phi_{i}^z=&\frac{\tau_{i}^p}{1+s_{i}\tau_{i}^p}
\label{z-post-var}
\end{align}
%where (\ref{z-post-mean}) follows the fact that
%$\boldsymbol{b}=\boldsymbol{0}$.

With the above approximations, we can now define the following two
scalar functions: $g_{\text{in}}(\cdot)$ and
$g_{\text{out}}(\cdot)$ that are used in the GAMP algorithm. The
input scalar function $g_{\text{in}}(\cdot)$ is simply defined as
the posterior mean $\mu_m^x$, i.e.
\begin{align}
g_{\text{in}}(\hat{r}_{m},\tau_{m}^r) =\mu_{m}^x=
\begin{cases}
\phi_{m}^x(\xi \kappa_{m}+\hat{r}_{m}/\tau_{m}^r)&\text{if $\pi_{m}=1$}\\
\hat{r}_{m}&\text{if $\pi_{m}=0$}
\end{cases}
\end{align}
The scaled partial derivative of $\tau_{m}^r
g_{\text{in}}(\hat{r}_{m},\tau_{m}^r)$ with respect to
$\hat{r}_{m}$ is the posterior variance $\phi_{m}^x$, i.e.
\begin{align}
\tau_{m}^r\frac{\partial}{\partial\hat{r}_{m}}g_{\text{in}}(\hat{r}_{m},\tau_{m}^r)
=\phi_{m}^x=\begin{cases}
\frac{\tau_{m}^r}{1+\xi\tau_{m}^r}&\text{if $\pi_{m}=1$}\\
\tau_{m}^r&\text{if $\pi_{m}=0$}
\end{cases}
\end{align}
The output scalar function $g_{\text{out}}(\cdot)$ is related to
the posterior mean $\mu_{i}^z$ as follows
\begin{align}
g_{\text{out}}(\hat{p}_{i},\tau_{i,n}^p)&=\frac{1}{\tau_{i}^p}(\mu_{i}^z-\hat{p}_{i})
=\frac{s_{i}(b_i-\hat{p}_{i})}{1+s_{i}\tau_{i}^p}
\end{align}
The partial derivative of $g_{\text{out}}(\hat{p}_{i},\tau_{i}^p)$
is related to the posterior variance $\phi_{i,n}^z$ in the
following way
\begin{align}
\frac{\partial}{\partial\hat{p}_{i}}g_{\text{out}}(\hat{p}_{i},\tau_{i}^p)=
\frac{\phi_{i}^z-\tau_{i}^p}{(\tau_{i}^p)^2}=\frac{-s_{i}}{(1+s_{i}\tau_{i}^p)}
\end{align}
Given the above definitions of $g_{\text{in}}(\cdot)$ and
$g_{\text{out}}(\cdot)$, the GAMP algorithm tailored to the
considered problem (\ref{surrogate-problem}) can now be summarized
as follows (details of the derivation of the GAMP algorithm can be
found in \cite{Rangan11}), in which $u_{i,m}$ denotes the
$(i,m)$th entry of $\boldsymbol{U}^T$.

\begin{algorithm}
    \renewcommand{\algorithmicrequire}{\textbf{Input:}}
    \renewcommand\algorithmicensure {\textbf{Output:}}
    \caption{GAMP Algorithm}
    \begin{algorithmic}[0]
        \REQUIRE $\boldsymbol{\kappa}$, $\boldsymbol{\pi}$, $\boldsymbol{b}$, and $\xi$.
        \ENSURE $\{\hat{r}_{m},\tau_{m}^r\}$, $\{\hat{p}_{i},\tau_{i}^p\}$, and $\{\mu_{m}^x,\phi_{m}^{x}\}$.
        \STATE Initialization: Set
        $\hat{\psi}_{i}=0,\forall i\in\{1,\ldots,M\}$;
        $\{\mu_{m}^x\}_{m=1}^M$ are
        initialized as the mean variance of the prior distribution,
        and $\{\phi_{m}^{x}\}_{m=1}^M$ are set to small values, say $10^{-5}$.
        \WHILE {not converge}
        \STATE Step 1. $\forall i\in\{1,\ldots,M\}$: \\
        $\begin{aligned}
        \qquad\qquad\quad \hat{z}_{i}=&\sum_m u_{i,m}\mu_{m}^x\\
        \tau_{i}^p=&\sum_m u_{i,m}^2\phi_{m}^{x} \\
        \hat{p}_{i}=&\hat{z}_{i}-\tau_{i}^p\hat{\psi}_{i}\\
        \end{aligned}$ \\
        \STATE Step 2. $\forall i\in\{1,\ldots,M\}$: \\
        $\begin{aligned} \qquad\qquad\quad
        \hat{\psi}_{i}=&g_{\text{out}}(\hat{p}_{i},\tau^p_{i})\\
        \tau^s_{i}=&-\frac{\partial}{\partial\hat{p}_{i}}g_{\text{out}}(\hat{p}_{i},\tau^p_{i})\\
        \end{aligned}$ \\
        \STATE Step 3. $\forall m\in\{1,\ldots,M\}$: \\
        $\begin{aligned} \qquad\qquad\quad
        \tau_{m}^r=&\left(\sum_{i}u_{i,m}^2\tau^{s}_{i}\right)^{-1} \\
        \hat{r}_{m}=&\mu_{m}^x+\tau_{m}^r\sum_{i}u_{i,m}\hat{\psi}_{i} \\
        \end{aligned}$ \\
        \STATE Step 4. $\forall m\in\{1,\ldots,M\}$: \\
        $\begin{aligned} \qquad\qquad\quad
        \mu_{m}^x=&g_{\text{in}}(\hat{r}_{m},\tau_{m}^r)
        \\
        \phi_{m}^{x}=&\tau_{m}^r\frac{\partial}{\partial\hat{r}_{m}}g_{\text{in}}(\hat{r}_{m},\tau_{m}^r)
        \\
        \end{aligned}$ \\
        \ENDWHILE
    \end{algorithmic}
    \label{algorithm-gamp}
\end{algorithm}

\subsection{Discussions}
We have now derived an efficient algorithm to obtain an
approximate posterior distribution of $\boldsymbol{x}$ for
(\ref{surrogate-problem}). Specifically, the true marginal
posterior distribution of $x_m$ is approximated by a Gaussian
distribution
$\hat{p}(x_{m}|\boldsymbol{b},\hat{r}_{m},\tau_{m}^r)$ with its
mean and variance given by
(\ref{x-post-mean-1})--(\ref{x-post-var-1}) or
(\ref{x-post-mean-0})--(\ref{x-post-var-0}), depending on the
value of $\pi_m$. The joint posterior distribution
$p(\boldsymbol{x}|\boldsymbol{b})$ can be approximated as a
product of approximate marginal posterior distributions:
\begin{align}
p(\boldsymbol{x}|\boldsymbol{b})\approx
\hat{p}(\boldsymbol{x}|\boldsymbol{b})
=\prod\limits_{m=1}^{M}\hat{p}(x_{m}|\boldsymbol{b},\hat{r}_{m},\tau_{m}^r)
\end{align}
As indicated earlier, by setting $\boldsymbol{b}=\boldsymbol{0}$,
$\boldsymbol{\kappa}=\boldsymbol{y}_n$,
$\boldsymbol{\pi}=\boldsymbol{o}_n$, and
$\xi=\langle\gamma\rangle$, the posterior distribution
$\hat{p}(\boldsymbol{x}|\boldsymbol{b})$ obtained via the GAMP
algorithm can be used to approximate $q_x(\boldsymbol{x}_n)$ in
(\ref{x-update1}).

We see that to approximate $q_x(\boldsymbol{X})$ by using the
GAMP, we first need to perform a singular value decomposition
(SVD) of $\langle\boldsymbol{\Sigma}\rangle$, which has a
computational complexity of $\mathcal{O}(M^3)$. The GAMP algorithm
used to approximate $q_x(\boldsymbol{x}_n)$ involves very simple
matrix-vector multiplications which has a computational complexity
scaling as $\mathcal{O}(M^2)$. Therefore the overall computational
complexity for updating $q_x(\boldsymbol{X})$ is of order
$\mathcal{O}(M^2 N+M^3)$. In contrast, using
(\ref{mu-update})--(\ref{Q-update}) to update
$q_x(\boldsymbol{X})$ requires a computational complexity of
$\mathcal{O}(N M^3)$. Thus the GAMP technique can help achieve a
significant reduction in the computational complexity as compared
with a direct calculation of $q_x(\boldsymbol{X})$.

For clarity, the VB-GAMP algorithm for matrix completion is
summarized as

\begin{algorithm}
    \renewcommand{\algorithmicrequire}{\textbf{Input:}}
    \renewcommand\algorithmicensure {\textbf{Output:}}
    \caption{VB-GAMP Algorithm for Matrix Completion}
    \begin{algorithmic}[1]
        \REQUIRE $\boldsymbol{Y}$, $\boldsymbol{\Omega}$, $\nu$ and $\boldsymbol{W}$.
        \ENSURE $q_{x}(\boldsymbol{X})$, $q_{\Sigma}(\boldsymbol{\Sigma})$, and $q_{\gamma}(\gamma)$.
        \STATE Initialize $\langle\boldsymbol{X}\rangle$,
        $\langle\boldsymbol{\Sigma}\rangle$;
        \WHILE {not converge}
        \STATE Calculate singular value decomposition of $\langle\boldsymbol{\Sigma}\rangle$;
        \FOR{$n=1$ to $N$}
        \STATE Obtain an approximation of $q_x(\boldsymbol{x}_n)$ via Algorithm 2;
        \ENDFOR
        \STATE Update $q_{\Sigma}({\boldsymbol{\Sigma}})$ via (\ref{Sigma-update1});
        \STATE Update $q_{\gamma}(\gamma)$ via (\ref{gamma-update});
        \ENDWHILE
    \end{algorithmic}
    \label{algorithm-overall}
\end{algorithm}

The proposed method proceeds in a double-loop manner, the outer
loop calculate the variational posterior distributions
$q_{\gamma}(\gamma)$ and $q_{\Sigma}(\boldsymbol{\Sigma})$, and
the inner loop computes an approximation of $q_x(\boldsymbol{X})$.
It is noted that there is no need to wait until the GAMP
converges. Experimental results show that GAMP provides a reliable
approximation of $q_x(\boldsymbol{x}_n)$ even if only a few
iterations are performed. In our experiments, only one iteration
is used to implement GAMP.

%which promotes the low-rankness by minimizing the nuclear norm of
%the matrix

\section{Experiments} \label{sec:simulation-results}
In this section, we carry out experiments to illustrate the
performance of our proposed GAMP-assisted Bayesian matrix
completion method with hierarchical Gaussian priors (referred to
as BMC-GP-GAMP). Throughout our experiments, the parameters used
in our method are set to be $a=b=10^{-10}$ and $\nu=1$. Here we
choose a small $\nu$ in order to encourage a low-rank precision
matrix. We compare our method with several state-of-the-art
methods, namely, the variational sparse Bayesian learning method
(also referred to as VSBL) \cite{BabacanLuessi2012} which models
the low-rankness of the matrix as the structural sparsity of its
two factor matrices, the bilinear GAMP-based matrix completion
method (also referred to as BiGAMP-MC) \cite{ParkerSchniter2014b}
which implements the VSBL using bilinear GAMP, the inexact version
of the augmented Lagrange multiplier based matrix completion
method (also referred to ALM-MC) \cite{LinChen2010}, and a
low-rank matrix fitting method (also referred to as LMaFit)
\cite{WenYin12} which iteratively minimizes the fitting error and
estimates the rank of the matrix. It should be noted that both
VSBL and LMaFit require to set an over-estimated rank. Note that
we did not include \cite{XinWang2016} in our experiments due to
its prohibitive computational complexity when the matrix dimension
is large. Codes of our proposed algorithm along with other
competing algorithms are available at
\url{http://www.junfang-uestc.net/codes/LRMC.rar}, in which codes
of other competing algorithms are obtained from their respective
websites.

%because the proposed method \cite{XinWang2016} has a prohibitively
%high computational complexity. For the method proposed in
%\cite{XinWang2016}, it needs to carry out a matrix inverse
%operation of size $MN\times MN$ when imputes a matrix of size
%$M\times N$, which is unaffordable for us in our most experiments.

%Our experimental results show that the proposed method presents
%similar recovery performance with that shown in \cite{XinWang2016}
%when perform the experiments with same setting with that in
%\cite{XinWang2016}.

\begin{figure*}
    \centering
    \subfloat[]{
        \includegraphics [width=200pt]{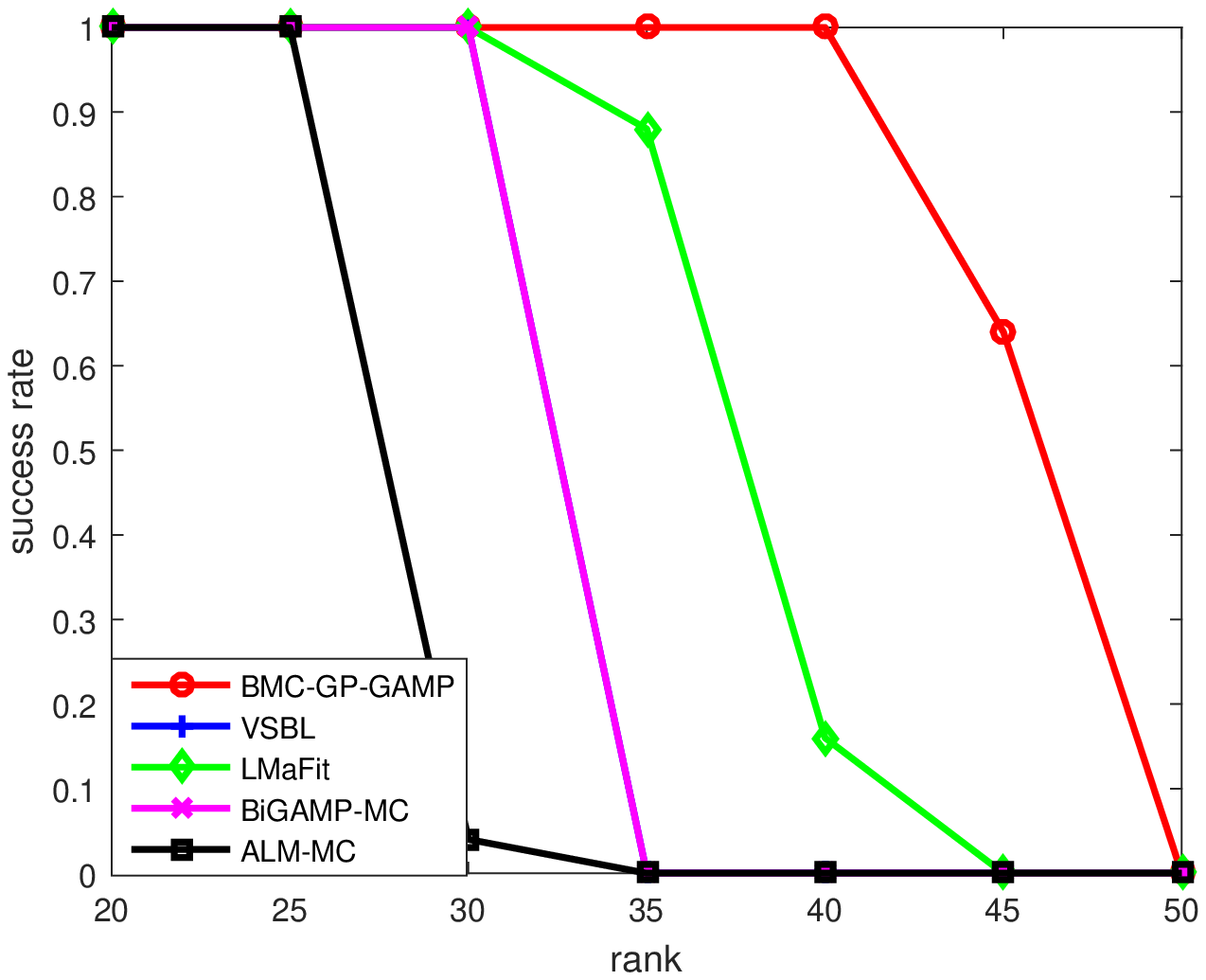}
    }
    \hspace{20pt}
    \subfloat[]{
        \includegraphics [width=200pt]{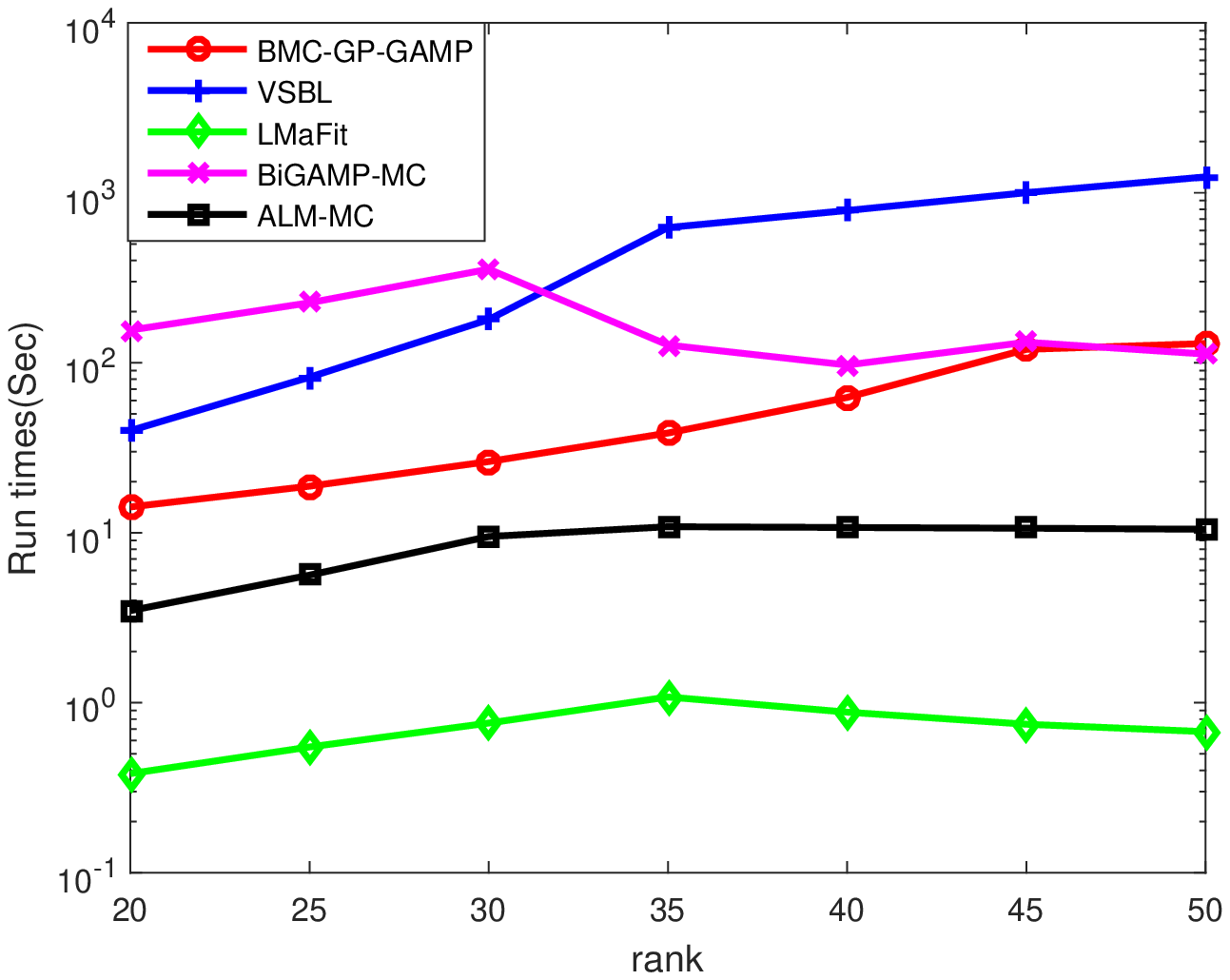}
    }
    \\
    \subfloat[]{
        \includegraphics [width=200pt]{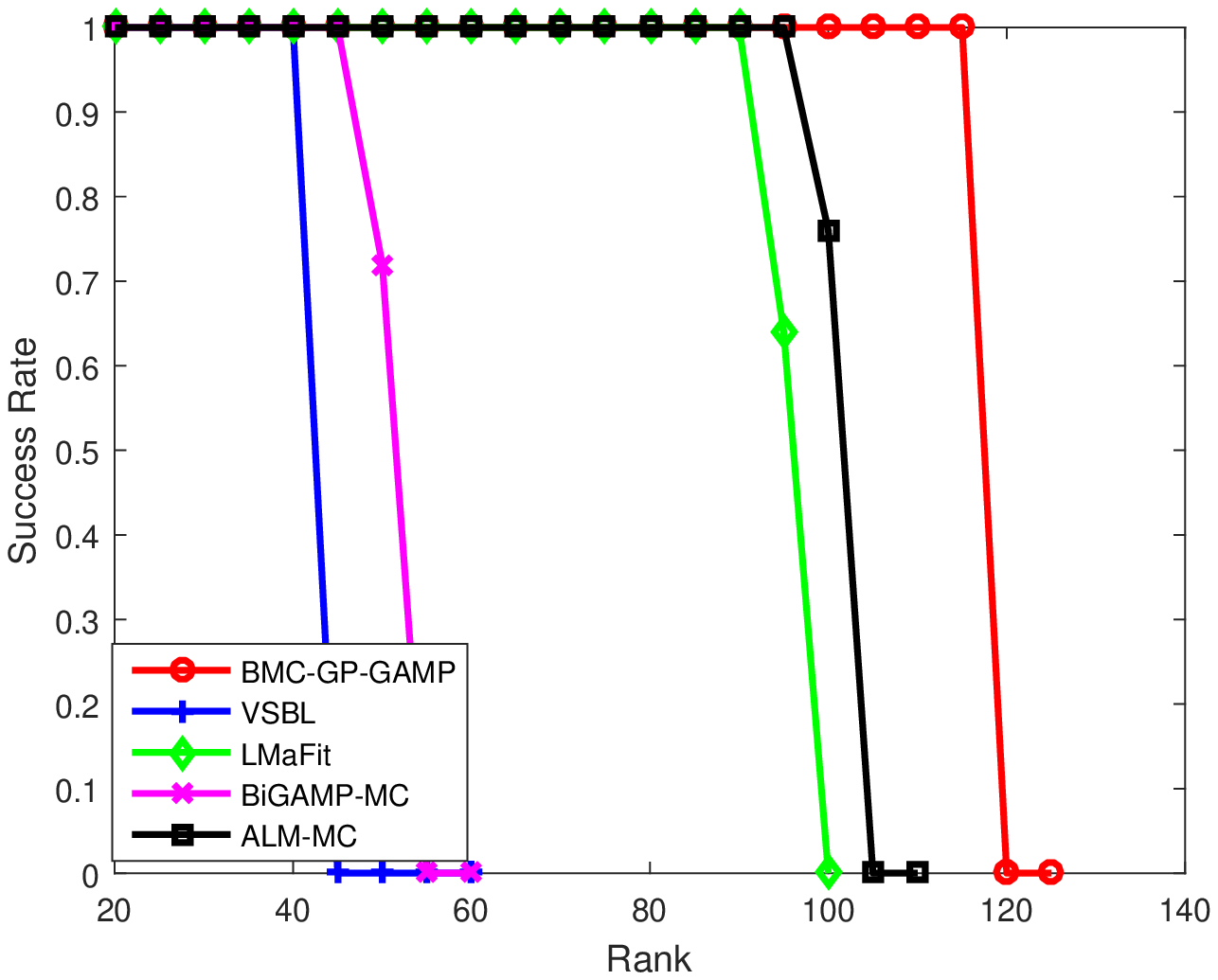}
    }
    \hspace{20pt}
    \subfloat[]{
        \includegraphics [width=200pt]{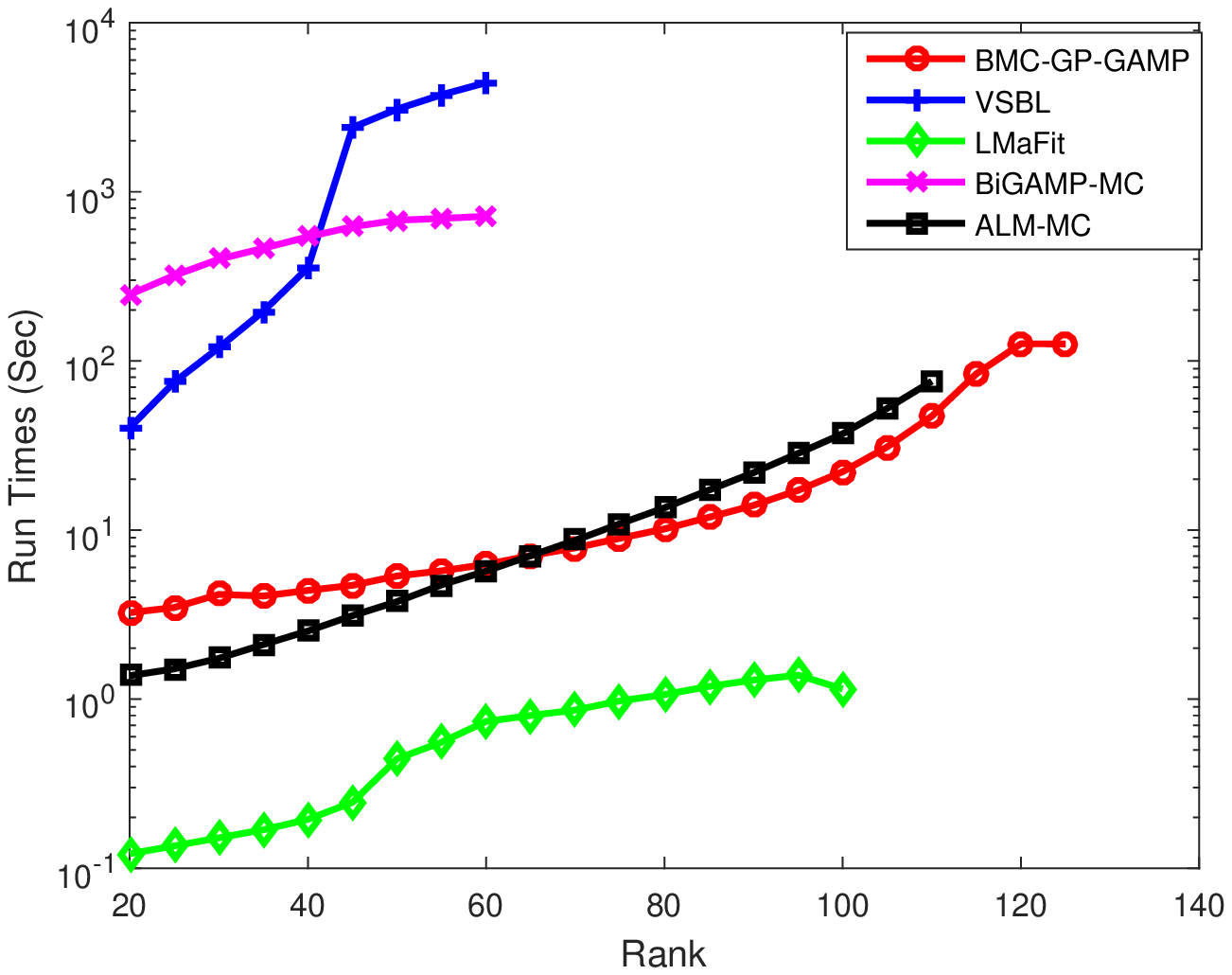}
    }
    \caption{Synthetic data: (a) Success rates vs.
    the rank of the matrix, $\rho=0.2$; (b) Run times vs. the rank of the matrix,
    $\rho=0.2$;
    (c) Success rates vs. the rank of the matrix $\rho=0.5$; (d) Run times vs. the rank of the matrix, $\rho=0.5$}
    \label{fig:succ-rate}
\end{figure*}

\subsection{Synthetic data}
We first examine the effectiveness of our proposed method on
synthetic data. We generate the test rank-$k$ matrix
$\boldsymbol{X}$ of size $500\times500$ by multiplying
$\boldsymbol{A}\in\mathbb{R}^{500\times k}$ by the transpose of
$\boldsymbol{B}\in\mathbb{R}^{500\times k}$, i.e.
$\boldsymbol{X}=\boldsymbol{A}\boldsymbol{B}^T$. All the entries
of $\boldsymbol{A}$ and $\boldsymbol{B}$ are sampled from a normal
distribution. We consider the scenarios where $20\%$ ($\rho=0.2$)
and $50\%$ ($\rho=0.5$) entries of $\boldsymbol{X}$ are observed.
Here $\rho$ denotes the sampling ratio. The success rates as well
as the run times of respective algorithms as a function of the
rank of $\boldsymbol{X}$, i.e. $k$, are plotted in Fig.
\ref{fig:succ-rate} Results are averaged over 25 independent
trials. A trial is considered to be successful if the relative
error is smaller than $10^{-2}$, i.e.
$||\boldsymbol{X}-\boldsymbol{\hat{X}}||_F/||\boldsymbol{X}||_F<10^{-2}$,
where $\boldsymbol{\hat{X}}$ denotes the estimated matrix. For our
proposed method, the matrix parameter $\boldsymbol{W}$ is set to
$10^{10}\boldsymbol{I}$. The pre-defined overestimated rank for
VSBL and LMaFit is set to be twice the true rank. For the case
$\rho=0.2$, VSBL and BiGAMP-MC present the same recovery
performance with their curves overlapping each other. From Fig.
\ref{fig:succ-rate}, we can see that
\begin{itemize}
    \item[1)] Our proposed method presents the best performance
    for both sampling ratio cases. Meanwhile, it has a moderate computational complexity.
    When the sampling ratio is
    set to 0.5, our proposed method has a run time similar to the
    ALM-MC method, while provides a clear performance improvement
    over the ALM-MC.
    \item[2)] The LMaFit method is the most computationally
    efficient. But its performance is not as good as our proposed
    method.
    \item[3)] The proposed method outperforms the other two Bayesian methods, namely, the
    VSBL and the BiGAMP-MC, by a big margin in terms of
    both recovery accuracy and computational complexity. Since the BiGAMP cannot automatically
    determine the matrix rank, it needs to try all possible values of the rank,
    which makes running the BiGAMP-MC time costly.
\end{itemize}

%although there are no matrix inverse operations involved.

%\item[4)] As the rank increase or the sampling rate decrease, the proposed method
%will need more iterations to converge.

%as low as (or slightly higher than) the ALM-MC method
%achieves a substantially lower computational complexity as
%compared with the VSBL and the BiGAMP methods

\begin{table}
    \centering
    \caption{Error Rate for Chr22 Dataset}
    \begin{tabular}{ccc}
        \hline
        \hline
        & $20\%$ & $50\%$\\
        \hline
        BMC-GP-GAMP & 0.0567 & $\textbf{0.0233}$ \\
        \hline
        VSBL & 0.0587 & 0.0249 \\
        LMaFit & 0.2525 & 0.2472 \\
        BiGAMP-MC & 0.0573 & 0.0282 \\
        ALM-MC & $\textbf{0.0550}$ & 0.0246 \\
        \hline
    \end{tabular}
    \label{table1}
\end{table}

\subsection{Gene data}
We carry out experiments on gene data for genotype estimation. The
dataset \cite{JiangMa2016} is a matrix of size $790\times112$
provided by Wellcome Trust Case Control Consortium (WTCCC) and
contains the genetic information from chromosome 22. The dataset,
which is referred to as ``Chr22'', has been shown in
\cite{JiangMa2016} to be approximately low-rank. We randomly
select $20\%$ or $50\%$ of the entries of the dataset as
observations, and recover the rest entries using low-rank matrix
completion methods. Again, for our proposed method, the matrix
parameter $\boldsymbol{W}$ is set to $10^{10}\boldsymbol{I}$. The
pre-defined ranks used for VSBL and LMaFit are both set to $100$.
Following \cite{JiangMa2016}, we use a metric termed as
``allelic-imputation error rate'' to evaluate the performance of
respective methods. The error rate is defined as
\begin{align}
\text{Error
Rate}=\frac{\text{nnz}(|\boldsymbol{X}-\text{round}(\hat{\boldsymbol{X}})|)}{T}
\end{align}
where $\boldsymbol{X}$ and $\hat{\boldsymbol{X}}$ denotes the true
and the estimated matrices, respectively, the operation
$\text{round}(\boldsymbol{X})$ returns a matrix with each entry of
$\boldsymbol{X}$ rounded to its nearest integer,
$\text{nnz}(\boldsymbol{X})$ counts the number of non-zero entries
of $\boldsymbol{X}$, and $T$ denotes the number of unobserved
entries. We report the average error rates of respective
algorithms in Table \ref{table1}. From Table \ref{table1}, we see
that all methods, except the LMaFit method, present similar
results and the proposed method slightly outperforms other methods
when $50\%$ entries are observed. Despite the superior performance
on synthetic data, the LMaFit method incurs large estimation
errors for this dataset.

\begin{table}
    \centering
    \caption{NMAE for 100k Movielens Dataset}
    \begin{tabular}{ccc}
        \hline
        \hline
        & $20\%$ & $50\%$\\
        \hline
        BMC-GP-GAMP & $\textbf{0.1931}$ & 0.1851 \\
        \hline
        VSBL & 0.2004 & $\textbf{0.1847}$ \\
        LMaFit & 0.2677 & 0.2354 \\
        BiGAMP-MC & 0.2009 & 0.1856 \\
        ALM-MC & 0.2002 & 0.1893 \\
        \hline
    \end{tabular}
    \label{table2}
\end{table}

\subsection{Collaborative Filtering}
In this experiment, we study the performance of respective methods
on the task of collaborative filtering. We use the MovieLens 100k
dataset\footnote{Available at \url{http://www.grouplens.org/node/73/}},
which consists of $10^5$ ratings ranging from 1 to 5 on
1682 movies from 943 users. The ratings can form a matrix of size
$943\times 1682$. We randomly choose $20\%$ or $50\%$ of available
ratings as training data, and predict the rest ratings using
respective matrix completion methods. The matrix parameter
$\boldsymbol{W}$ used in the proposed method is set to
$10^{10}\boldsymbol{I}$. The pre-defined ranks used for VSBL and
LMaFit are both set to $100$. The performance is evaluated by the
normalized mean absolute error (NMAE), which is calculated as
\begin{align}
\text{NMAE}=\frac{\sum_{(i,j)\in
S}|x_{ij}-\hat{x}_{ij}|}{(r_{\text{max}}-r_{\text{min}})|S|}
\end{align}
where $S$ is a set containing the indexes of those unobserved
available ratings, $r_{\text{max}}$ and $r_{\text{min}}$ denote
the maximum and minimum ratings, respectively. The results of NMAE
are shown in Table \ref{table2}, from which we see that the
proposed method achieves the most accurate rating prediction when
the number of observed ratings is small.

%when the sampling rate is high, most methods present similar NMAE
%results.

\begin{table*}
    \centering
    \caption{Image Inpainting (PSNR/SSIM)}
    \begin{tabular}{ccccc}
        \hline
        \hline
        &\multicolumn{2}{c}{Monarch}&\multicolumn{2}{c}{Lena}\\
        \cmidrule(lr){2-3}
        \cmidrule(lr){4-5}
        & $30\%$ & $50\%$ &  $20\%$ & $30\%$\\
        \hline
        BMC-GP-GAMP-I & 19.3328/0.4942 & 23.8965/0.7066 & 23.4235/0.4946 & 25.6866/0.5991 \\
        BMC-GP-GAMP-II & 20.8628/$\textbf{0.6960}$ & $\textbf{25.6955}$/$\textbf{0.8705}$ &
        $\textbf{25.3337}$/$\textbf{0.7479}$ & $\textbf{28.0434}$/$\textbf{0.8328}$ \\
        BMC-GP-GAMP-III & $\textbf{21.1797}$/0.6710 & 25.6665/0.8422 & 25.2876/0.7388 & 27.9684/0.8213 \\
        \hline
        VSBL & 16.5478/0.3625 & 19.9965/0.5468 & 21.5168/0.4999 & 23.8180/0.5964 \\
        LMaFit & 17.6527/0.4018 & 19.2834/0.5117 & 21.5525/0.4494 & 22.3862/0.5504 \\
        BiGAMP-MC & 18.9154/0.4618 & 22.3883/0.6338 & 22.6557/0.4869 & 24.8173/0.5997 \\
        ALM-MC & 19.6250/0.5149 & 23.6854/0.7253 & 23.1028/0.5164 & 25.5310/0.6433 \\
        \hline
    \end{tabular}
    \label{table3}
\end{table*}

%As discussed previously, such a setting will promote the
%low-rankness of the signal.

%The pre-specified overestimated rank for VSBL and LMaFit is set to
%$50$ for Butterfly with $30\%$ observed ratio and for Lena with
%$20\%$ observed ratio, and $120$ for Butterfly with $50\%$
%observed ratio and for Lena with $30\%$ observed ratio.

\subsection{Image Inpainting}
Lastly, we evaluate the performance of different methods on image
inpainting. The objective of image inpainting is to complete an
image with missing pixels. We conduct experiments on the benchmark
images Butterfly and Lena, which are of size $256\times 256$ and
$512\times 512$, respectively. In our experiments, we examine the
performance of our proposed method under different choices of
$\boldsymbol{W}$. As usual, we can set
$\boldsymbol{W}=10^{10}\boldsymbol{I}$. Such a choice of
$\boldsymbol{W}$ is referred to as BMC-GP-GAMP-I. We can also set
$\boldsymbol{W}$ according to (\ref{W-1}) and (\ref{W-2}), which
are respectively referred to as BMC-GP-GAMP-II and
BMC-GP-GAMP-III. The parameters $\hat{\epsilon}$ and $\theta$ in
(\ref{W-2}) are set to $10^{-6}$ and $\sqrt{3}$, respectively. As
discussed earlier in our paper, the latter two choices exploit
both the low-rankness and the smoothness of the signal. For the
Butterfly image, we consider cases where $30\%$ and $50\%$ of
pixels in the image are observed. For the Lena image, we consider
cases where $20\%$ and $40\%$ of pixels are observed. We report
the peak signal to noise ratio (PSNR) as well as the structural
similarity (SSIM) index of each algorithm in Table \ref{table3}.
The original image with missing pixels and these images
reconstructed by respective algorithms are shown in Fig.
\ref{fig:monarch-03}, \ref{fig:monarch-05}, \ref{fig:lena-02}, and
\ref{fig:lena-03}. From Table \ref{table3}, we see that with a
common choice of $\boldsymbol{W}=10^{10}\boldsymbol{I}$, our
proposed method, BMC-GP-GAMP-I, outperforms other methods in most
cases. When $\boldsymbol{W}$ is more carefully devised, our
proposed method, i.e. BMC-GP-GAMP-II and BMC-GP-GAMP-III,
surpasses other methods by a substantial margin in terms of both
PSNR and SSIM metrics. This result indicates that a careful choice
of $\boldsymbol{W}$ that captures both the low-rank structure as
well as the smoothness of the latent matrix can help substantially
improve the recovery performance. From the reconstructed images,
we also see that our proposed method, especially BMC-GP-GAMP-II
and BMC-GP-GAMP-III, provides the best visual quality among all
these methods.

%\begin{figure*}[!t]
% \centering
%\begin{tabular}{cccc}
%\hspace*{-5ex}
%\includegraphics[width=4.5cm]{monarch-ori-03c} &
%\hspace*{-5ex}
%\includegraphics[width=4.5cm]{monarch-bmc-03-1c} &
%\hspace*{-5ex}
%\includegraphics[width=4.5cm]{monarch-bmc-03-2c}
%& \hspace*{-5ex}
%\includegraphics[width=4.5cm]{monarch-bmc-03-3c} \\
%\hspace*{-5ex}
%\includegraphics[width=4.5cm]{monarch-vsbl-03c} &
%\hspace*{-0ex}
%\includegraphics[width=4.5cm]{monarch-LMaFit-03c} &
%\hspace*{-0ex}
%\includegraphics[width=4.5cm]{monarch-bigamp-03c}
%& \hspace*{-0ex}
%\includegraphics[width=4.5cm]{monarch-alm-03c}
%\end{tabular}
%  \caption{Top row (from left to right): observed Butterfly image with missing pixels ($\rho=0.3$),
%    images recovered by BMC-GP-GAMP-I,
%    BMC-GP-GAMP-II, and BMC-GP-GAMP-III, respectively. Bottom row (from left to right):
%    images recovered by VSBL, LMaFit, BiGAMP-MC, and ALM-MC, respectively.}
%   \label{fig:monarch-03}
%\end{figure*}

\begin{figure*}[t]
    \centering
    \includegraphics [width=115pt]{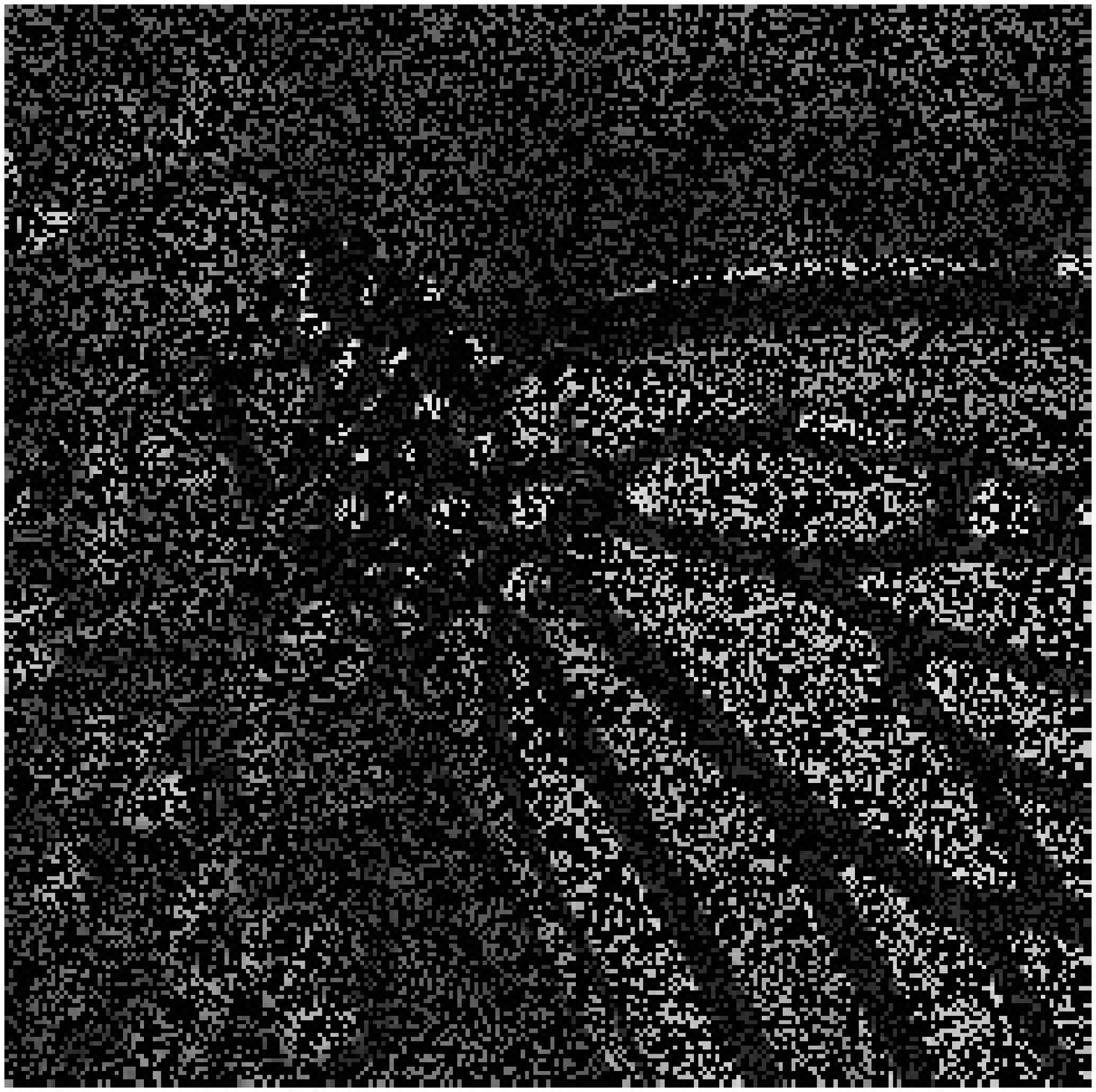}
    \includegraphics [width=115pt]{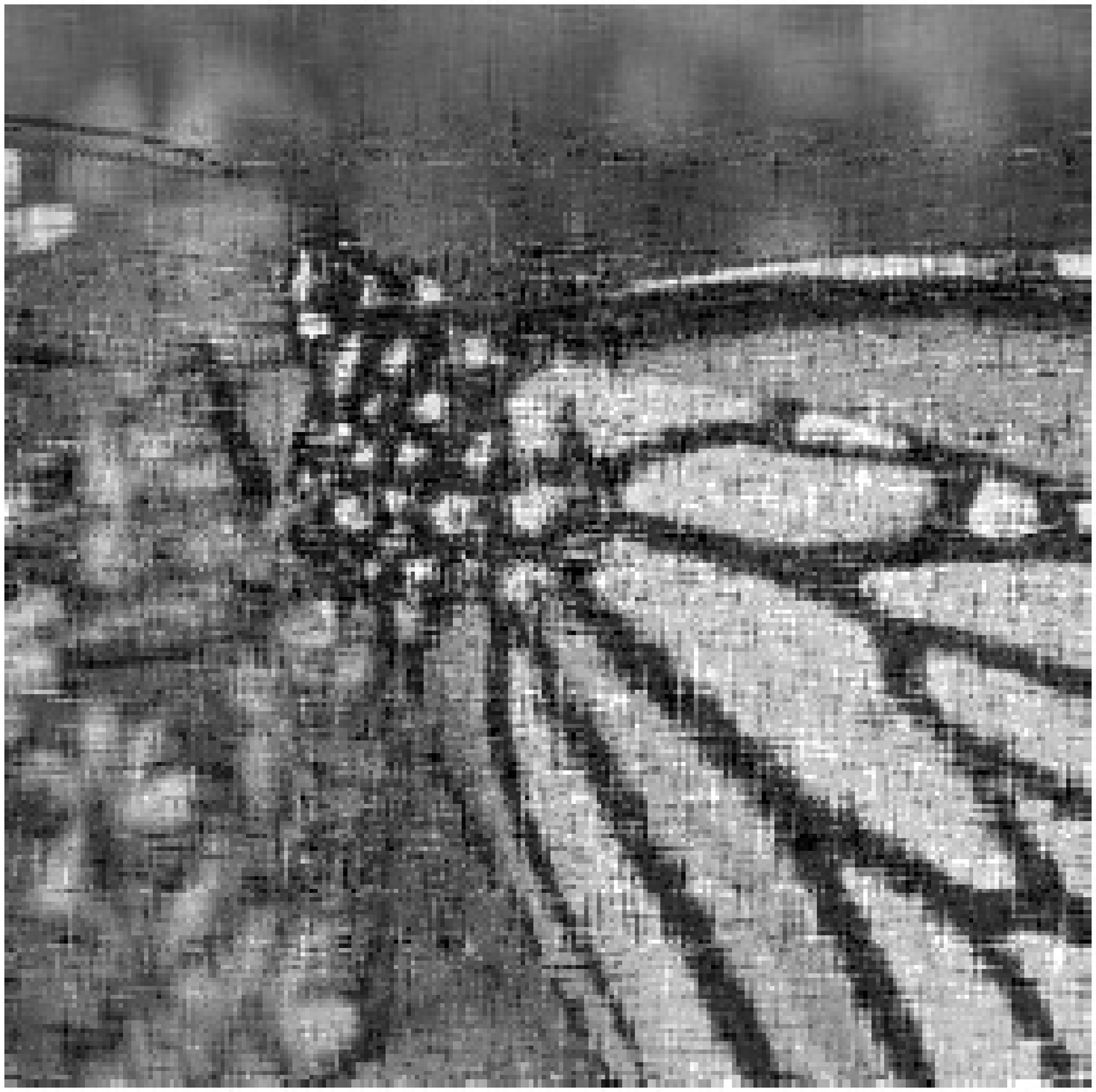}
    \includegraphics [width=115pt]{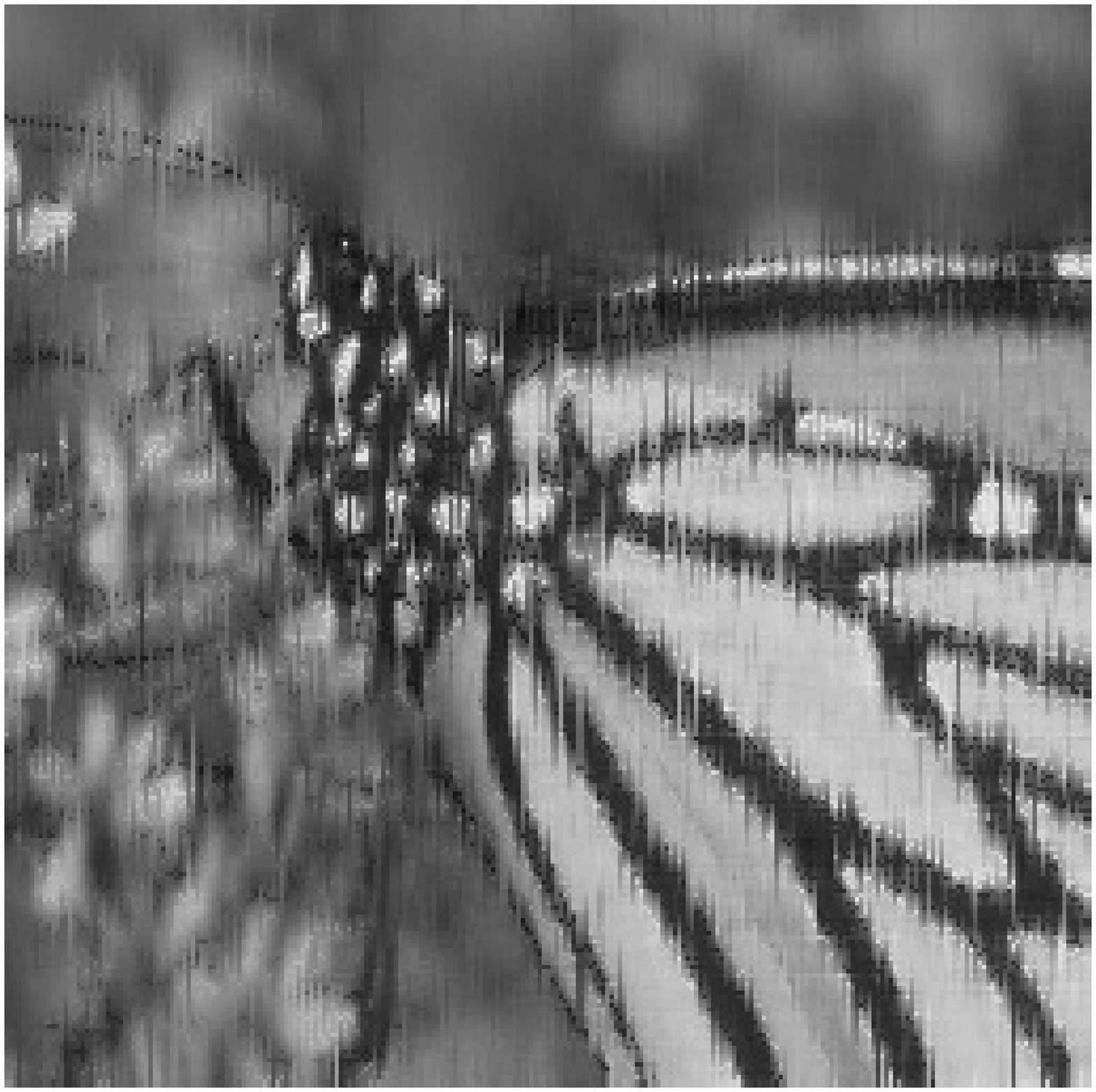}
    \includegraphics [width=115pt]{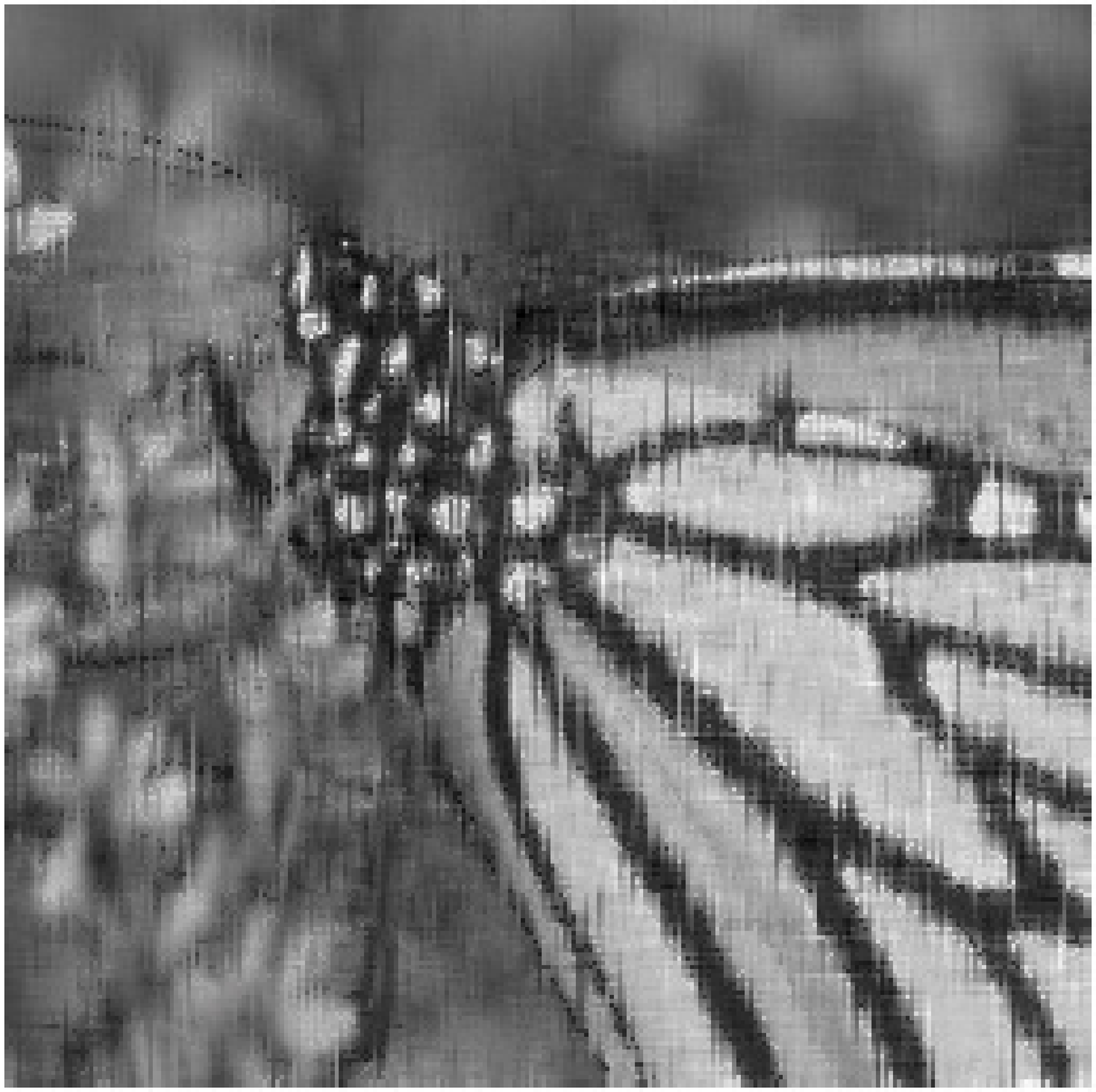}\\
    \includegraphics [width=115pt]{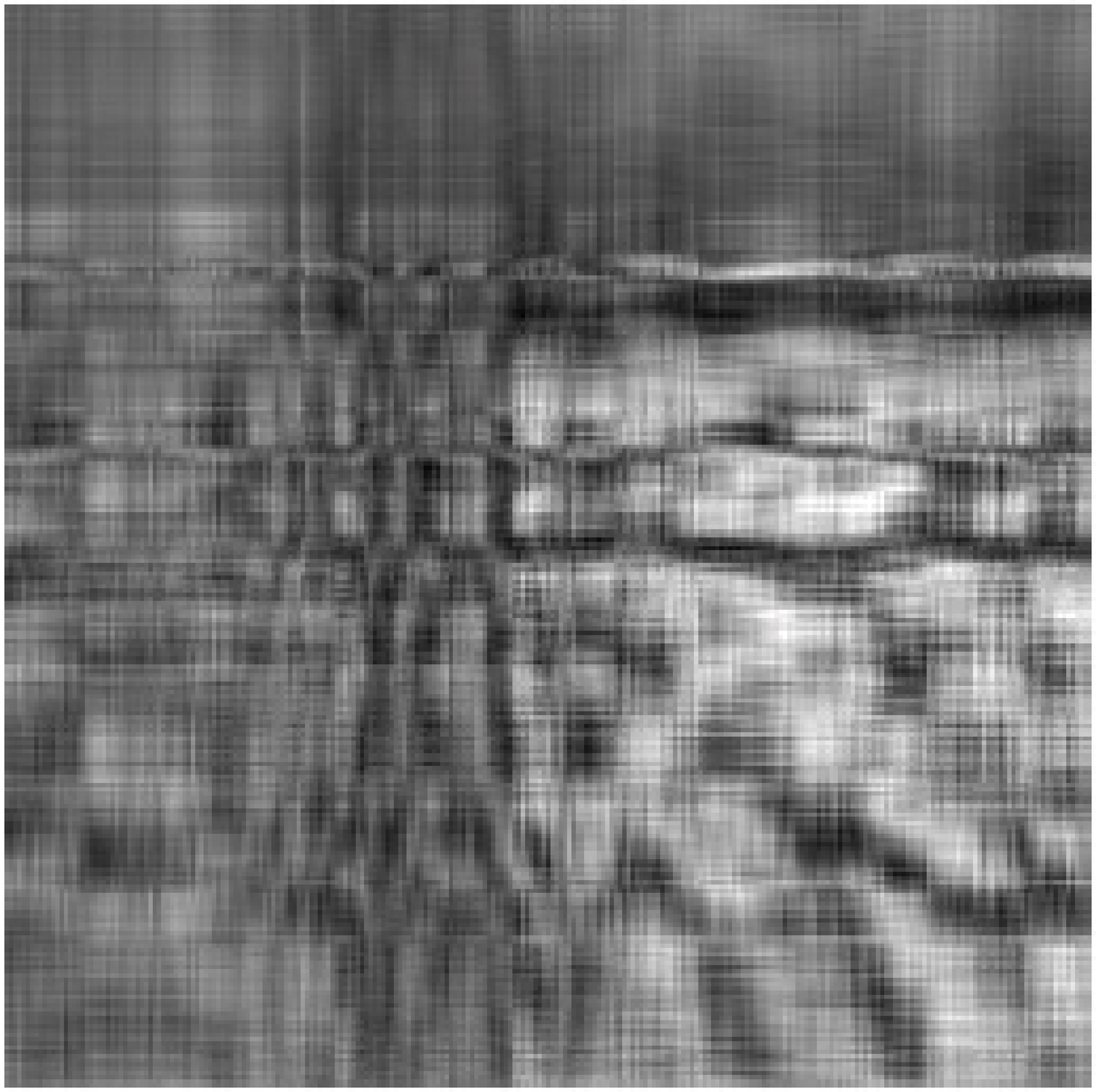}
    \includegraphics [width=115pt]{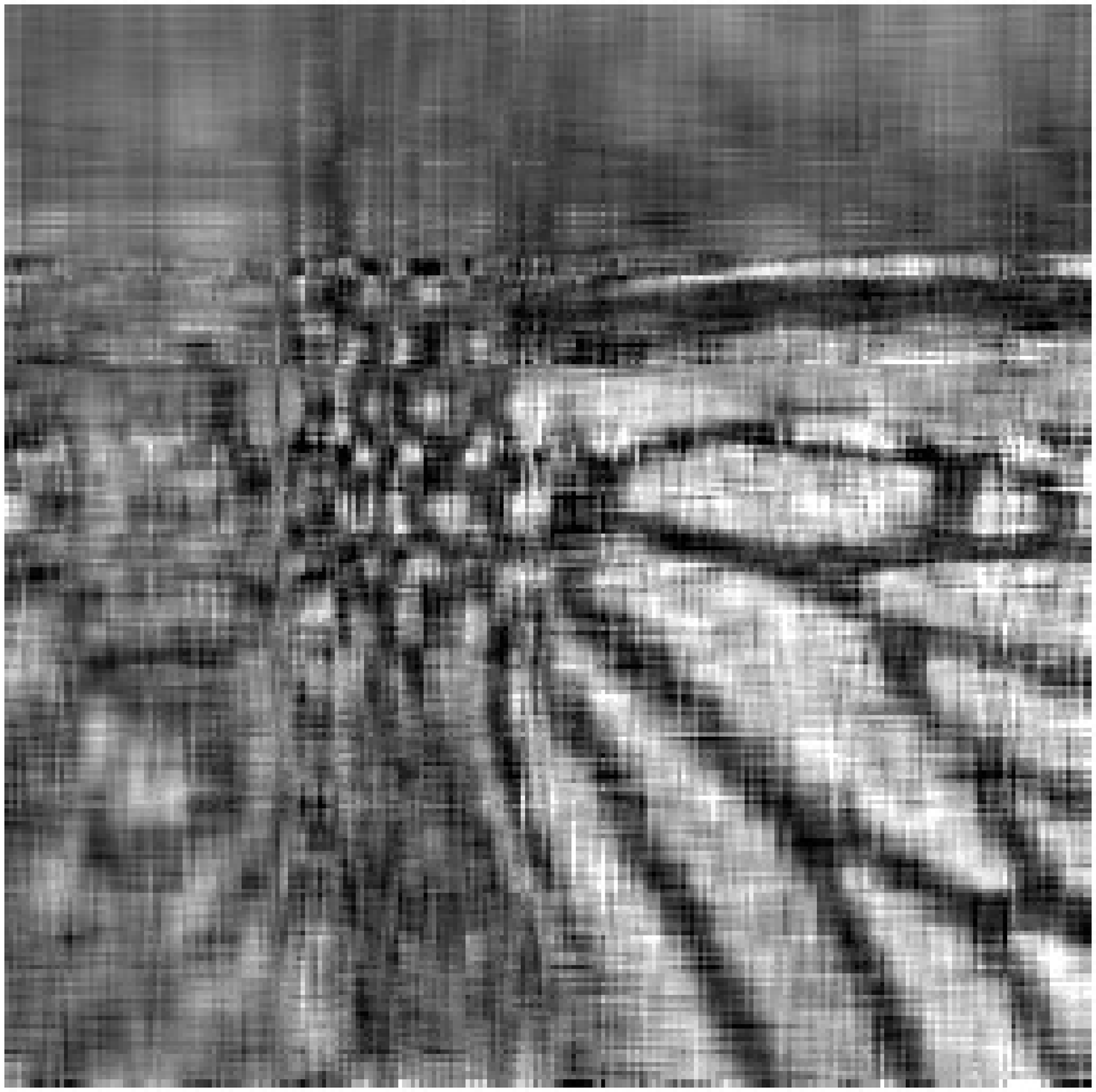}
    \includegraphics [width=115pt]{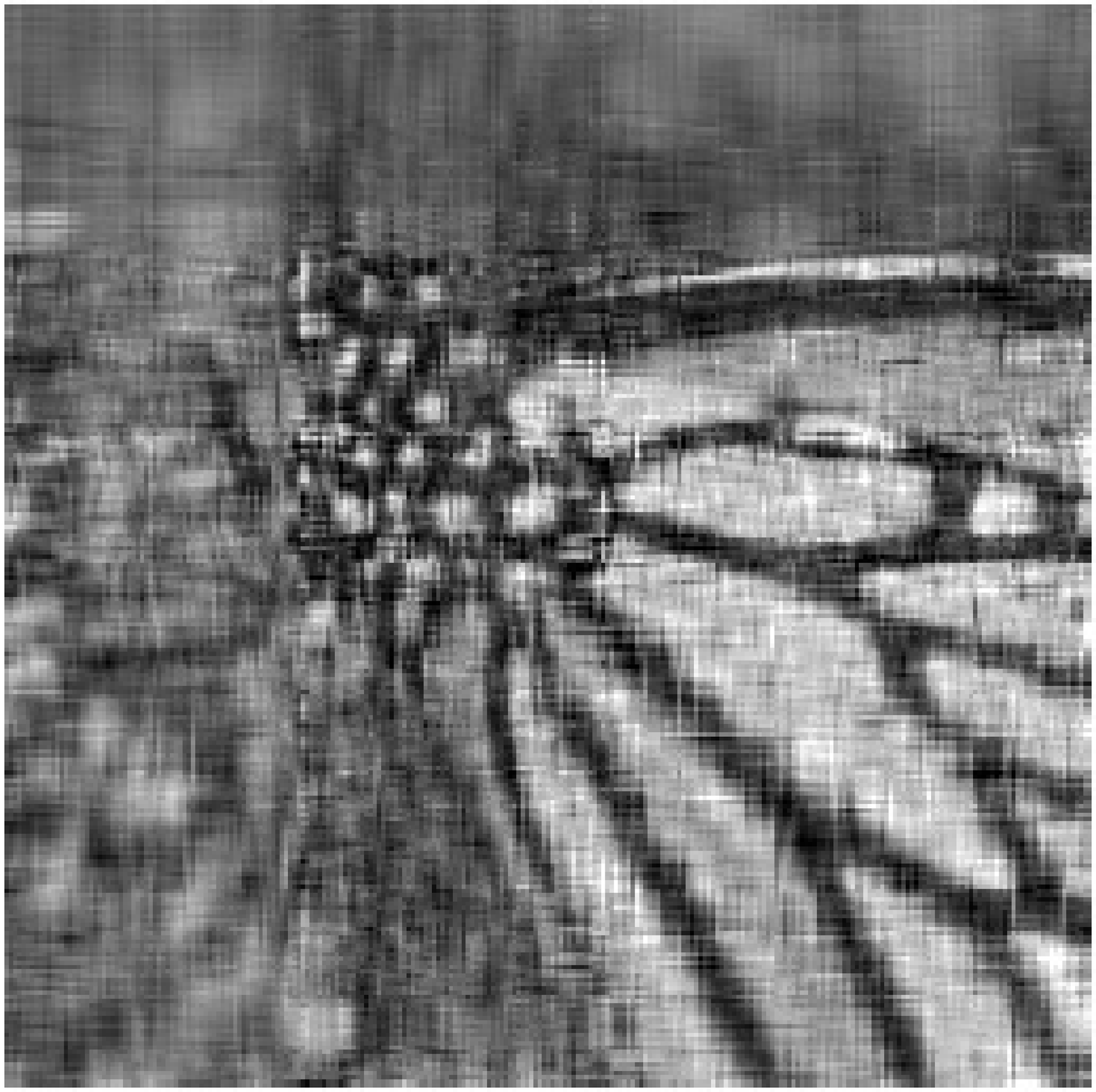}
    \includegraphics [width=115pt]{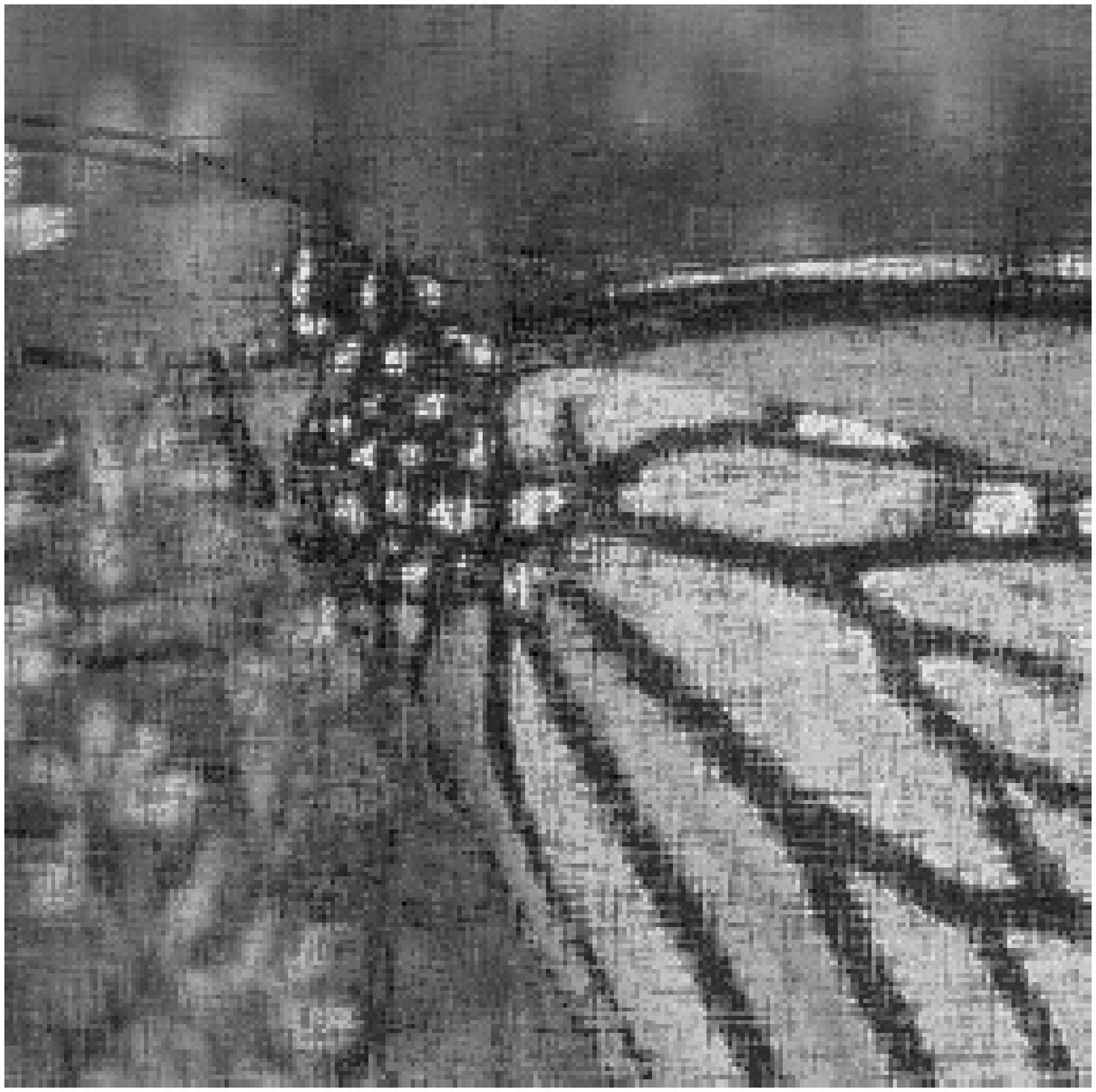}\\
    \caption{
    Top row (from left to right): observed Butterfly image with missing pixels ($\rho=0.3$),
    images recovered by BMC-GP-GAMP-I,
    BMC-GP-GAMP-II, and BMC-GP-GAMP-III, respectively. Bottom row (from left to right):
    images recovered by VSBL, LMaFit, BiGAMP-MC, and ALM-MC, respectively.}
    \label{fig:monarch-03}
\end{figure*}

\begin{figure*}[t]
    \centering
    \includegraphics [width=115pt]{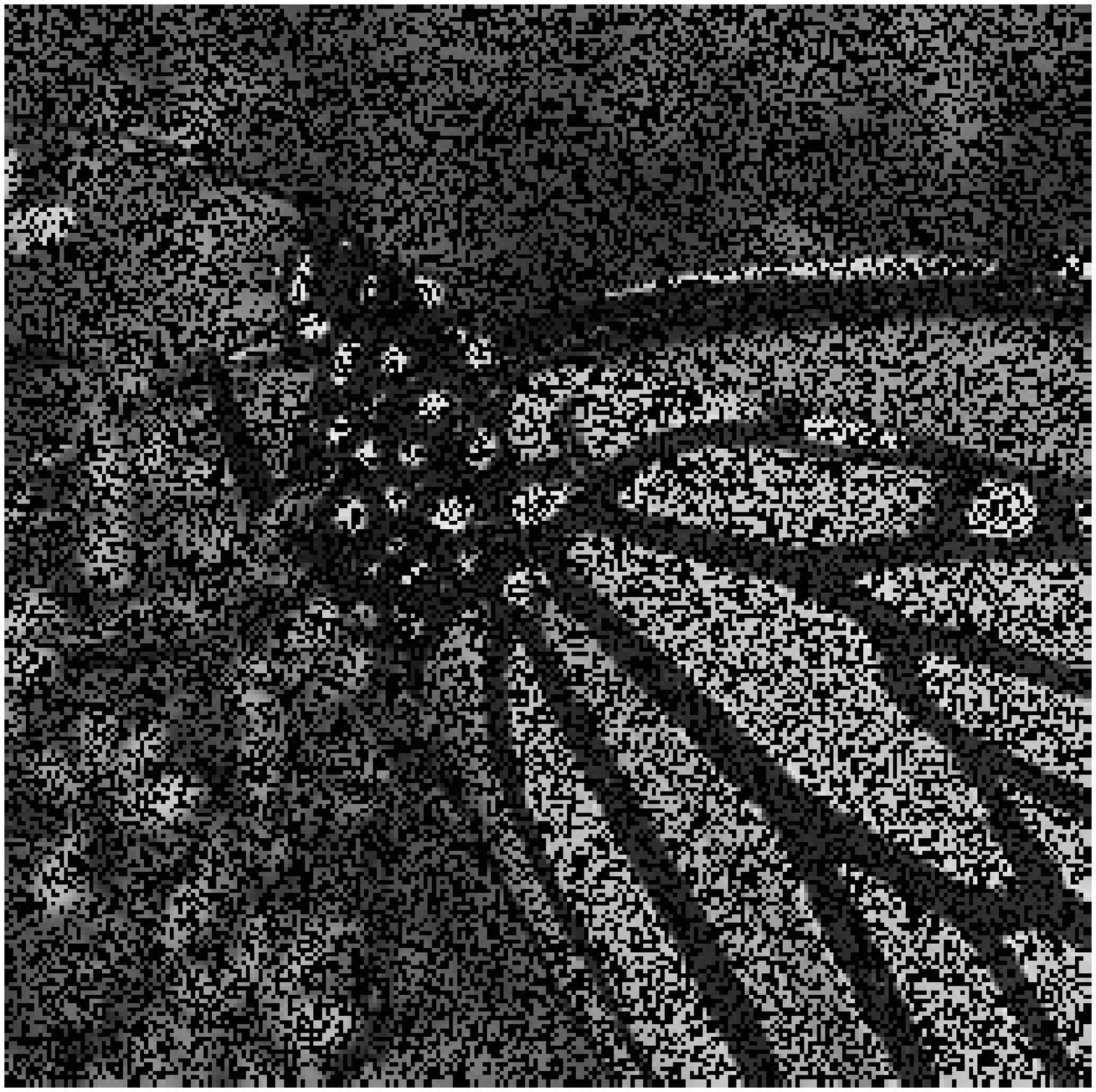}
    \includegraphics [width=115pt]{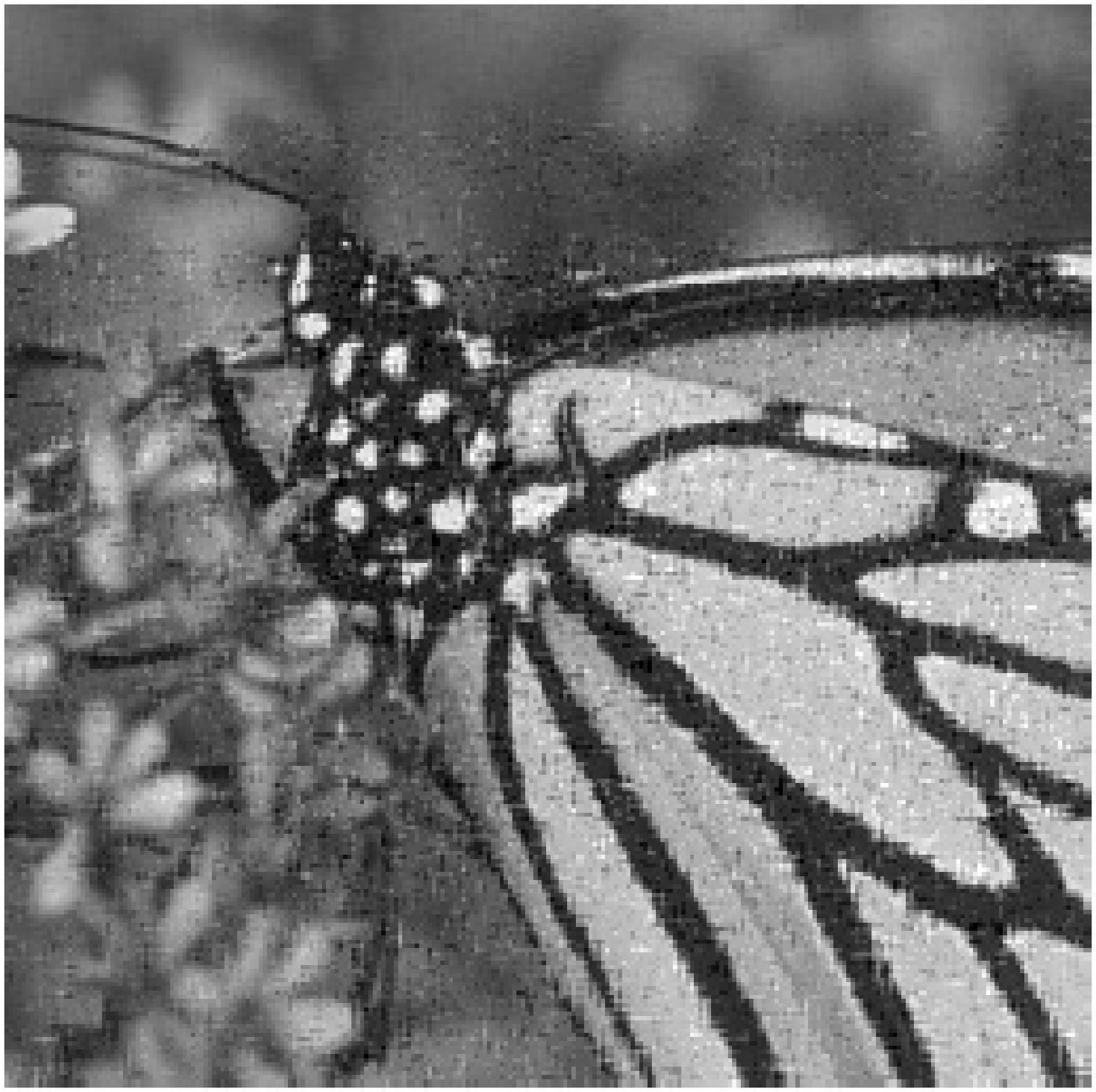}
    \includegraphics [width=115pt]{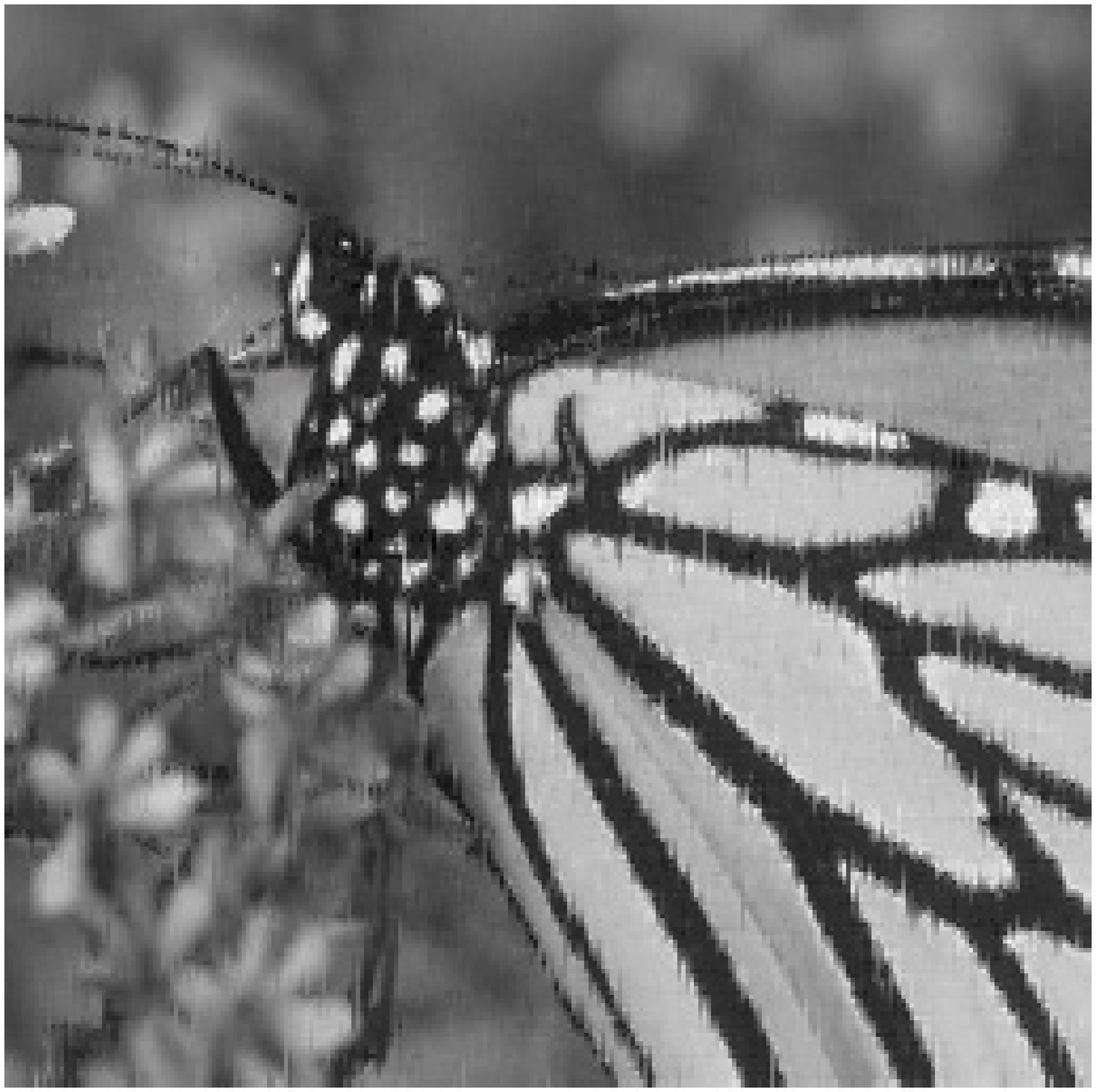}
    \includegraphics [width=115pt]{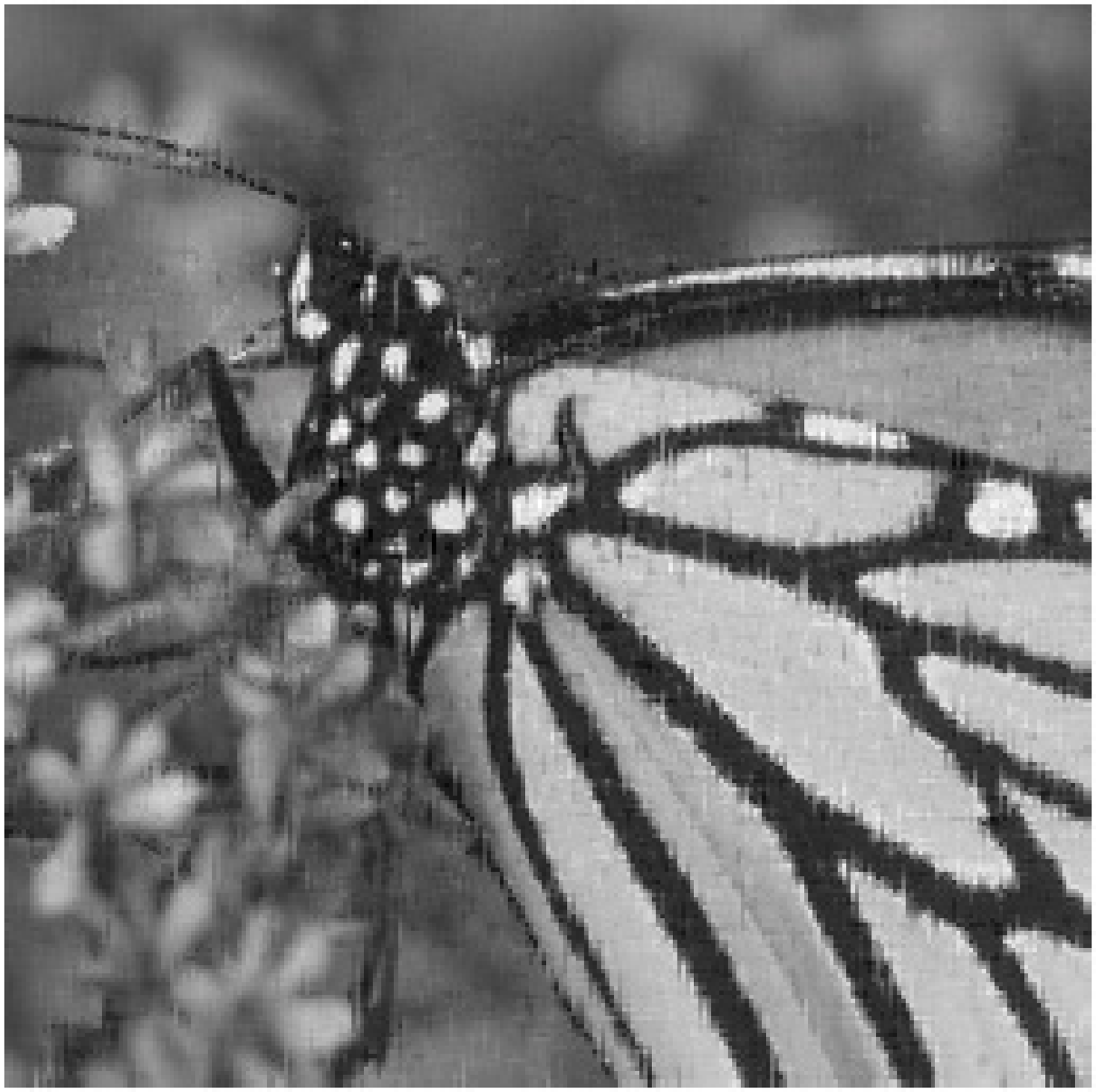}\\
    \includegraphics [width=115pt]{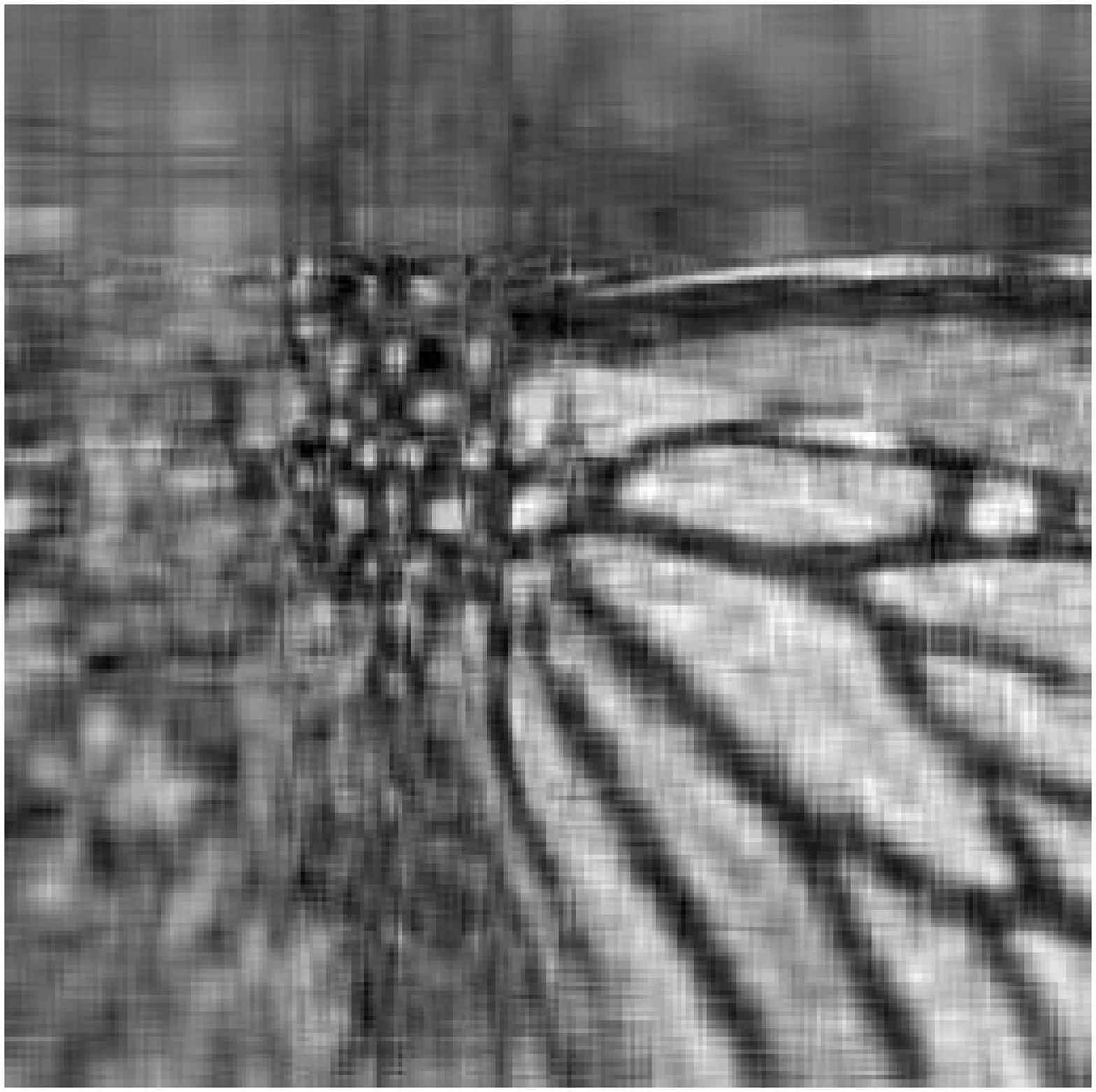}
    \includegraphics [width=115pt]{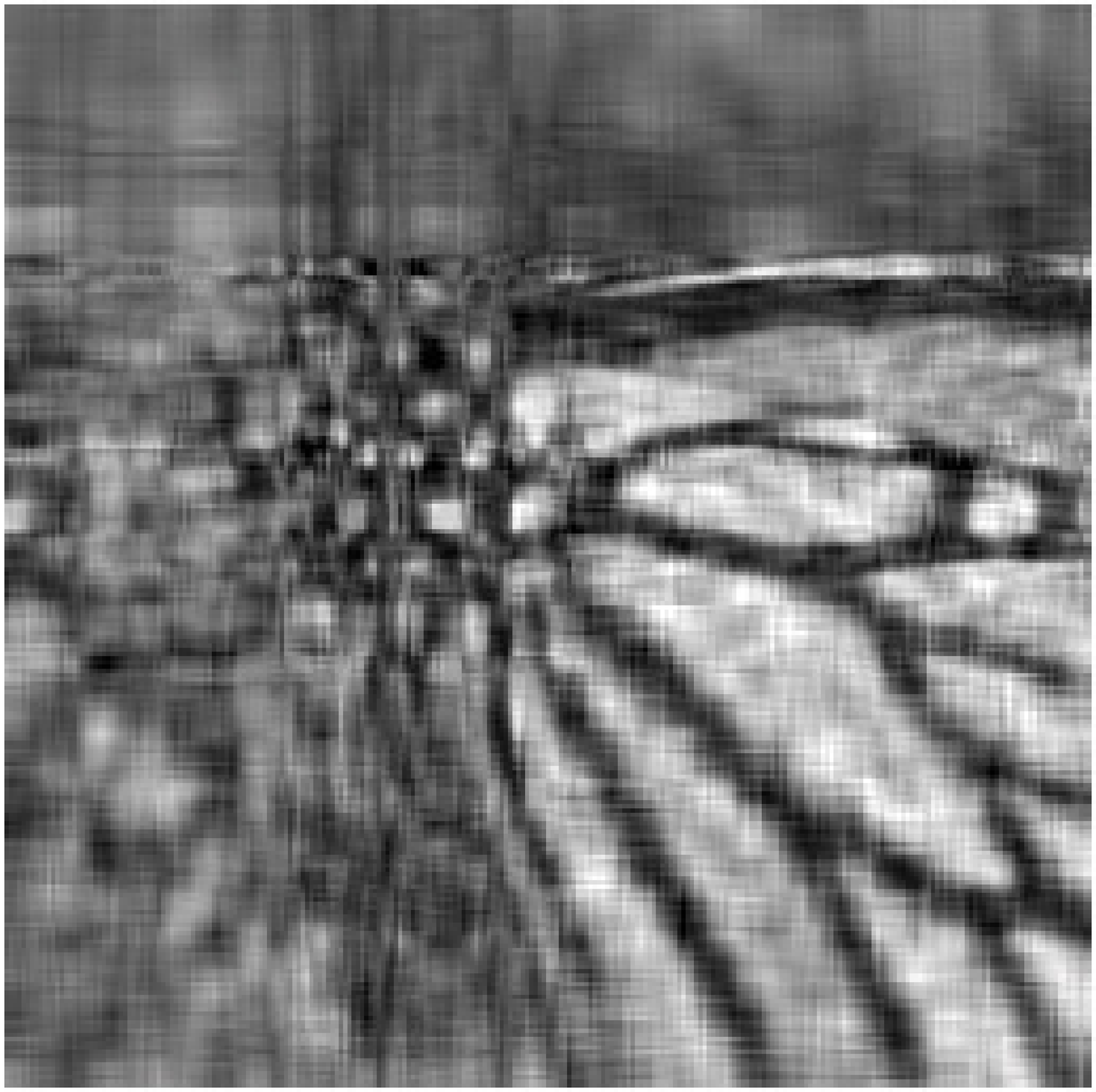}
    \includegraphics [width=115pt]{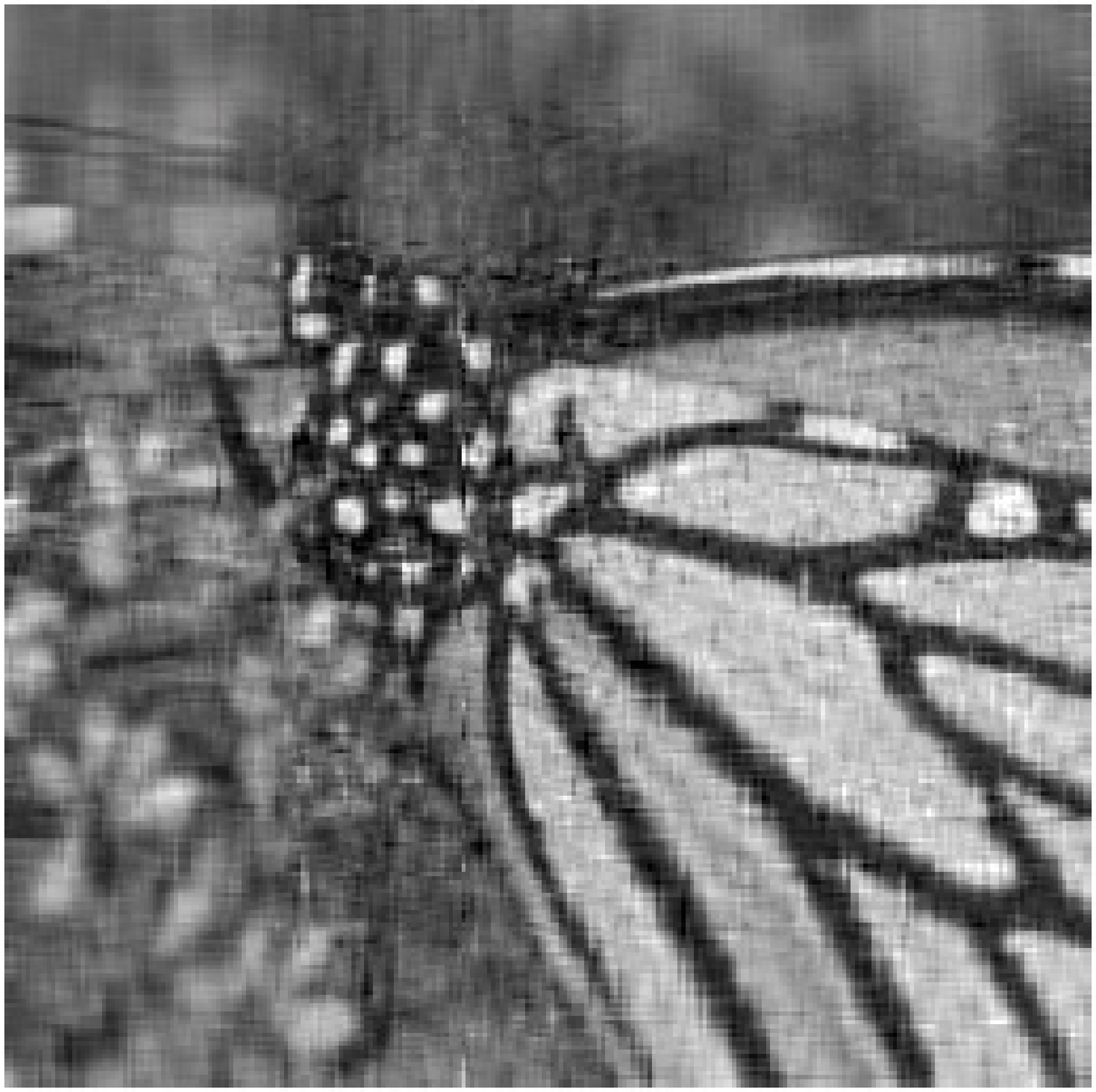}
    \includegraphics [width=115pt]{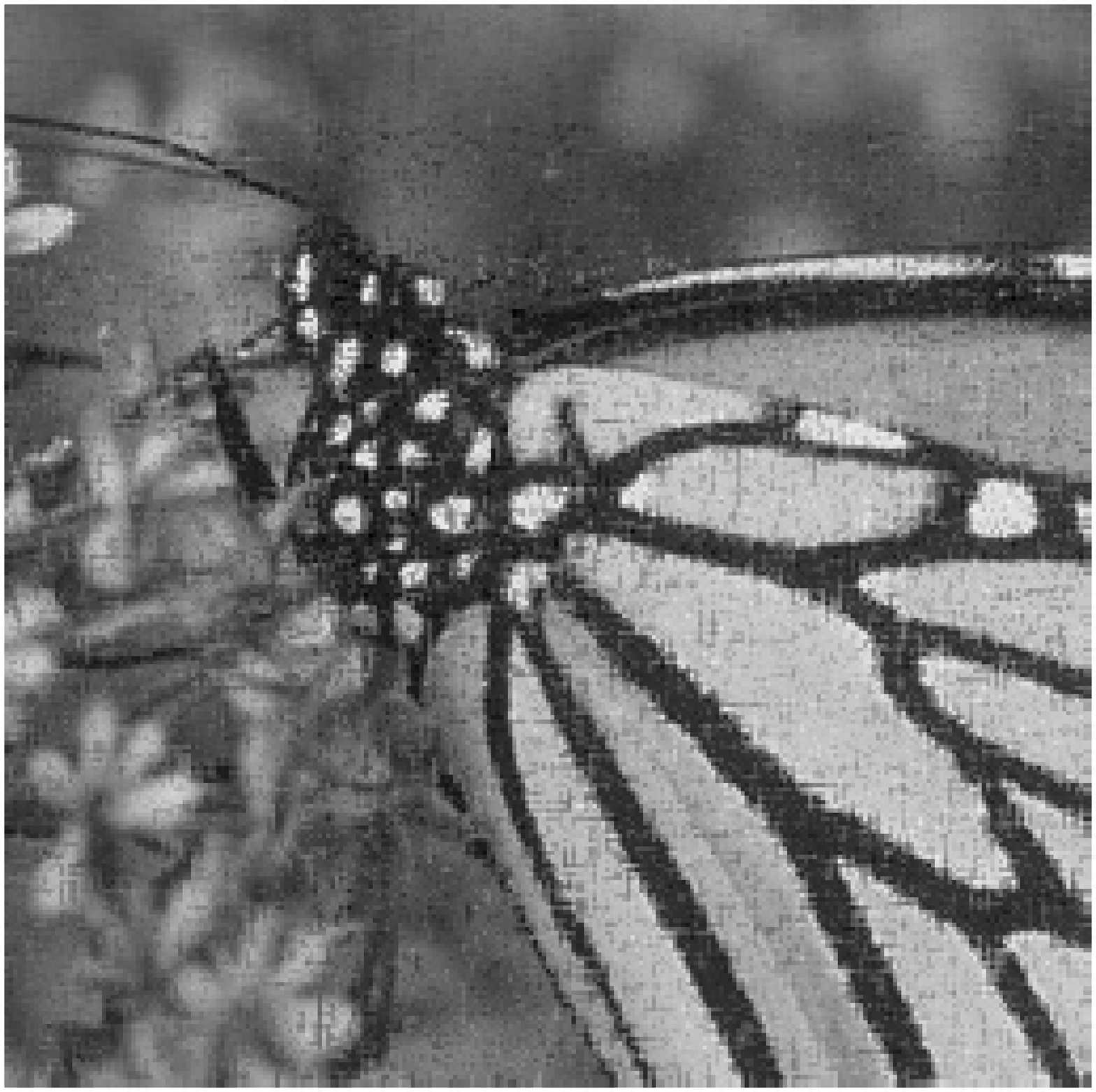}\\
    \caption{
    Top row (from left to right): observed Butterfly image
    with missing pixels ($\rho=0.5$), images recovered by BMC-GP-GAMP-I,
    BMC-GP-GAMP-II, and BMC-GP-GAMP-III, respectively. Bottom row (from left to right):
    images recovered by VSBL, LMaFit, BiGAMP-MC, and ALM-MC, respectively.}
    \label{fig:monarch-05}
\end{figure*}

\begin{figure*}[t]
    \centering
    \includegraphics [width=120pt]{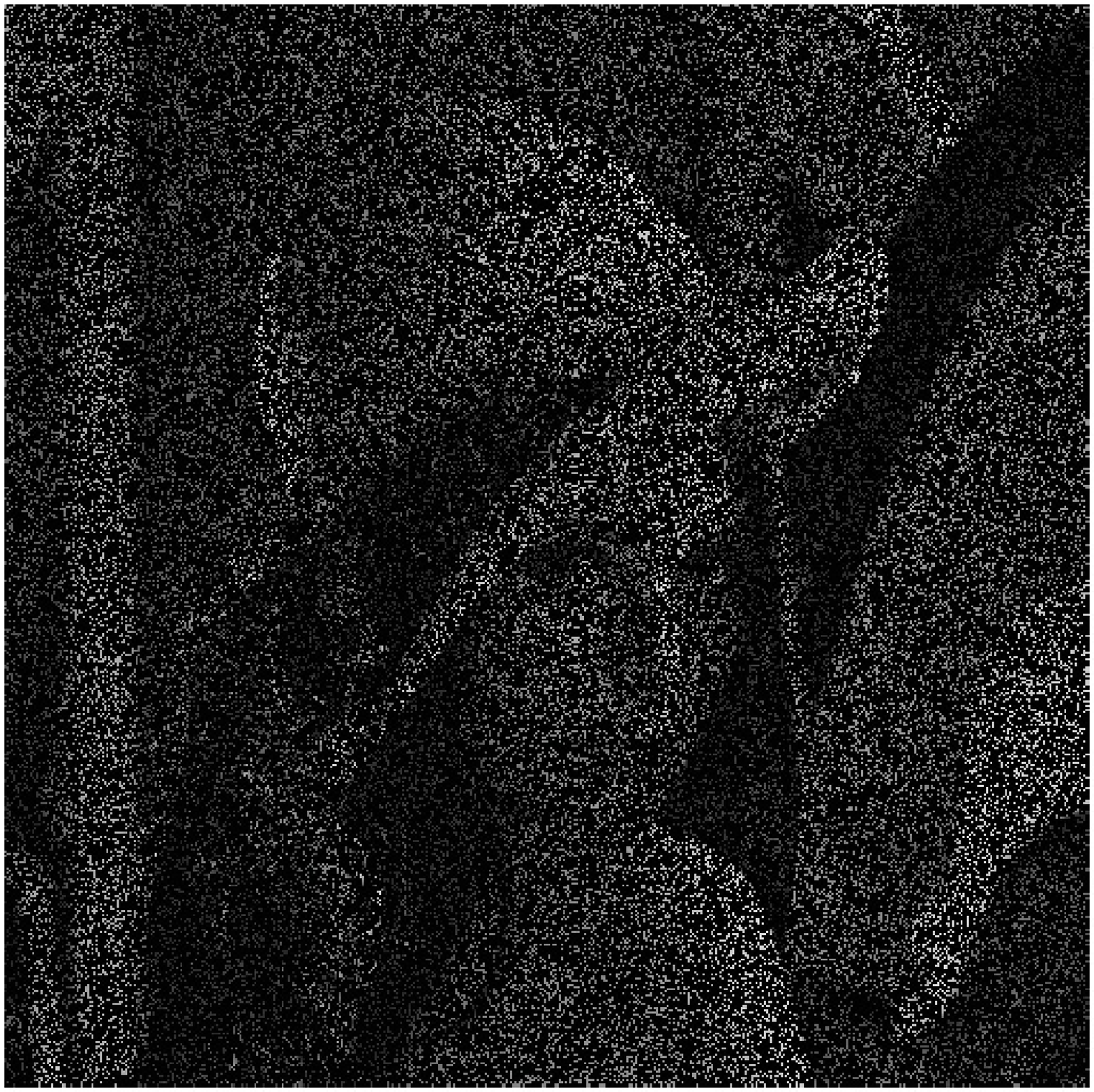}
    \includegraphics [width=120pt]{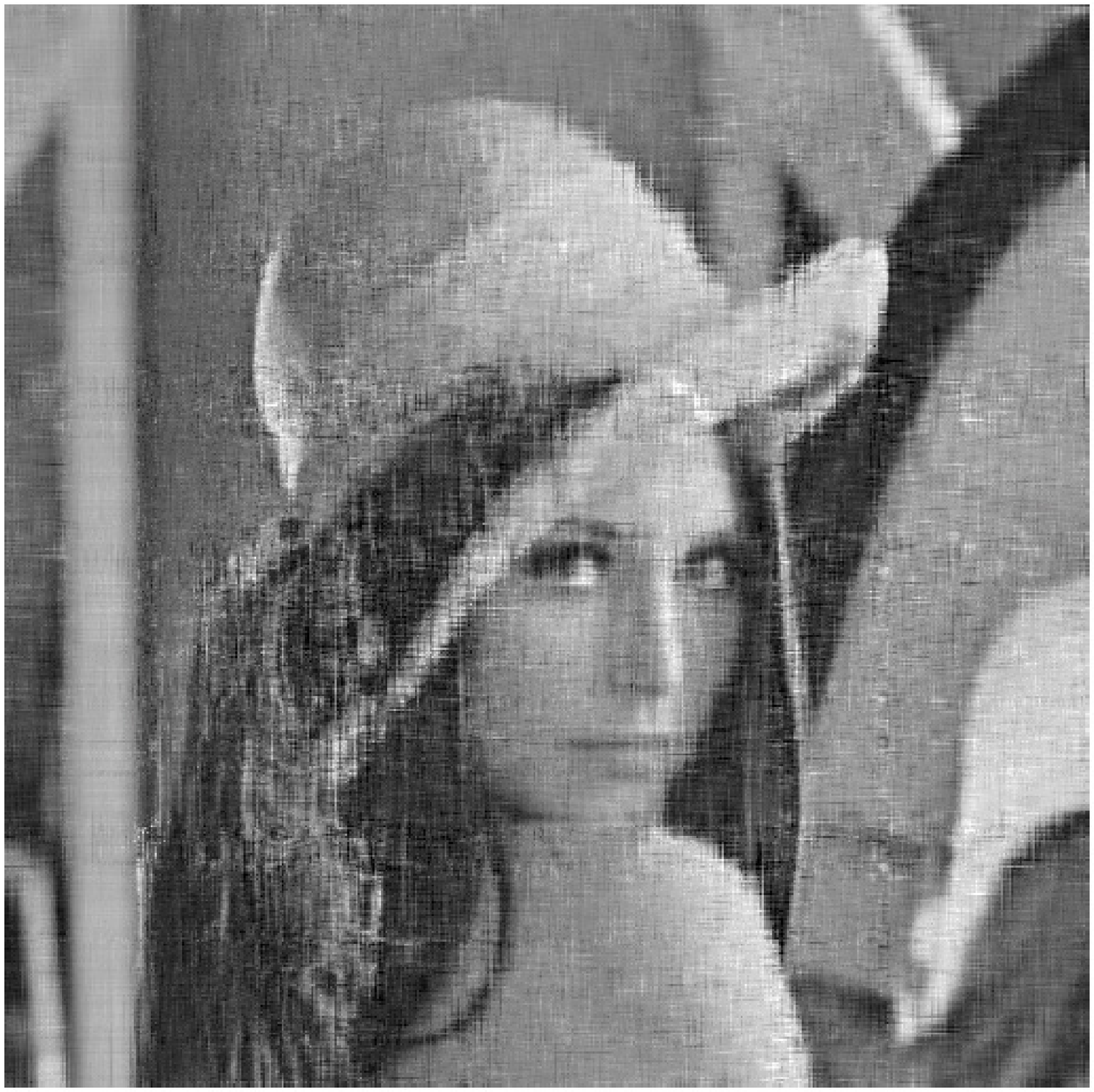}
    \includegraphics [width=120pt]{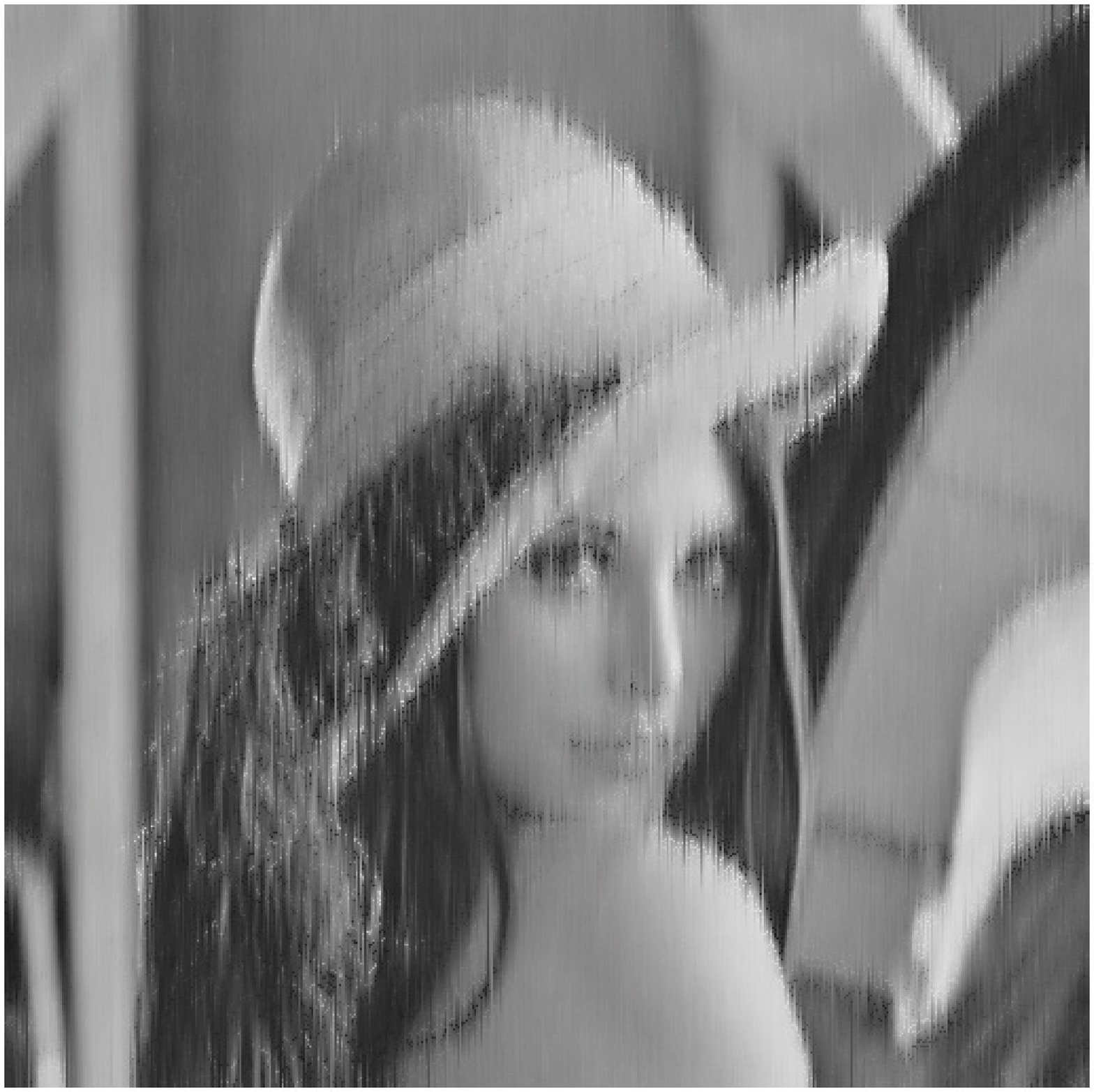}
    \includegraphics [width=120pt]{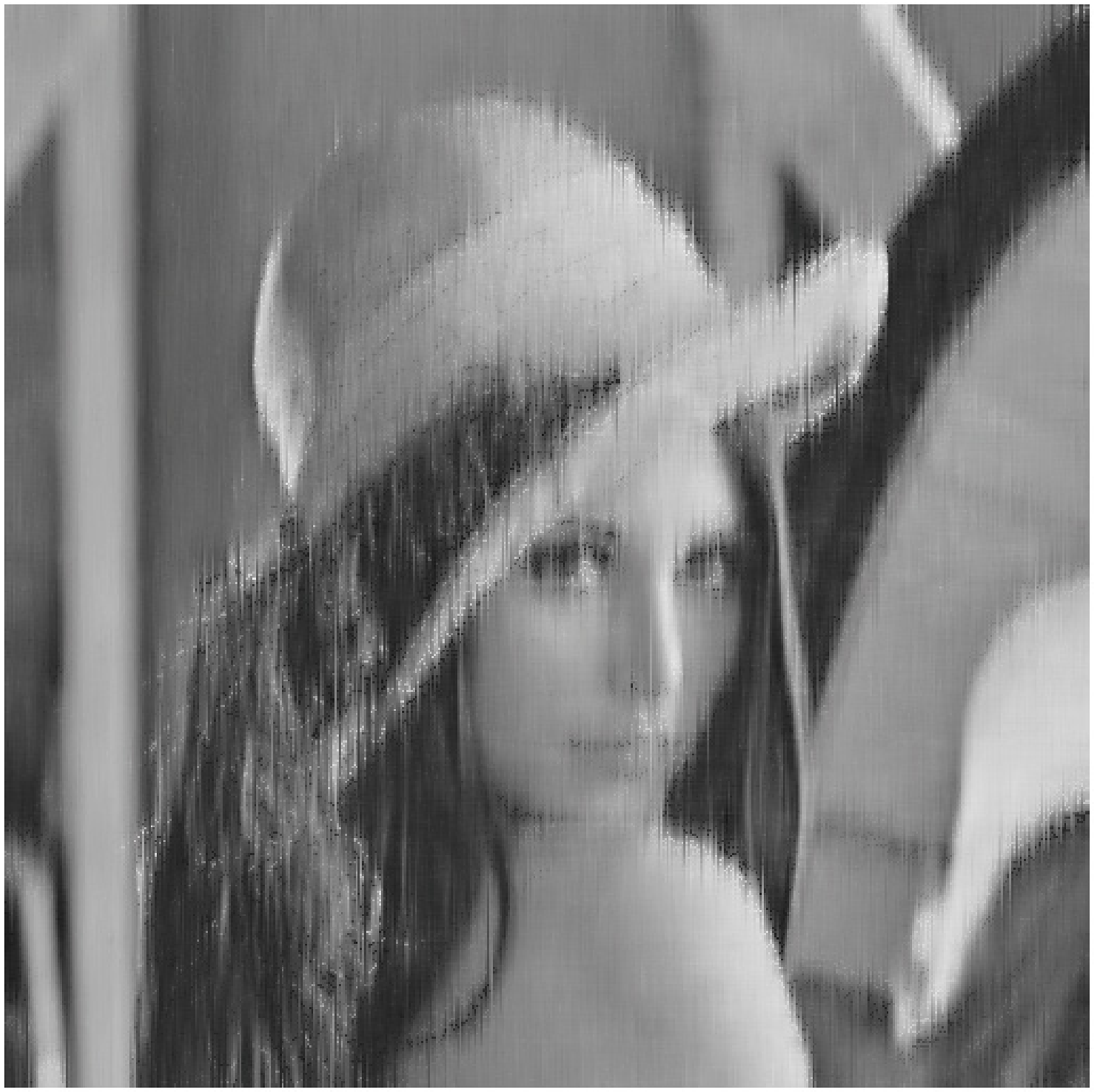}\\
    \includegraphics [width=120pt]{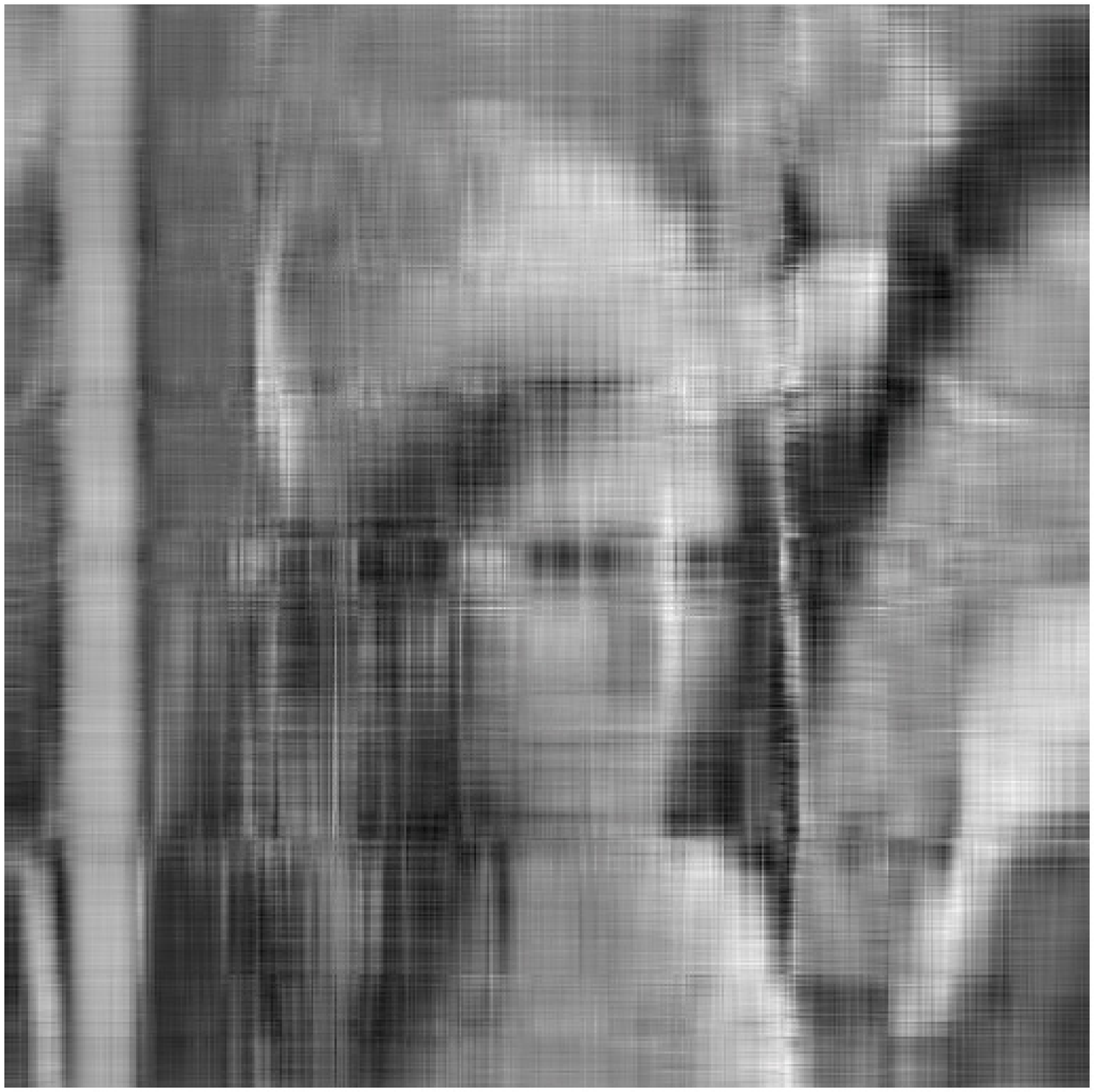}
    \includegraphics [width=120pt]{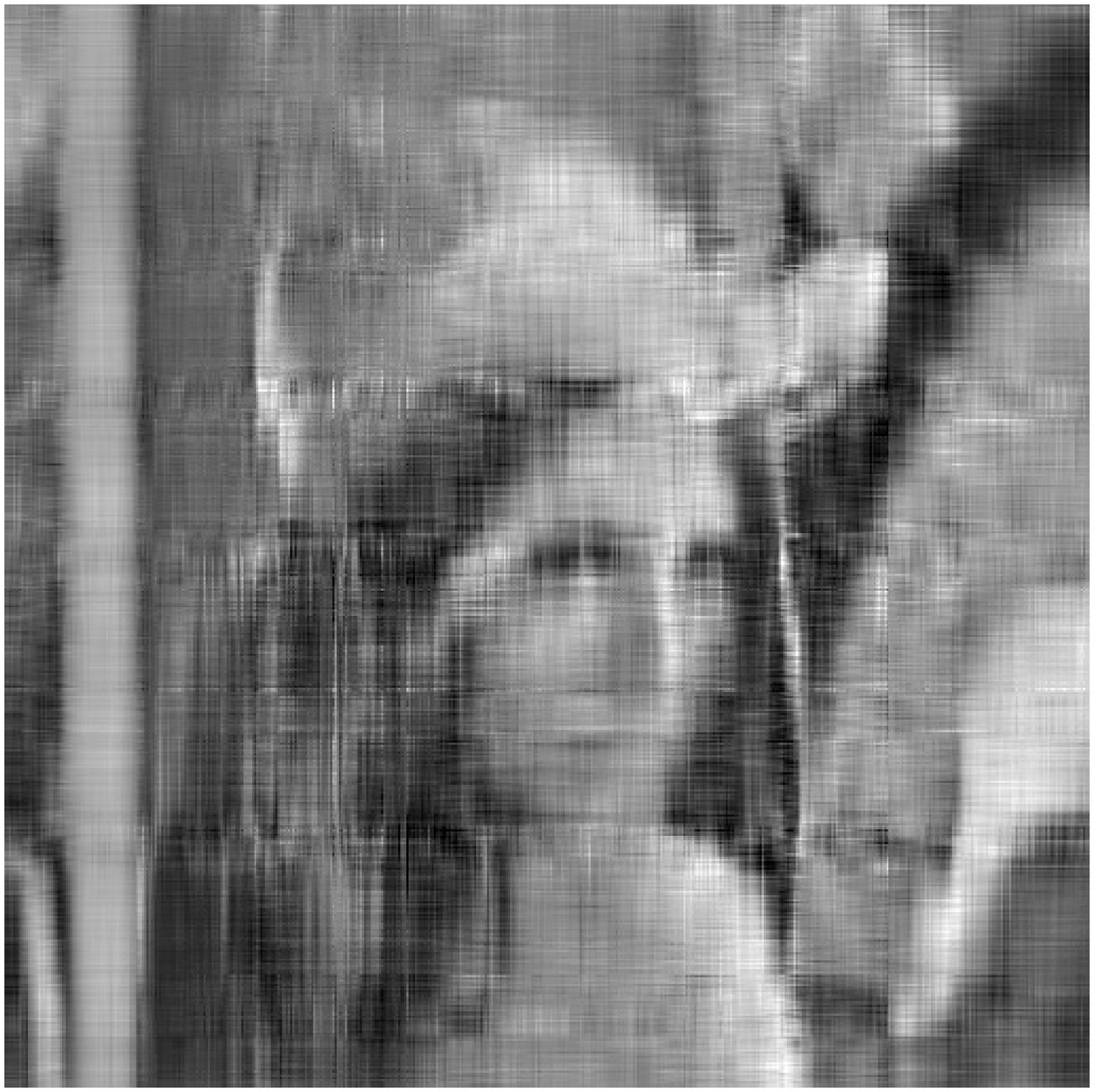}
    \includegraphics [width=120pt]{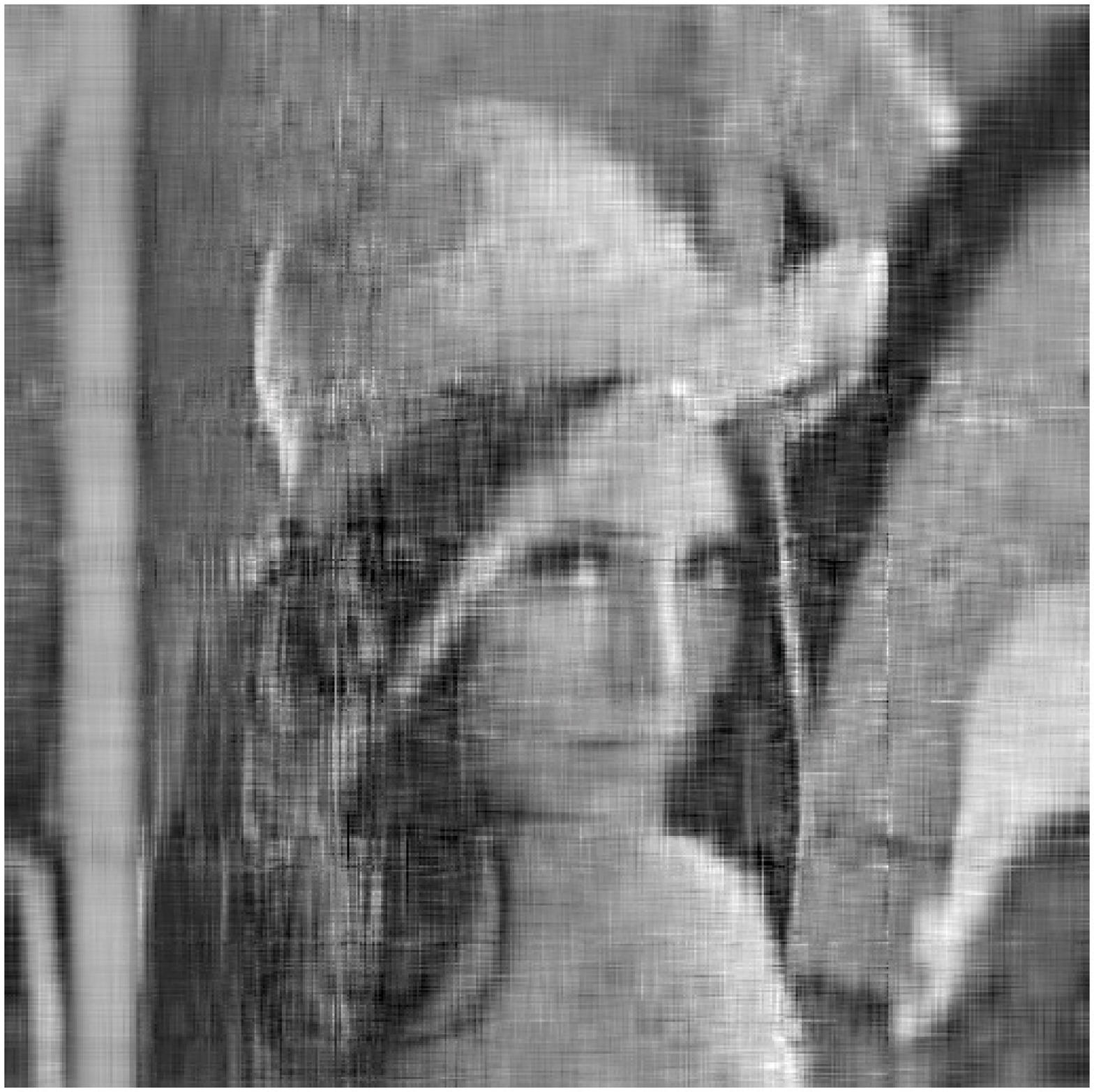}
    \includegraphics [width=120pt]{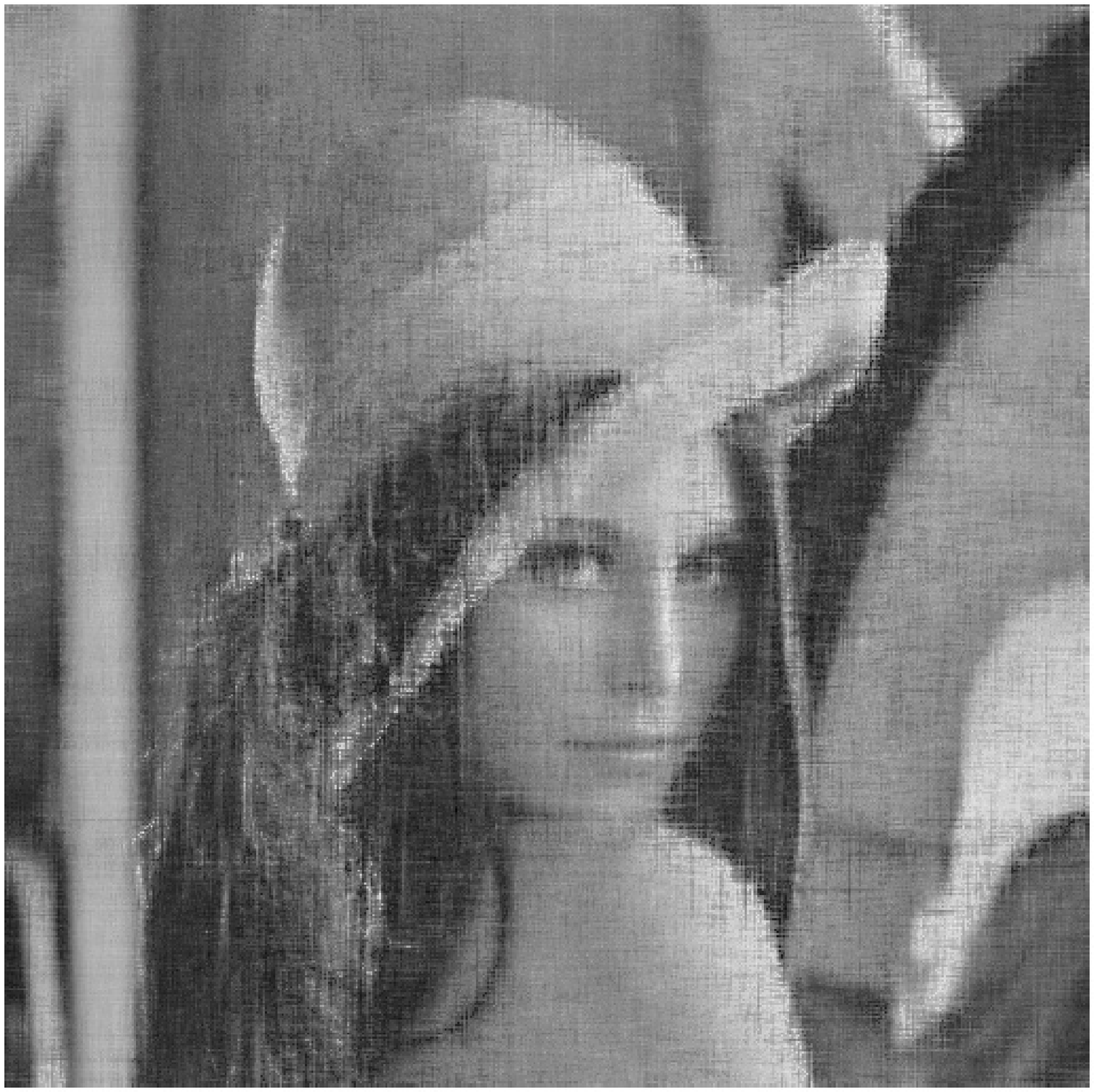}\\
    \caption{
    Top row (from left to right): observed Lena image with missing pixels ($\rho=0.2$),
    images recovered by BMC-GP-GAMP-I,
    BMC-GP-GAMP-II, and BMC-GP-GAMP-III, respectively. Bottom row (from left to right):
    images recovered by VSBL, LMaFit, BiGAMP-MC, and ALM-MC, respectively.}
    \label{fig:lena-02}
\end{figure*}

\begin{figure*}[t]
    \centering
    \includegraphics [width=120pt]{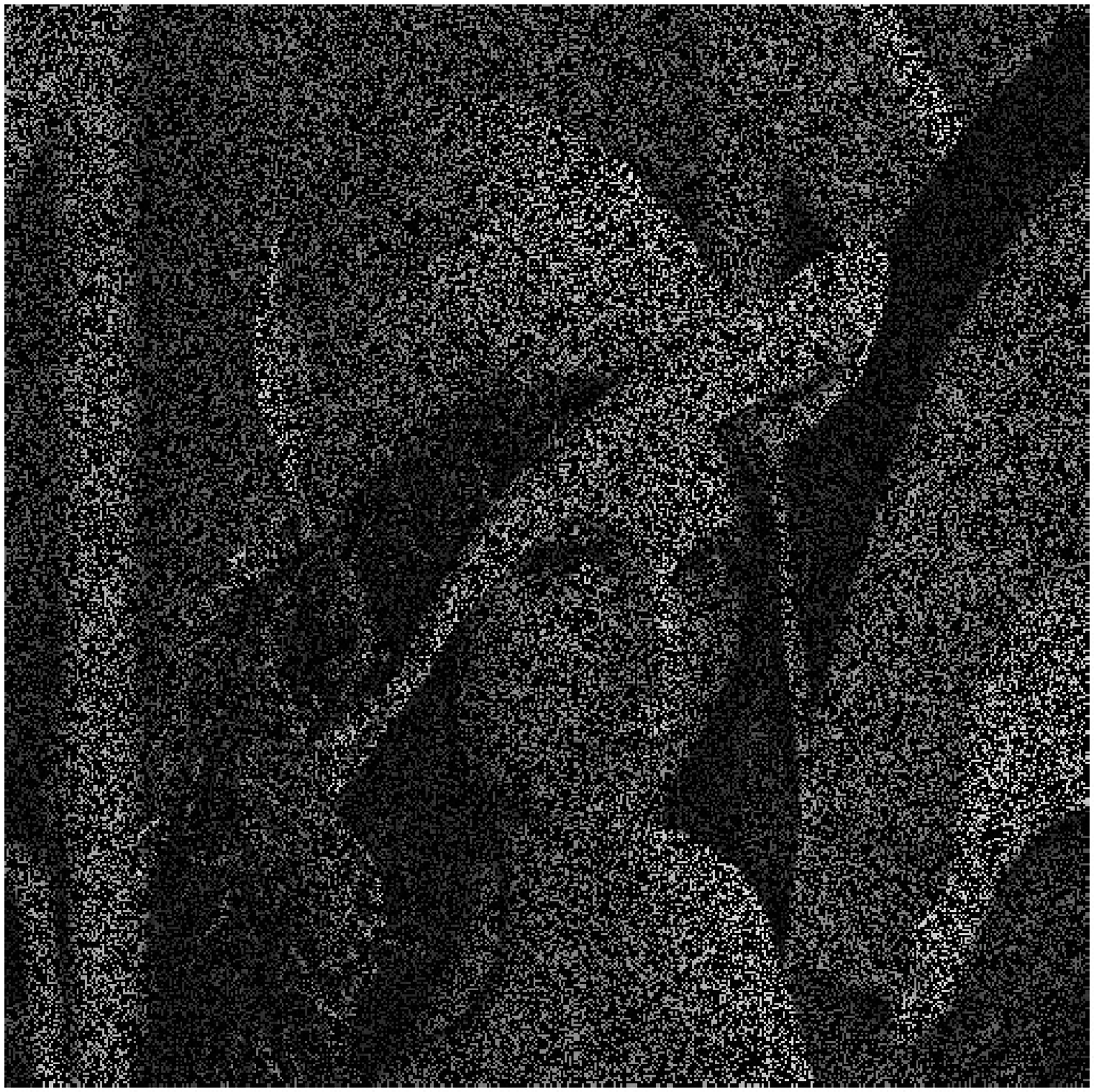}
    \includegraphics [width=120pt]{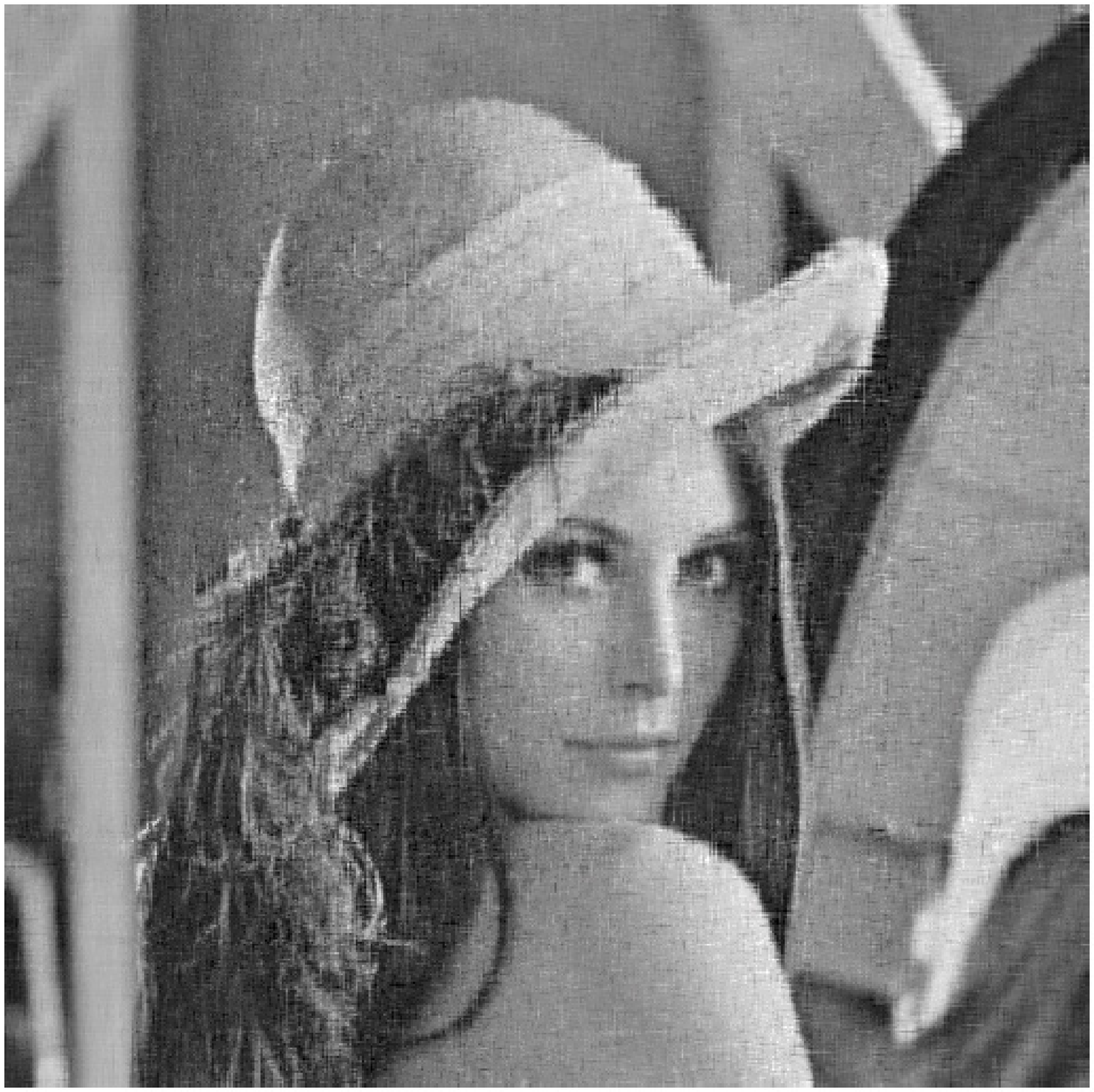}
    \includegraphics [width=120pt]{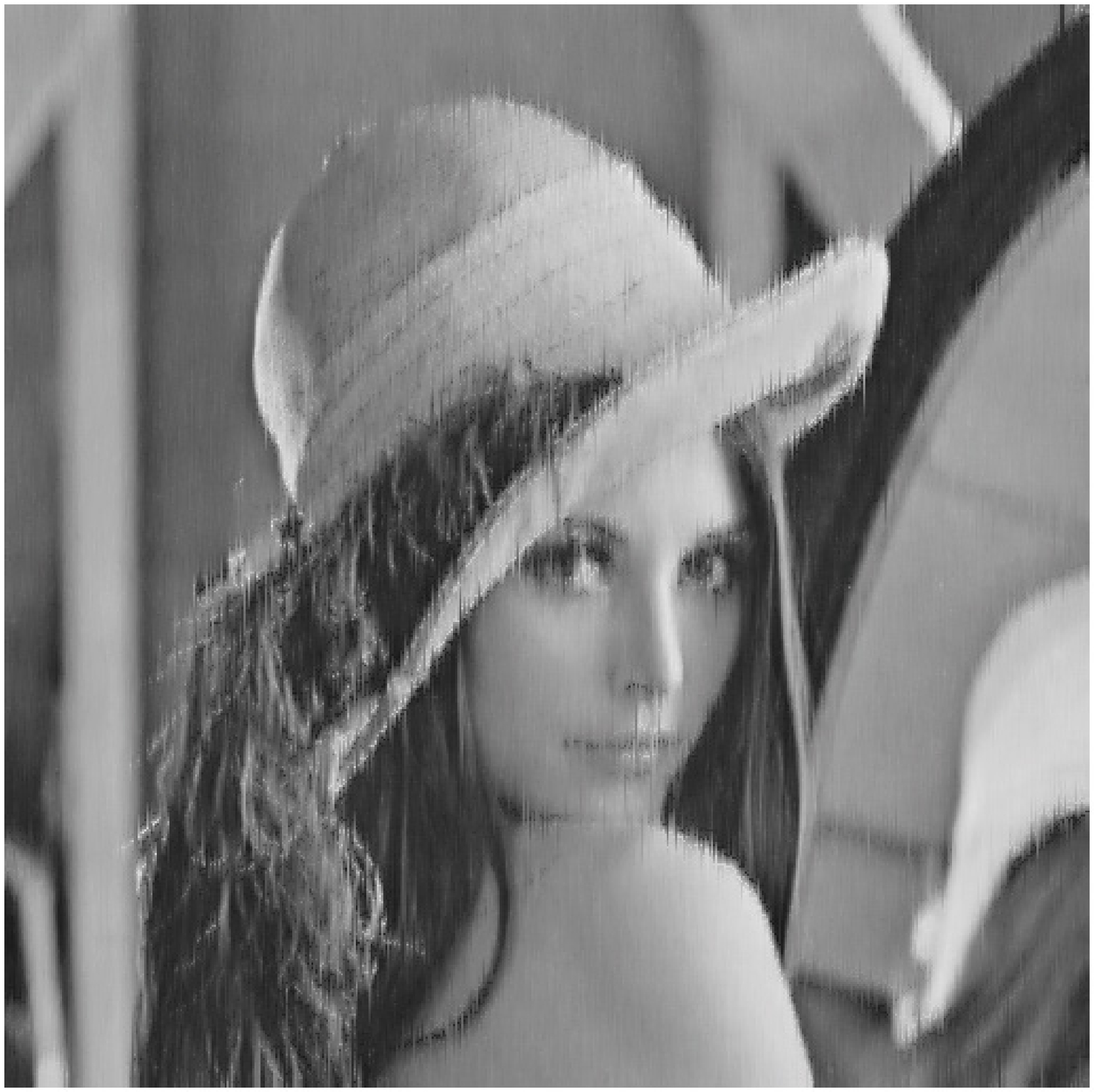}
    \includegraphics [width=120pt]{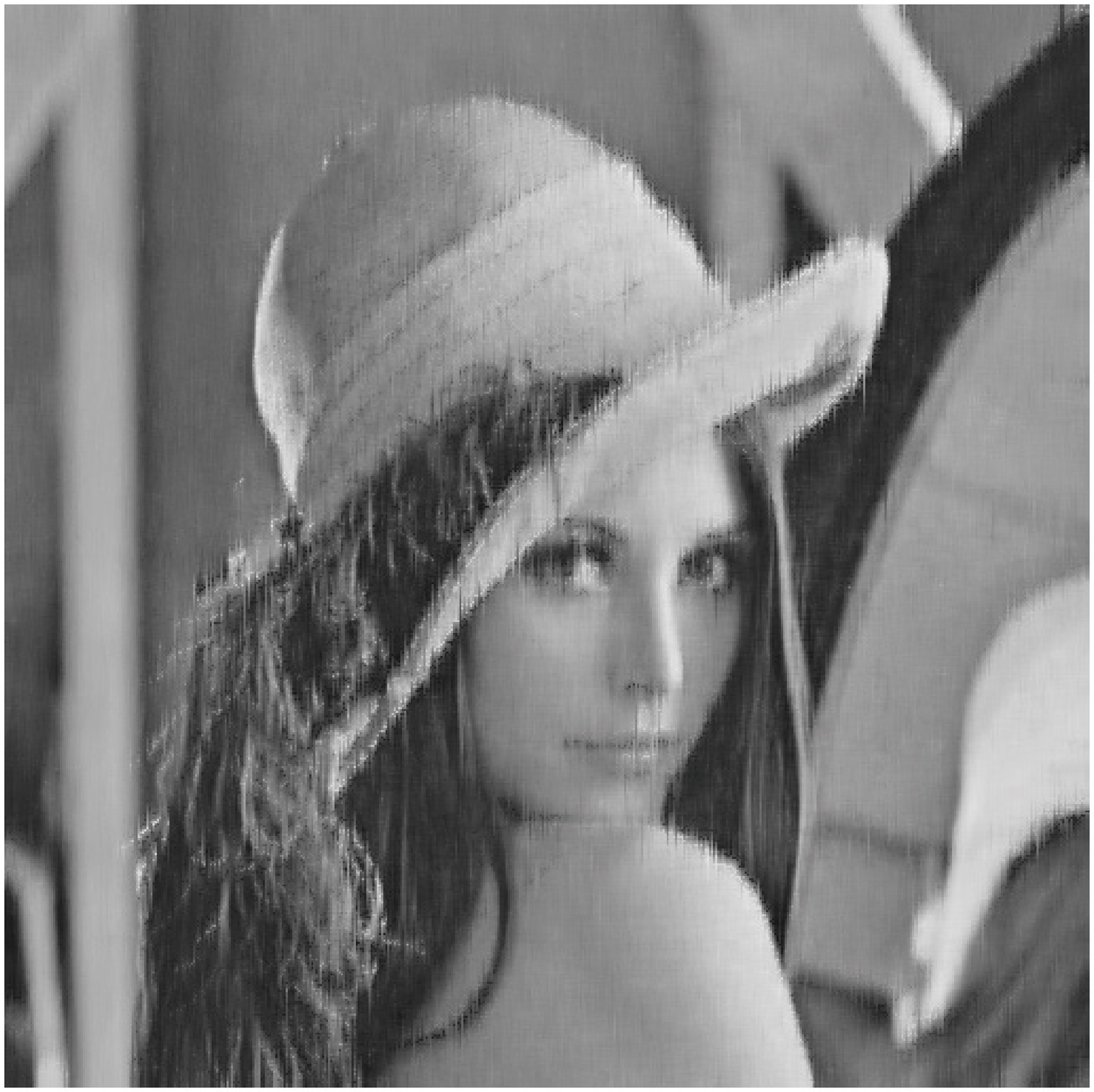}\\
    \includegraphics [width=120pt]{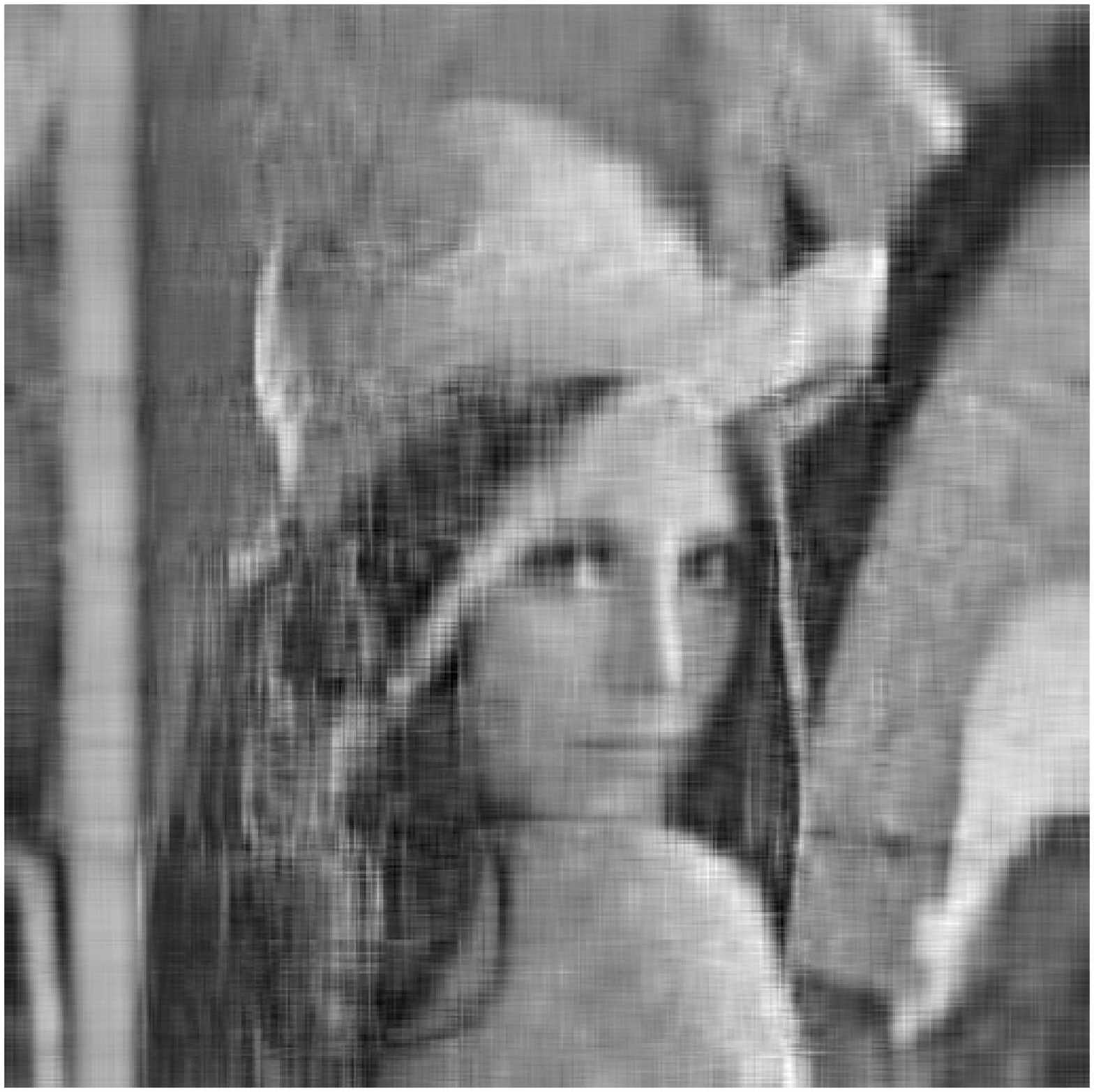}
    \includegraphics [width=120pt]{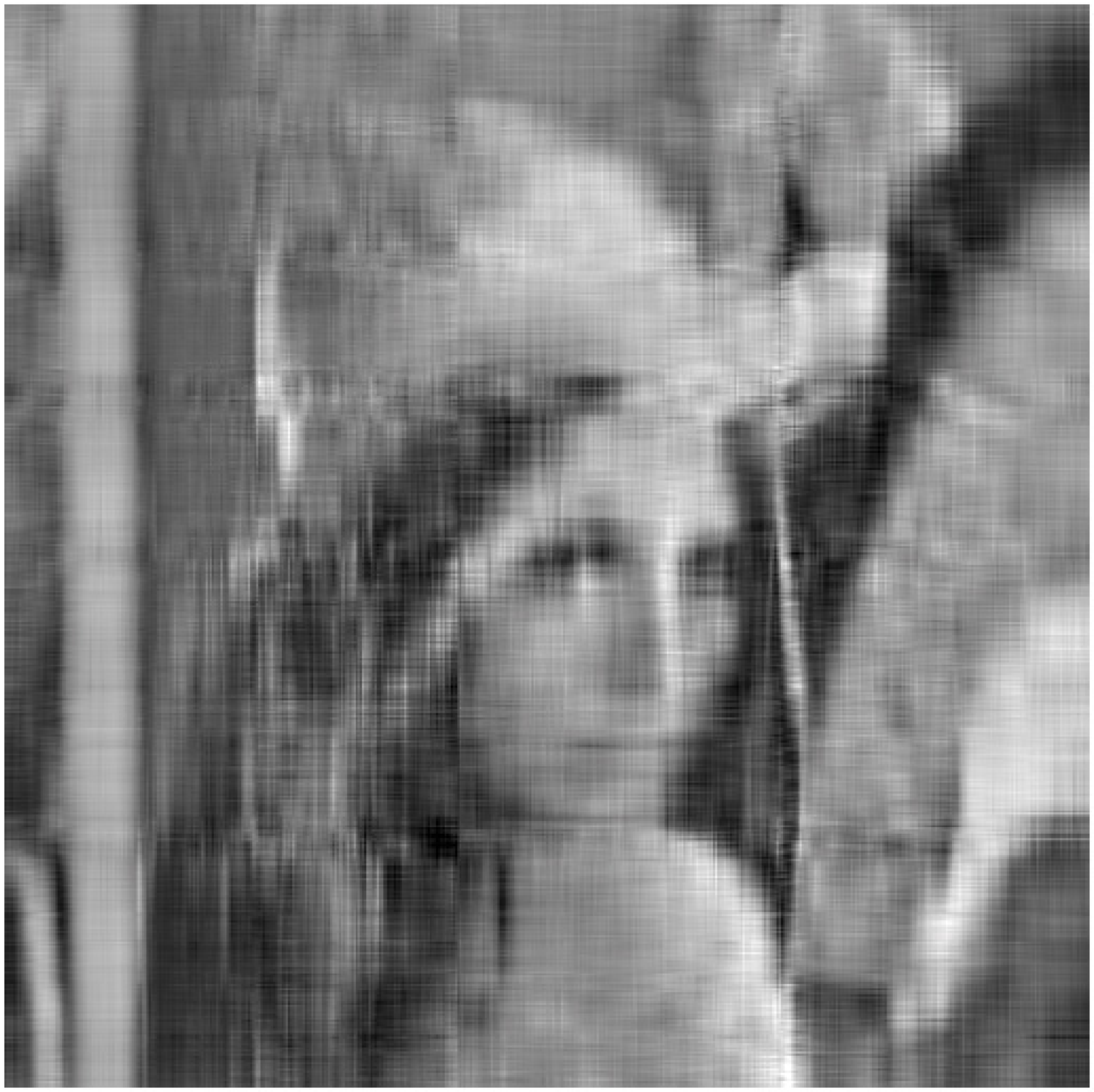}
    \includegraphics [width=120pt]{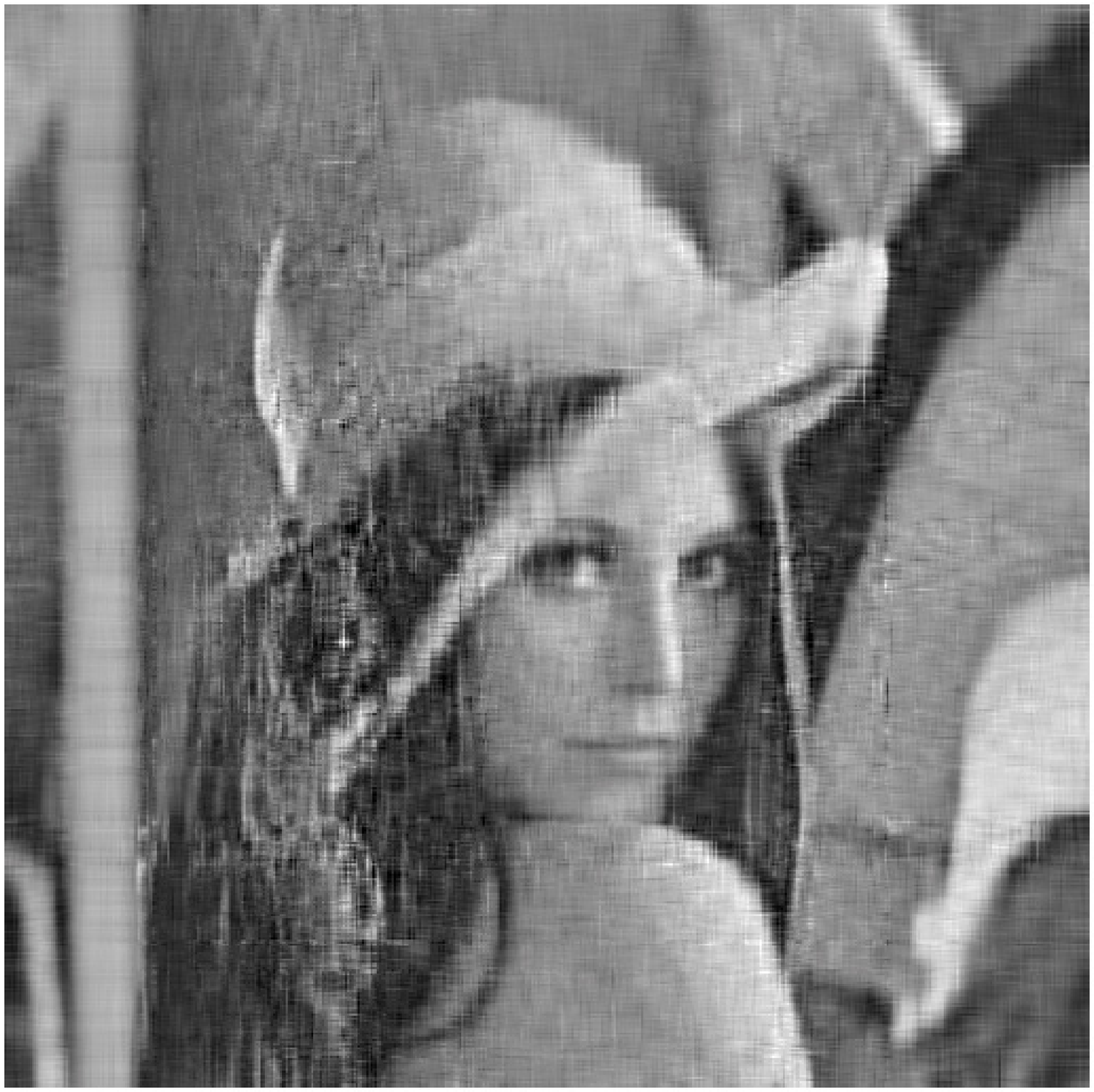}
    \includegraphics [width=120pt]{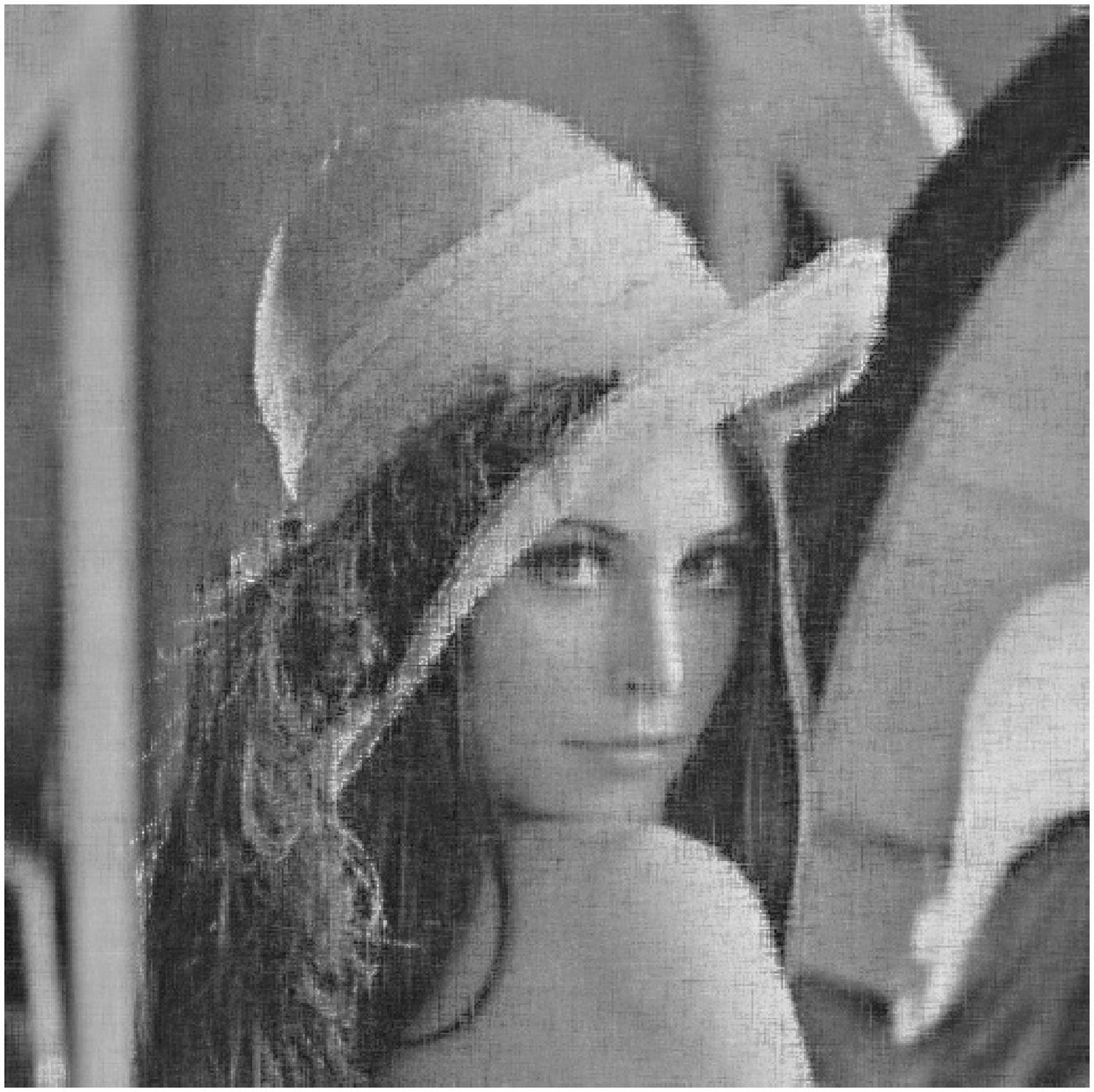}\\
    \caption{
    Top row (from left to right): observed Lena image with missing pixels ($\rho=0.3$), images recovered by BMC-GP-GAMP-I,
    BMC-GP-GAMP-II, and BMC-GP-GAMP-III, respectively. Bottom row (from left to right):
    images recovered by VSBL, LMaFit, BiGAMP-MC, and ALM-MC, respectively.}
    \label{fig:lena-03}
\end{figure*}

%that BMC-GAMP-I presents clearer recovered images than other
%methods. The images recovered by BMC-GAMP-II and BMC-GAMP-III are
%much clearer and smoother than the images recovered other methods,
%which shows that the proper priors can help to improve the
%performance of the proposed method when the additional information
%are available.

%corroborate the results in Tab.\ref{table3}, and we will see

\section{Conclusions}\label{sec:conclusions}
The problem of low-rank matrix completion was studied in this
paper. A hierarchical Gaussian prior model was proposed to promote
the low-rank structure of the underlying matrix, in which columns
of the low-rank matrix are assumed to be mutually independent and
follow a common Gaussian distribution with zero mean and a
precision matrix. The precision matrix is treated as a random
parameter, with a Wishart distribution specified as a hyperprior
over it. Based on this hierarchical prior model, we developed a
variational Bayesian method for matrix completion. To avoid
cumbersome matrix inverse operations, the GAMP technique was used
and embedded in the variational Bayesian inference, which resulted
in an efficient VB-GAMP algorithm. Empirical results on synthetic
and real datasets show that our proposed method offers competitive
performance for matrix completion, and meanwhile achieves a
significant reduction in computational complexity.

%. We show that such a Gaussian-inverse Wishart distribution has
%capacity to promote the low-rankness. By exploiting the
%variational Bayesian methodology, we proposed a Bayesian method
%for matrix completion. To reduce the computational complexity, we
%proposed a new model and showed that the signal in the proposed
%model has same posterior with the original posterior. We then
%approximated the original posterior distribution by applying the
%generalized approximate massage passing to solve the proposed
%model which results in a computational efficient GAMP embedded
%mathod.

\useRomanappendicesfalse
\appendices

\section{Detailed Derivation of (\ref{X-marginal})} \label{appB}
We provide a detailed derivation of (\ref{X-marginal}). We have
\begin{align}
p(\boldsymbol{X}) =&\int
\prod\limits_{i=1}^{N}p(\boldsymbol{x}_i|\boldsymbol{\Sigma})p(\boldsymbol{\Sigma})d\boldsymbol{\Sigma}\nonumber\\
\propto&\int\left({\frac{|\boldsymbol{\Sigma}|}{(2\pi)^{M}}}\right)^{\frac{N}{2}}\exp(-\frac{1}{2}
\text{tr}(\boldsymbol{X}^T\boldsymbol{\Sigma}\boldsymbol{X}))\nonumber\\
&\quad\times|\boldsymbol{\Sigma}|^{\frac{\nu-M-1}{2}}\exp(-\frac{1}{2}
\text{tr}(\boldsymbol{W}^{-1}\boldsymbol{\Sigma}))d\boldsymbol{\Sigma}\nonumber
\\
\propto& 2^{\frac{\nu M}{2}}\pi^{-\frac{MN}{2}}
\Gamma_M\left(\frac{\nu+N}{2}\right)|\boldsymbol{W}^{-1}+\boldsymbol{X}\boldsymbol{X}^T|^{-\frac{\nu+N}{2}}\nonumber\\
&\times
\int\frac{|\boldsymbol{\Sigma}|^{\frac{\nu+N-M-1}{2}}\exp(-\frac{1}{2}
\text{Tr}((\boldsymbol{W}^{-1}+\boldsymbol{X}\boldsymbol{X}^T)\boldsymbol{\Sigma}))}
{2^{\frac{(\nu+N)
M}{2}}|(\boldsymbol{W}^{-1}+\boldsymbol{X}\boldsymbol{X}^T)^{-1}|^{\frac{\nu+N}{2}}
\Gamma_M(\frac{\nu+N}{2})}d\boldsymbol{\Sigma} \label{eqn3}
\end{align}
where
\begin{align}
\Gamma_M\left(x\right)=\pi^{\frac{M(M-1)}{4}}\prod_{j=1}^{M}\Gamma\left(x+\frac{1-j}{2}\right)
\end{align}
Note that the term in the integral of (\ref{eqn3}) is a standard
Wishart distribution with $\nu+N$ degrees of freedom and variance
matrix $(\boldsymbol{W}+\boldsymbol{X}\boldsymbol{X}^T)^{-1}$.
Thus we arrive at
\begin{align}
p(\boldsymbol{X})\propto&
    2^{\frac{\nu M}{2}}\pi^{-\frac{MN}{2}}
    \Gamma_M\left(\frac{\nu+N}{2}\right)|\boldsymbol{W}^{-1}+\boldsymbol{X}\boldsymbol{X}^T|^{-\frac{\nu+N}{2}}\nonumber\\
    \propto&|\boldsymbol{W}^{-1}+\boldsymbol{X}\boldsymbol{X}^T|^{-\frac{\nu+N}{2}}
\end{align}

\section{Proof of Lemma \ref{lemma1}} \label{appA}
Since we have
$|\boldsymbol{X}\boldsymbol{X}^T+\boldsymbol{W}^{-1}|=|\boldsymbol{W}^{-1}|
|\boldsymbol{W}\boldsymbol{X}\boldsymbol{X}^T+\boldsymbol{I}|$, we
only need to prove
\begin{align}
|\boldsymbol{W}\boldsymbol{X}\boldsymbol{X}^T+
    \boldsymbol{I}|=|\boldsymbol{X}^T\boldsymbol{W}\boldsymbol{X}+\boldsymbol{I}|
\end{align}

Recalling the determinant of block matrices, we have
\begin{align}
|\boldsymbol{W}\boldsymbol{X}\boldsymbol{X}^T+\boldsymbol{I}|=
\begin{vmatrix}
\boldsymbol{I}&\boldsymbol{X}^T\\
\boldsymbol{0}&\boldsymbol{W}\boldsymbol{X}\boldsymbol{X}^T+\boldsymbol{I}
\end{vmatrix}
\end{align}
and
\begin{align}
|\boldsymbol{I}|=
\begin{vmatrix}
\boldsymbol{I}&\boldsymbol{0}\\
-\boldsymbol{W}\boldsymbol{X}&\boldsymbol{I}
\end{vmatrix}
=
\begin{vmatrix}
\boldsymbol{I}&-\boldsymbol{X}^T\\
\boldsymbol{0}&\boldsymbol{I}
\end{vmatrix}
\end{align}
which yields
\begin{align}
&|\boldsymbol{W}\boldsymbol{X}\boldsymbol{X}^T+\boldsymbol{I}|\nonumber\\
=&
\begin{vmatrix}
\boldsymbol{I}&\boldsymbol{X}^T\\
\boldsymbol{0}&\boldsymbol{W}\boldsymbol{X}\boldsymbol{X}^T+\boldsymbol{I}
\end{vmatrix}\nonumber\\
=&
\begin{vmatrix}
\begin{bmatrix}
\boldsymbol{I}&-\boldsymbol{X}^T\\
\boldsymbol{0}&\boldsymbol{I}
\end{bmatrix}
\begin{bmatrix}
\boldsymbol{I}&\boldsymbol{0}\\
-\boldsymbol{W}\boldsymbol{X}&\boldsymbol{I}
\end{bmatrix}
\begin{bmatrix}
\boldsymbol{I}&\boldsymbol{X}^T\\
\boldsymbol{0}&\boldsymbol{W}\boldsymbol{X}\boldsymbol{X}^T+\boldsymbol{I}
\end{bmatrix}
\end{vmatrix}
\nonumber\\
=&
\begin{vmatrix}
\boldsymbol{X}^T\boldsymbol{W}\boldsymbol{X}+\boldsymbol{I}&\boldsymbol{0}\\
-\boldsymbol{W}\boldsymbol{X}&\boldsymbol{I}
\end{vmatrix}
\nonumber\\
=&|\boldsymbol{X}^T\boldsymbol{W}\boldsymbol{X}+\boldsymbol{I}|
\end{align}
Thus we have
\begin{align}
\log|\boldsymbol{X}\boldsymbol{X}^T+\boldsymbol{W}^{-1}|=\log|\boldsymbol{W}^{-1}|+
\log|\boldsymbol{I}+\boldsymbol{X}^T\boldsymbol{W}\boldsymbol{X}|
\end{align}
This completes the proof.

\bibliography{newbib}
\bibliographystyle{IEEEtran}

\end{document}